\documentclass{article} % For LaTeX2e
\usepackage{iclr2026_conference,times}

\usepackage{amsmath,amsfonts,bm}

\def\eqref#1{equation~\ref{#1}}
\def\1{\bm{1}}

\DeclareMathAlphabet{\mathsfit}{\encodingdefault}{\sfdefault}{m}{sl}
\SetMathAlphabet{\mathsfit}{bold}{\encodingdefault}{\sfdefault}{bx}{n}

\newcommand{\softmax}{\mathrm{softmax}}

\usepackage{hyperref}
\usepackage{url}
\usepackage{xspace}
\usepackage{enumitem}
\usepackage{graphicx}
\usepackage[ruled,vlined]{algorithm2e}
\usepackage{amsmath,amssymb,mathtools}
\usepackage{caption}
\usepackage{booktabs}
\usepackage{multirow}
\usepackage[table]{xcolor}

\newcommand{\modelname}{EDT-Former\xspace}
\newcommand{\noveltyA}{Entropy-Guided Patching\xspace}
\newcommand{\noveltyB}{Dynamic Query Transformer\xspace}
\newcommand{\best}{\cellcolor[HTML]{FFD3EC}}
\newcommand{\second}{\cellcolor[HTML]{FFEFF6}}
\usepackage{wrapfig}
\usepackage{svg}
\newcommand{\fone}[1]{\textcolor{gray}{\tiny #1}}
\usepackage{titlesec}
\usepackage{amsmath,amssymb,amsfonts,amsthm,mathtools,bm}
\theoremstyle{plain}
\newtheorem{lemma}{Lemma}
\newtheorem{proposition}{Proposition}
\newtheorem{corollary}{Corollary}
\usepackage{float}
\newcommand{\rebuttal}[1]{\textcolor{black}{#1}}
\newcommand{\tablehighlightcolor}{black}

\title{Entropy-Guided Dynamic Tokens for Graph--LLM Alignment in Molecular Understanding}
\author{Zihao Jing$^1$, Qiuhao Zeng$^1$, Ruiyi Fang$^1$, Yan Sun$^1$, Boyu Wang$^1$, Pingzhao Hu$^{1,2}$ \thanks{Corresponding author. Departments of Computer Science and Biochemistry, Western University, 1400 Western Road, London, Ontario N6G 2V4, Canada. E-mail: phu49@uwo.ca (P.H.)}\\
$^{1}$Department of Computer Science, Western University, London, ON, Canada \\
$^{2}$Department of Biochemistry, Western University, London, ON, Canada \\
\texttt{zjing29@uwo.ca, phu49@uwo.ca}
}
\iclrfinalcopy % Uncomment for camera-ready version, but NOT for submission.
\begin{document}

\maketitle
\vskip -18 pt

\begin{abstract}
Molecular understanding is central to advancing areas such as scientific discovery, yet Large Language Models (LLMs) struggle to understand molecular graphs effectively. Existing graph--LLM bridges often adapt the Q-Former-style connector with fixed-length static tokens, which is originally designed for vision tasks. These designs overlook stereochemistry and substructural context and typically require costly LLM-backbone fine-tuning, limiting efficiency and generalization.
We introduce \textbf{\modelname}, an \textbf{E}ntropy-guided \textbf{D}ynamic \textbf{T}oken Trans\textbf{former} that generates tokens aligned with informative molecular patches, thereby preserving both local and global structural features for molecular graph understanding.
Beyond prior approaches, \modelname enables alignment between frozen graph encoders and LLMs \rebuttal{without tuning the LLM backbone (excluding the embedding layer)}, resulting in computationally efficient finetuning, and achieves state-of-the-art results on MoleculeQA, \rebuttal{Molecule-oriented Mol-Instructions}, and property prediction benchmarks (TDC, MoleculeNet), underscoring its effectiveness for scalable and generalizable multimodal molecular understanding.
\end{abstract}

% \begin{abstract}
% Bridging molecular graphs to large language models (LLMs) is limited by connector bottlenecks. Prior Q-Former style bridges use fixed-length query tokens, compressing length heterogeneous molecules into static anchors and discarding stereochemistry and substructure coverage. They also commonly co-train the connector and the LLM backbone, inflating trainable parameters and weakening generalization.
% We introduce \textbf{\modelname}, an \textbf{E}ntropy guided \textbf{D}ynamic \textbf{T}oken Trans\textbf{former} that produces variable-length query tokens by segmenting a molecular graph into informative patches. Dynamic tokens preserve local chemistry while maintaining global context through cross-patch aggregation, yielding a structure faithful interface for the LLM.
% \modelname supports connector-only training with frozen graph encoders and a frozen LLM, substantially reducing finetuning cost. Experiments on MoleculeQA, molecule-oriented Mol-Instructions benchmark, and property prediction suites (TDC, MoleculeNet) demonstrate its effectiveness, indicating improved scalability and transfer for multimodal molecular understanding.
% \end{abstract}
\vskip -18 pt
{
% \captionsetup{labelfont={color=blue}, textfont={color=blue}}

\begin{figure}[bh]

    \centering
    \vskip 10pt
    \includegraphics[width=1.0\linewidth]{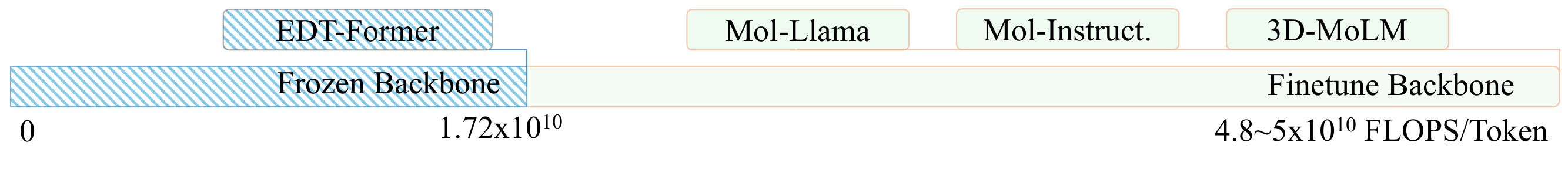}
    \vskip -5pt
    \includegraphics[width=1.0\linewidth]{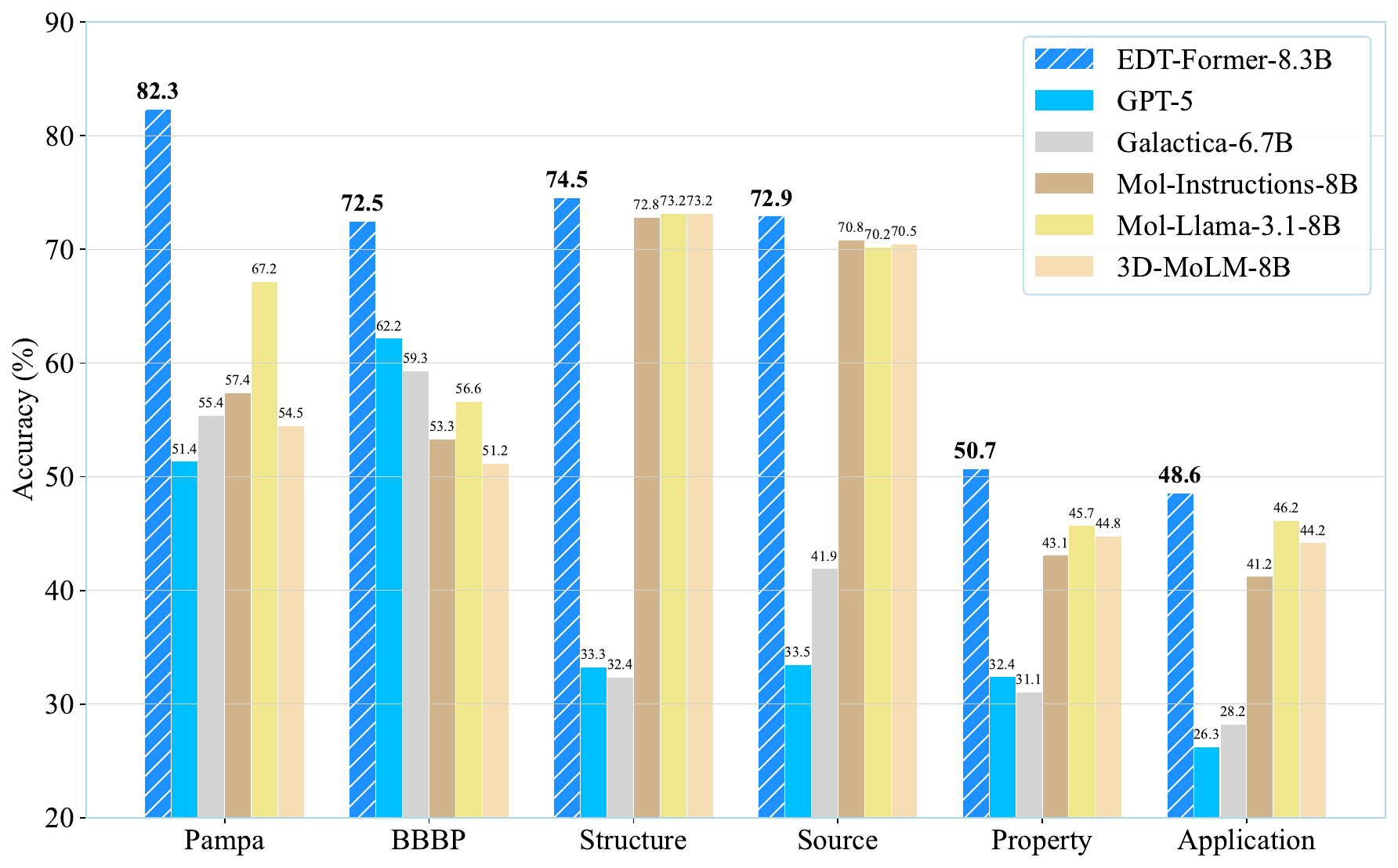}
    \caption{
        LLM joint fine-tuning efficiency and benchmark performance preview of \modelname.
    }
    \label{fig: coverpage}
\end{figure}
}
\vskip -10 pt
\newpage

\section{Introduction}
\paragraph{Background.}
Large language models (LLMs) such as Llama~\citep{touvron2023llama,dubey2024llama3} and GPT~\citep{openai2023gpt4,openai2025gpt5} series demonstrate powerful multimodal reasoning across domains. This rapid progress highlights the potential adaptation to specialized scientific domains, and NatureLM~\citep{xia2025naturelm} exemplifies this direction in scientific knowledge.
Within molecular science, recent foundation models have extended large-scale pretraining to molecular graphs and geometry structures. Uni-Mol-v2~\citep{ji2024unimol2} leverages extensive conformer datasets to jointly model atomic, graph, and geometric representations, while UniMolT~\citep{zhang2024unimolt} adapts instruction-tuned LLMs to molecular reasoning and understanding tasks.

To bridge molecular graphs with natural language, multimodal connectors have been introduced. Most approaches adopt the Q-Former mechanism~\citep{li2023blip2}, where a fixed number of modality-anchor tokens are learned to query structural encoders. Representative examples include 3D-MolT5~\citep{pei20243dmolt5}, Mol-LLM~\citep{Lee2025MolLLMGM}, and Mol-LLaMA~\citep{kim2025}, which demonstrate the feasibility of aligning molecular structures (graph and geometry) with language models and enabling cross-modal molecular understanding.

\paragraph{Challenges.}
\begin{figure}[h] 
\vskip -10pt
\centering 
\includegraphics[width=1.0\linewidth]{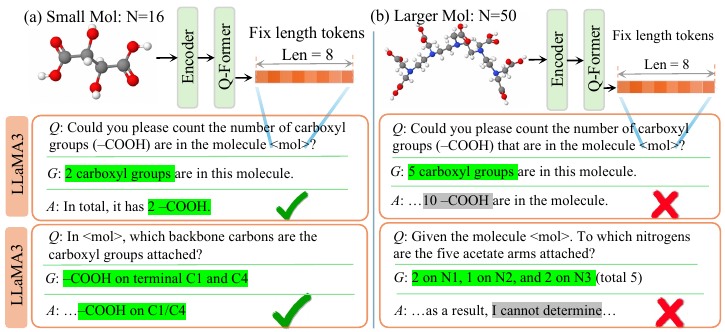} 
\vskip -5pt
% \caption{Illustration of Q-Former style fixed-length query bridges. (a) For a small molecule (Atom N=16), eight tokens can capture key groups and yield correct answers. (b) For a larger molecule (Atom N=50), the same fixed token budget fails to represent, leading to incorrect or incomplete responses.} 
\caption{Illustration of motivation--Loss of structure. Comparison of molecules of different sizes (atom counts $N=16$ and $N=50$) encoded by the same fixed query length Q-Former bridge (8 query tokens) to Llama-3.1-8B backbone, with example prompts and generated responses.}
\label{fig: motivation-structure} 
\vskip -5pt
\end{figure}

\textbf{(1) Loss of structure.} Current graph--LLM bridges typically fuse modalities by introducing a fixed number of learnable query tokens, which interact with molecular encoder outputs and are then fed into the language model.~\citep{Lee2025MolLLMGM, kim2025}. However, compressing length-heterogeneous molecules into the fixed set collapses critical features such as stereochemistry and functional groups. This loss carries over to reasoning, where even chain-of-thought prompting provides little gain and can reduce accuracy on property prediction~\citep{kim2025}. As shown in Fig.~\ref{fig: motivation-structure}, fixed-length fusion captures key groups in small molecules (N=16 atoms) and supports structure-aware tasks; however, in larger molecules (N=50 atoms), it results in incomplete substructure coverage, information loss, and consequently brittle, chemically unfaithful predictions.

\begin{wraptable}{r}{0.5\textwidth}
\centering
\setlength{\tabcolsep}{1pt}  % 默认 6pt，调小会让列更紧凑
\small
\vskip -13pt
\setlength{\aboverulesep}{1pt}
\setlength{\belowrulesep}{1pt}
\caption{Computing costs comparison between frozen/unfrozen backbone settings of Mol-Llama.}
\vskip -5pt
\label{tab: motivation-efficiency}
\resizebox{\linewidth}{!}{
\begin{tabular}{lcccc}
\toprule
Connector & LLM & Trainable & FLOPs/Token & Time/Step \\
\midrule
Finetune & Finetune & 8.1B & 4.9e10 & 0.93\\
Finetune & Frozen & 84M  & 1.7e10 & 0.23 \\
% \modelname & Frozen   & 84M   & 1.7e10\\
\bottomrule
\end{tabular}
}
\vskip -8pt
\end{wraptable}

\textbf{(2) Heavy fine-tuning.} Most prior systems train the bridge jointly with the LLM, requiring gradient updates to the backbone itself. Such updates hinder generalization: models overfit to narrow datasets, lose robustness across modalities and structural variations, and fail to transfer alignment when scaled to larger frozen backbones. Clearly, they are also computationally inefficient: as shown in Tab.~\ref{tab: motivation-efficiency}, jointly tuning the LLM and connector requires $\times96$ more trainable parameters than tuning the connector alone (details are in App.~\ref{app: evaluation settings}). Consequently, a connector-only bridge with frozen encoders and a frozen LLM is urgently needed to achieve scalable, low-cost deployment.

\paragraph{Our approach.}
We propose \modelname, an Entropy-guided Dynamic Token Transformer that aligns chemical graphs with frozen LLMs while preserving structural fidelity. \modelname introduces an \textbf{\noveltyA} strategy that segments molecules into informative sub-groups, generating dynamic query tokens that retain stereochemistry. It further incorporates the \textbf{\noveltyB} module that integrates these dynamic tokens with learned modality anchors to form a stable cross-modal interface before mapping into the LLM embedding space. Together, these designs enable frozen-backbone alignment, yielding efficient training, robust generalization, and chemically faithful understanding (Fig.~\ref{fig: coverpage}).
Our contributions are summarized as follows:
\begin{itemize}[leftmargin=*, itemsep=0.3ex, topsep=0.3ex, parsep=0pt, partopsep=0pt]
\item This work presents \textbf{\modelname}, the first connector-only method that aligns chemical graphs with frozen LLMs via dynamic, substructure-aware query tokens.
\item We present \textbf{\noveltyA} and \textbf{\noveltyB}, which together enable efficient cross-modal alignment without updating backbone parameters.
\item \modelname achieves state-of-the-art results on molecular understanding and property prediction benchmarks, demonstrating both scalability and robust generalization.  
\end{itemize}

% \begin{itemize}[leftmargin=*]
%     \item We propose \modelname, an entropy-guided dynamic query bridge for multimodal alignment between molecular graphs and LLMs, enhancing structural understanding and strengthening reasoning fidelity.
%     \item It is the first bridge to achieve frozen-backbone alignment, leaving molecular encoders and LLMs unchanged, which yields stable optimization and significant cost efficiency.
%     \item \modelname delivers state-of-the-art performance on molecular reasoning and property prediction benchmarks, combining efficiency with scalability.  
% \end{itemize}

\section{Related Work}
\label{sec: related work}
\paragraph{Multimodal Fusion for Molecular Generative Modeling.}
Beyond SMILES-based models (e.g., KV-PLM~\citep{zeng2022kvplm} and Mol-Instructions~\citep{fang2023}), recent work has attempted to fuse SMILES, topological graphs, and 3D conformers with large language models. GIMLET~\citep{zhao2023} showed that instruction tuning over paired graph-text corpora can provide LLMs with basic molecular understanding. \rebuttal{Larger frameworks extended modality coverage: HIGHT~\citep{Chen2024HIGHTHG} uses a hierarchical BRICS-based graph tokenization approach, and} MoleculeSTM~\citep{liu2022}, MolFM~\citep{Luo2023MolFMAM}, BioT5+~\citep{pei2024biot5}, and ProLLaMA~\citep{lv2024} combine discrete structure tokens, biomedical knowledge graphs, and protein sequences, while 3D-MolT5~\citep{pei20243dmolt5} incorporates coarse 3D geometry for conditional generation. Most recently, UniMolT~\citep{zhang2024unimolt}, Mol-LLM~\citep{Lee2025MolLLMGM}, and Mol-Llama~\citep{kim2025} adopt the Q-Former mechanism from BLIP-2~\citep{li2023blip2} to bridge molecular graphs and LLMs, relying on a fixed-length learnable modality anchor tokens. Such fixed-length fusion compresses stereochemistry and graph substructures, leading to structural information loss. More flexible alignment strategies are needed to retain sub-structural fidelity as molecular complexity and conformer diversity increase.

\paragraph{Molecular Understanding with Multimodal LLMs.}
Molecular understanding has been explored by combining structural encoders with instruction-tuned LLMs. MolReasoner~\cite{Zhao2025MolReasonerTE}, StructCoT~\citep{jang2024cotmol} showed that coupling molecular tokens with prompts enables step-wise reaction explanation and property inference. Subsequent efforts introduced structural bias, such as Mol-LLM~\citep{Lee2025MolLLMGM} with topology-conditioned attention and ProLLaMA~\citep{lv2024} with protein–ligand co-representations. However, most approaches still depend on fixed-length anchor token representations or extensive fine-tuning of the LLM, which weakens structural fidelity and makes training inefficient. Recent chemical agents, such as ChemCrow~\citep{bran2023chemcrow}, attempt to add planner feedback but remain constrained by conformer sensitivity and limited scalability. Progress, therefore, requires a multimodal alignment method that preserves modality-specific cues without relying on heavy LLM tuning and adaptation. More related works and relationships to current works are discussed in App.~\ref{app: related work}.

\section{\modelname: Entropy-Guided Dynamic Graph--LLM Alignment}

% \modelname is designed to address the limitations of existing graph–LLM bridges by enabling dynamic, substructure-aware alignment. It connects frozen molecular encoders with frozen LLMs, preserving chemically faithful representations while ensuring stability and cost-efficient optimization. This section introduces the overall pipeline, the entropy-guided patching mechanism, the bridge design with frozen-backbone alignment, and the training objectives.

\modelname\ provides efficient and substructure-aware alignment between molecule and LLM. Sec.~\ref{Sec: method: overview} outlines the overall architecture, after which Sec.~\ref{Sec: method: entropy} introduces the \noveltyA\ strategy for dynamic token generation from graph node-embeddings, while Sec.~\ref{Sec: method: dynamic} presents the \noveltyB\ module. Finally, Sec.~\ref{Sec: method: training} describes the training process under the frozen-backbone regime. The implementation details are discussed in App.~\ref{app: Implementation Details}.

\begin{figure}[t]
    \centering
    \vskip -3pt
    \includegraphics[width=1.0\linewidth]{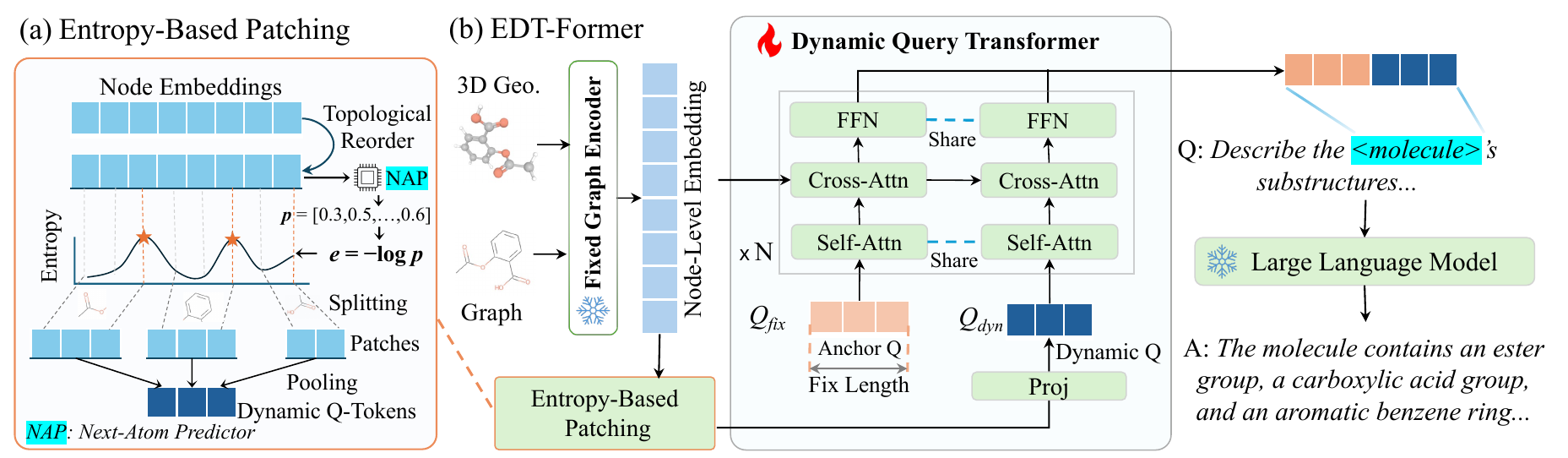}
    \vskip -4pt
    \caption{
        The architecture of \modelname. (a) Entropy-based Patching segments node embeddings into patches to produce dynamic query tokens. (b) \modelname integrates anchors and dynamic queries through \noveltyB to align the molecular graph with the LLM.}
    \label{fig: main-structure}
    \vskip -5pt
\end{figure}

\subsection{Structure Overview}
\label{Sec: method: overview}

\modelname\ establishes alignment between molecular encoders and frozen LLMs through two complementary components: \noveltyA\ and \noveltyB. As shown in Fig.~\ref{fig: main-structure}, node embeddings from frozen graph encoders are processed by entropy-guided patching, which measures node-level uncertainty for language models, segments molecules into informative sub-groups, and pools them into dynamic tokens that preserve local graph features. These tokens are combined with static modality anchor tokens in a shared query bank, where self-attention propagates context and cross-attention retrieves structural evidence from molecular embeddings. The resulting representations are projected into the LLM embedding space, forming a cross-modal alignment interface that balances global consistency from modality anchors with local fidelity from dynamic query tokens, enabling modality alignment with high efficiency and structural fidelity.

% \modelname\ serves as a lightweight bridge between frozen molecular encoders and frozen large language models. The design combines two complementary query types: fixed anchors that provide stable cross-modal alignment and dynamic queries that adapt to entropy-selected molecular substructures. Encoded representations from graphs, SMILES, and 3D conformers are first processed through entropy-guided patching, then injected into the query bank, where cross-attention and feed-forward layers propagate both global and local structure to the LLM interface. This hybrid query mechanism enables multimodal alignment while maintaining structural fidelity and parameter efficiency.

\begin{figure}[t]
    \centering
    \includegraphics[width=1.0\linewidth]{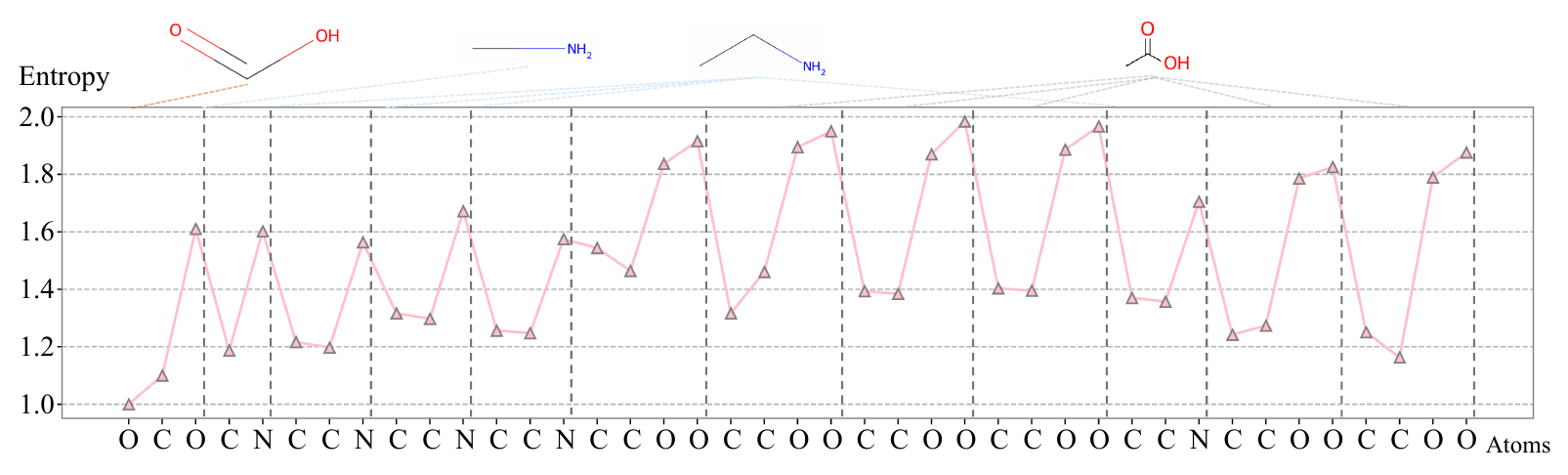}
    \vskip -10pt
    \caption{
        Illustration of entropy-guided patching on an example molecule. Atom-level entropy is plotted along the molecular sequence, and a new patch is initiated after each local maximum.
    }
    \vskip -10pt
    \label{fig: entropy-show}
\end{figure}

\subsection{Entropy-Guided Sub-Graph Patching}
\label{Sec: method: entropy}

\paragraph{Setup.}
As shown in Fig.~\ref{fig: main-structure}(a), let a molecule be represented by a SMILES-ordered atom sequence $(a_{1},\dots,a_{T})$. We pre-train a lightweight Next-Atom Predictor $f_{\theta}$ (NAP, a small Transformer) on large \rebuttal{canonical} SMILES corpora to model $p(a_{t+1}\!\mid\!a_{1:t})$. At inference time, given logits $L_{t}\in\mathbb{R}^{V}$ over the atom vocabulary $\mathcal{V}$, the probability assigned to the ground-truth next atom is
\begin{equation}
p_{t}\;=\;\softmax(L_{t})[a_{t+1}],\qquad t=1,\ldots,T-1.
\end{equation}
The information content is defined (negative log-likelihood) as
\begin{equation}
e_{t}\;=\;-\log p_{t}\quad\text{(nats; use $\log_{2}$ for bits if desired).}
\end{equation}

\paragraph{Peak-based segmentation.}
Instead of thresholding, we identify local maxima of the entropy signal $\{e_{t}\}$ and place split points at peaks (shown in Fig.~\ref{fig: entropy-show}):
\begin{equation}
\textstyle \mathrm{peak}(t)\;=\;\Big[\,(e_{t-1}\!<\!e_{t})\;\wedge\;(e_{t}\!>\!e_{t+1})\,\Big],
\end{equation}
optionally enforcing a minimal separation $\Delta$ (non-maximum suppression) and a small prominence $\gamma$ to remove spurious bumps:
\begin{equation}
\textstyle \mathrm{NMS}(t)\;=\;\Big[\;\min_{|s-t|\le \Delta}\, e_{t}\ge e_{s}\;\Big],\qquad
\mathrm{prom}(t)\;=\;\Big[\;e_{t}-\tfrac{1}{2}(e_{t-1}+e_{t+1})\ge \gamma\;\Big].
\end{equation}
% in which \(\mathrm{NMS}(t)\) keeps \(t\) only if \(e_t\) is largest within \(\pm\Delta\); \(\mathrm{prom}(t)\) keeps peaks exceeding the local baseline by \(\ge\gamma\); together they remove near-duplicate and low-contrast bumps.

Then, the valid split indices are collected as $\mathcal{T}_{\star}=\{\,t\in\{2,\dots,T-2\}\mid \mathrm{peak}(t)\wedge \mathrm{NMS}(t)\wedge \mathrm{prom}(t)\,\}$ and form dynamic segments by cutting after each peak:
\begin{equation}
\textstyle 1=\tau_{0}<\tau_{1}<\cdots<\tau_{M}=T,\quad \{\tau_{1},\dots,\tau_{M-1}\}=\mathcal{T}_{\star},\quad
\mathcal{S}_{k}=\{\,t\mid \tau_{k-1}\le t<\tau_{k}\,\}.
\end{equation}

\paragraph{Graph mapping and pooling.}
Let $\pi:\{1,\ldots,T\}\!\to\!\{1,\ldots,N\}$ map SMILES positions to graph node indices. Each segment $\mathcal{S}_{k}$ induces a substructure (node set)
\begin{equation}
\widehat{\mathcal{S}}_{k}\;=\;\{\;\pi(t)\;:\;t\in \mathcal{S}_{k}\;\}\subseteq \{1,\ldots,N\}.
\end{equation}
Given frozen graph-encoder node embeddings $X\in\mathbb{R}^{N\times d}$, we construct one query token for each substructure via average pooling:
\begin{equation}
\textstyle z_{k}\;=\;\mathrm{AvgPool}\big(\{\,X_{i}\,:\, i\in \widehat{\mathcal{S}}_{k}\}\big)\;\in\;\mathbb{R}^{d},\qquad k=1,\ldots,M.
\end{equation}
The collection $\{z_{k}\}_{k=1}^{M}$ constitutes dynamic query tokens for \noveltyB.

\begin{algorithm}[t]
\setlength{\intextsep}{5pt}
\setlength{\textfloatsep}{5pt}
\caption{Dynamic Query Transformer (anchors + dynamic tokens)}
\label{alg: dynamic query transformer}
\SetKwInOut{KwIn}{Input}\SetKwInOut{KwOut}{Output}
\KwIn{Graph $G$; frozen graph encoder $\mathcal{E}$; frozen LLM $\mathcal{L}$; dynamic tokens $Z=[z_1,\dots,z_M]$ from Sec.~\ref{Sec: method: entropy}; \#anchors $k$; layers $L$; projector $W_{\text{proj}}$.}
\KwOut{$U \in \mathbb{R}^{(k+M)\times d_{\text{LLM}}}$ (LLM-conditioning sequence).}

\textbf{Node embeddings:} $X \leftarrow \mathcal{E}(G)$ \tcp*[r]{frozen molecular encoder}
\textbf{Anchors:} $Q_{\mathrm{fix}} \in \mathbb{R}^{k\times d}$ \tcp*[r]{learnable, modality-stable}
\textbf{Query bank:} $Q_{bank} \leftarrow [\,Q_{\mathrm{fix}}\,;\,Z\,] \in \mathbb{R}^{(k+M)\times d}$ \tcp*[r]{construct query bank}

\BlankLine
\textbf{Multimodal Alignment (L layers):}  \tcp*[r]{Stack of Transformer Blocks}
\For{$\ell \leftarrow 1$ \KwTo $L$}{
  $Q_{bank} \leftarrow Q_{bank} + \text{Self-Attn}(Q_{bank})$ \tcp*[r]{mix anchors $\leftrightarrow$ dynamics}
  $Q_{bank} \leftarrow Q_{bank} + \text{Cross-Attn}(Q_{bank}, K\!=\!X, V\!=\!X)$ \tcp*[r]{retrieve node-level evidence}
  $Q_{bank} \leftarrow Q_{bank} + \text{FFN}(Q_{bank})$ \tcp*[r]{shared FFN for all query tokens}
}

\BlankLine
\textbf{Projection to LLM space:} $U \leftarrow W_{\mathrm{proj}}\,Q_{bank}$ \tcp*[r]{$(k{+}M)\!\times\! d_{\mathrm{LLM}}$}
\Return{$U$} \tcp*[r]{LLM consumes $U$; $\mathcal{L}$ frozen; train bridge only}
\end{algorithm}

% \paragraph{Rationale.}
% The entropy $e_t$ quantifies the predictive uncertainty of extending the current SMILES fragment with $a_{t+1}$. Local peaks concentrate at structural transitions (e.g., branch entries/exits, ring closures, junctions of functional groups) where multiple chemically plausible continuations exist. Peak-based cuts thus yield data-driven substructures without hand-crafted rules, while averaging the corresponding node embeddings preserves locality (bonding patterns, stereochemical context) for structure-aware alignment. This purely entropy-driven segmentation is simple and reproducible, matching dynamic-token count to molecular complexity (variable-sized molecules $\Rightarrow$ dynamic tokens) and improving fidelity under a frozen-backbone regime. The theoretical analysis of the rationale of \noveltyA is discussed in App.~\ref{app: proof-entropy}.
\rebuttal{
\paragraph{Rationale.}
The entropy $e_t$ quantifies the predictive uncertainty of extending the current SMILES atom with $a_{t+1}$ under a lightweight Next-Atom Predictor (NAP). Local peaks indicate positions that are hard for a SMILES-based sequence model to predict, and grouping tokens between peaks yields data-driven patches that mark information-dense segments in the 1D sequence, rather than exact chemical subgraphs. Since SMILES is generated via a DFS traversal of the molecular graph, adjacent tokens typically reflect nearby atoms, so these entropy patterns still inherit meaningful local structure while remaining strictly sequence-based and aligned with the LLM’s view. This entropy-driven segmentation is simple, reproducible, and automatically matches dynamic-token count to molecular complexity. The theoretical analysis of the rationale of \noveltyA is discussed in App.~\ref{app: proof-entropy}, and the detailed analysis is in App.~\ref{app: entropy rationale}.
}

\subsection{\noveltyB}
\label{Sec: method: dynamic}
\noveltyB\ integrates fixed modality anchors with dynamic substructure tokens to form a compact, structure-aware interface to a frozen LLM, as shown in Fig.~\ref{fig: main-structure}(b). Let $X\!\in\!\mathbb{R}^{N\times d}$ be node embeddings from the frozen graph encoder and let $Z\!=\![z_1,\dots,z_M]\!\in\!\mathbb{R}^{M\times d}$ be dynamic tokens from Sec.~\ref{Sec: method: entropy}. As shown in Algorithm~\ref{alg: dynamic query transformer}, we initialize $k$ learnable anchors $Q_{\text{fix}}\!\in\!\mathbb{R}^{k\times d}$ and build a query bank $Q_{bank}\!=\![Q_{\text{fix}}; Z]\!\in\!\mathbb{R}^{(k+M)\times d}$. A lightweight transformer with $L$ layers refines $Q_{bank}$ by (i) self-attention over queries to mix global and local context, (ii) cross-attention from queries to node embeddings $X$ to retrieve substructure evidence, and (iii) a shared feed-forward network. The enriched queries are projected to the LLM embedding space via $U = W_{\text{proj}}\,Q_{bank} \in \mathbb{R}^{(k+M)\times d_{\text{LLM}}}$ and then consumed by the frozen LLM as conditioning. During training, only the bridge parameters (anchors, attention/FFN, and $W_{\text{proj}}$) are updated; both the molecular encoder and the LLM remain frozen, yielding an efficient graph–LLM alignment.

\subsection{Frozen-Backbone Alignment Training}
\label{Sec: method: training}

\modelname is trained via a two-stage regimen (pretraining, then alignment tuning) on the Mol-LLaMA-Instruct dataset~\citep{kim2025}. In pretraining, only the \noveltyB is optimized together with frozen graph encoders, without attaching the LLM. Objectives follow standard adapter-style practice (cf. Q-Former): a cross-modal contrastive loss for global cohesion, an anchor–modality matching loss to stabilize the fixed interface, and a masked substructure reconstruction loss to inject local chemical semantics. Details are discussed in App.~\ref{app: pretrain noveltyb}.

In the alignment tuning stage, the frozen LLM is attached while the graph encoders and LLM remain frozen; only the parameters of \noveltyB are updated to align structure-aware queries with instruction prompts. This protocol preserves backbone stability, enables efficient adaptation, and yields a chemically faithful graph–LLM interface. Detail are in App.~\ref{app: implementation finetune}

\section{Experiments}

In this section, \modelname is evaluated on benchmarks---MoleculeQA~\citep{lu2024moleculeqa}, \rebuttal{Molecule-oriented} Mol-Instructions~\citep{fang2023}, Pampa, and BBBP from TDC~\cite{huang2021therapeutics}, addressing the following questions: \textbf{Q1)} Does \modelname outperform existing multimodal molecular LLM baselines? \textbf{Q2)} Can \noveltyA capture subgraph features? \textbf{Q3)} Does the \noveltyB improve LLM understanding compared to fixed-length query connectors? \textbf{Q4)} Does freezing the LLM reduce compute while retaining strong accuracy? See App.~\ref{app: experiments} for baseline and dataset descriptions, comprehensive experiment implementation, benchmark and evaluation details.

\subsection{Property Prediction Tasks}

{%
  % 1) 只在这个 group 里生效
  \captionsetup{%
    labelfont={color=\tablehighlightcolor}, % "Table 2" 变色
    textfont={color=\tablehighlightcolor}   % caption 文字变色
  }%
  
\begin{table}[t]
\centering
\tiny
\color{\tablehighlightcolor}
\setlength{\aboverulesep}{1pt}
\setlength{\belowrulesep}{1pt}
\caption{Zero-shot accuracy (\%) on 10 molecular property prediction tasks from MoleculeNet and TDC benchmarks, where each value is averaged over 13 shared prompts per task for fairness. Prompt settings details are in App.~\ref{app: prompt-detail}. The best (pink) and second-best (lightpink) results are highlighted.}
\vskip -5pt
\label{tab:molnet_tdc_10tasks}
\resizebox{\linewidth}{!}{
\setlength{\tabcolsep}{3pt}
\begin{tabular}{lcccccccccccc}
\toprule
Model & Size & PAMPA & BBBP & BACE & CLINTOX & DILI & HERG & HIA & HIV & AMES & PGP \\
\midrule
\rowcolor[HTML]{EFEFEF}
\multicolumn{12}{l}{\textit{General LLMs}} \\
GPT-4o          & -    & 50.52        & \second63.52 & 39.48        & 26.34        & \best53.47  & 64.04        & 76.00        & 17.02        & 53.59        & 50.21        \\
Llama2          & 7B   & 69.27        & 49.06        & 38.96        & 26.05        & 51.26       & 65.02        & 76.66        & 15.20        & 53.08        & 51.09        \\
Llama3.1        & 8B   & 58.00        & 55.64        & 40.44        & 23.93        & 52.31       & 64.28        & 78.07        & 16.34        & 53.46        & 50.99        \\
\midrule
\rowcolor[HTML]{EFEFEF}
\multicolumn{12}{l}{\textit{Molecular LLMs}} \\
3D-MoLM         & 7B   & 54.50        & 51.20        & 41.43        & 25.92        & 48.78       & 62.67        & 72.93        & \second26.16 & 53.01        & 51.55        \\
LLaMo           & 7B   & 53.09        & 57.18        & 41.91        & 25.55        & 49.08       & 62.77        & 72.75        & 24.39        & 51.67        & 51.22        \\
Mol-Ins.-Llama2 & 8B   & 40.78        & 51.22        & \second42.61 & 25.99        & 48.91       & 62.79        & 72.78        & 22.82        & 53.72        & 50.72        \\
Mol-Ins.-Llama3.1
                 & 8B   & 57.39        & 53.29        & 40.12        & 26.79        & \second52.39 & 47.23        & 50.95        & 16.70        & 48.98        & \second52.70 \\
Mol-LLaMA-2     & 7.2B & \second74.23 & 53.99        & 41.54        & \second27.96 & 51.92       & 65.60        & 67.88        & 22.51        & 52.97        & 51.14        \\
Mol-LLaMA-3.1    & 8.3B & 67.15        & 56.64        & 38.68        & 23.07        & 51.76       & \second70.61 & \second78.25 & 21.89        & \second54.50 & 50.73        \\
\midrule
\rowcolor[HTML]{EFEFEF}
\multicolumn{12}{l}{\textit{Ours}} \\
EDT-Former      & 8.3B & \best82.34   & \best72.48   & \best49.12   & \best56.55   & 50.27       & \best73.46   & \best82.15   & \best46.56   & \best55.20   & \best54.64   \\
\bottomrule
\end{tabular}
}
\end{table}
}

% - Pampa and BBBP datasets from TDC (Therapeutics Data Commons)~\citep{huang2021therapeutics}
% - Official splits, using the test set, do zero-shot
% - Compared general and molecular LLMs (including multimodal ones)
% - Our model has reached the best 
% - Using three different prompt settings and calculating the average acc, as prompts influence the results much.
% - Both task better than the second best over 20\%, Bring both tasks acc to over 75\%, make LLM predict property have practical significance
% - Answer the Q1
As shown in Tab.~\ref{tab:molnet_tdc_10tasks}, we \rebuttal{evaluate on ten molecular property prediction tasks from TDC~\citep{huang2021therapeutics} and MoleculeNet~\citep{wu2018moleculenet} benchmarks using official splits and zero-shot inference.} Comparisons cover general LLMs and molecular LLMs (including multimodal variants). To reduce prompt sensitivity, the average accuracy over \rebuttal{13 prompt settings} is reported, \rebuttal{with all the prompt settings per-prompt accuracies in App.~\ref{app: prompt-detail}).} Under identical conditions, \modelname\ attains the best results on both tasks, with $>$20\% relative gains over the strongest baseline and pushing average accuracy above \rebuttal{70\% on BBBP, HIA, and PAMPA datasets,} indicating practical viability for LLM-based property prediction (\textbf{Q1}). \rebuttal{Potential dataset imbalance is discussed in App.~\ref{app: imbalance}.}

\setcounter{topnumber}{3}
\renewcommand{\topfraction}{0.95}
\subsection{MoleculeQA (Reasoning and Understanding)}

\begin{table}[t]
\centering
\tiny
\setlength{\aboverulesep}{1pt}
\setlength{\belowrulesep}{1pt}
\caption{Performance on the MoleculeQA benchmark. Models are compared across four tasks (Structure, Source, Property, Application) with accuracy (\%) reported. The best (pink) and second-best (lightpink) results are highlighted. Modality T/G/3D represents text, graph, and 3D geometry.}
\label{tab: benchmark-MoleculeQA}
\resizebox{\linewidth}{!}{
\begin{tabular}{l l l l c c c c c c}
\toprule
Model & Modality & Imp. & Size & Strct. & Src. & Prop. & App. & Avg. & Total \\
\midrule
\rowcolor[HTML]{EFEFEF}
\multicolumn{10}{l}{\textit{Random}} \\
Random & - & - & - & 24.41 & 22.30 & 23.04 & 24.57 & 23.58 & 24.03 \\
\midrule
\rowcolor[HTML]{EFEFEF}
\multicolumn{10}{l}{\textit{Molecular LLMs}} \\
BioMedGPT-LM & Text & SFT & 7B & 54.19 & 60.01 & 38.85 & 40.90 & 48.49 & 52.23 \\
Mol-Inst.-Llama3.1 & T/G & SFT & 8B   & 72.79 & \second70.82 & 43.08 & 41.22 & 56.98 & 65.31 \\
3D-MoLM         & T/G/3D & SFT & 8B   & \second73.17 & 70.50 & 44.79 & 44.19 & 58.16 & 65.96 \\
LLaMo           & Text & SFT & 8B   & 70.56 & 66.63 & 44.60 & 45.18 & 56.74 & 63.74 \\
Mol-Llama3.1       & Text/Graph/3D & SFT & 8.2B & 73.16 & 70.22 & \second45.70 & 46.18 & \second58.82 & \second66.21 \\
\midrule
\rowcolor[HTML]{EFEFEF}
\multicolumn{10}{l}{\textit{General LLMs}} \\
Galactica & Text & SFT & 6.7B & 32.35 & 41.92 & 31.05 & 28.21 & 33.38 & 33.96 \\
BLOOM & Text & SFT & 7.1B & 35.01 & 47.51 & 31.46 & 33.56 & 36.89 & 37.31 \\
Pythia & Text & SFT & 6.9B & 42.79 & 58.90 & 38.58 & 39.07 & 44.84 & 45.61 \\
Llama-2-chat & Text & SFT & 7B & 28.75 & 39.84 & 31.33 & 27.71 & 31.91 & 31.54 \\
Vicuna-v1.5 & Text & SFT & 13B & 37.01 & 43.19 & 30.64 & 31.55 & 35.60 & 37.07 \\
\midrule
\rowcolor[HTML]{EFEFEF}
\multicolumn{10}{l}{\textit{Large Scale General LLMs}} \\
GPT-3.5 & Text & 10-Shot & - & 25.60 & 37.60 & 28.04 & 32.22 & 30.87 & 29.29 \\
GPT-4   & Text & 10-Shot & - & 60.94 & 50.19 & 35.57 & 43.91 & 47.65 & 53.47 \\
GPT-5   & Text & 10-Shot & - & 62.78 & 53.22 & 36.42 & \second 46.91 & 49.83 & 56.12 \\
\midrule
\rowcolor[HTML]{EFEFEF}
\multicolumn{10}{l}{\textit{Ours}} \\
\modelname & T/G/3D & 10-Shot & 8.3B & 66.46 & 60.98 & 40.35 & 36.40 & 51.05 & 58.78 \\
\textbf{\modelname} & T/G/3D & SFT & 8.3B & \best 74.55 & \best 72.39 & \best 50.71 & \best 48.58 & \best 61.56 & \best 68.34 \\
\bottomrule
\end{tabular}
}
\end{table}

% - MoleculeQA benchmark~\citep{lu2024moleculeqa}
% - large-scale Multiple-choice qa about molecule knowledge
% - Four tasks: structure, source, property, application
% - Also evaluate the large-scale GPT models using 10-shot, others follow the official split to finetune the same epoch (as they are small models, while this is a specific domain) under the same settings
% - Our SFTed model consistently performs the best on all 4 tasks, showing reasoning ability on molecular knowledge
% - 10-shot setting outperforms the newest GPT-5 model using only 8B
% - Answer \textbf{Q1}

Evaluation is conducted on the large-scale MoleculeQA benchmark~\citep{lu2024moleculeqa} (Tab.~\ref{tab: benchmark-MoleculeQA}), a multiple-choice suite covering four tasks—structure, source, property, and application. Large GPT models are assessed in a 10-shot setting, while all other models are fine-tuned on the official splits for the same number of epochs under matched hyperparameters. Under these conditions, our SFT model consistently achieves the best performance across all four tasks, demonstrating strong molecular reasoning. Notably, the 10-shot variant of \modelname\ outperforms the newest GPT-5 model, indicating an efficient trade-off between scale and domain alignment. These results answer \textbf{Q1}.

\subsection{Mol-Instructions (Molecule-oriented)}

\begin{table}[t]
\centering
\scriptsize
\setlength{\aboverulesep}{1pt}
\setlength{\belowrulesep}{1pt}
\caption{Results on the Mol-Instructions dataset. Models are finetuned and evaluated on two tasks: molecular description generation (BLEU, ROUGE, and METEOR scores) and molecular property prediction (MAE). The best (pink) and second-best (lightpink) results are highlighted.}
\label{tab: benchmark-mol-isntructions}
\resizebox{\linewidth}{!}{
\begin{tabular}{lccccccc}
\toprule
\multirow{2}{*}{Models} & \multicolumn{6}{c}{Molecular Description} & \multirow{2}{*}{Property (MAE)} \\
\cmidrule(lr){2-7}
 & BLUE-2 & BLUE-4 & ROUGE-1 & ROUGE-2 & ROUGE-L & METEOR & \\
\midrule
Alpaca-7B             & 0.068 & 0.014 & 0.178 & 0.041 & 0.136 & 0.107 & 322.109 \\
Baize-7B              & 0.064 & 0.015 & 0.189 & 0.053 & 0.148 & 0.106 & 261.343 \\
LLaMA-2-7B            & 0.059 & 0.014 & 0.164 & 0.066 & 0.148 & 0.184 &   5.553 \\
Vicuna-v1.5-13B    & 0.052 & 0.011 & 0.151 & 0.055 & 0.130 & 0.168 & 860.051 \\
Galatica-6.7B      & 0.024 & 0.008 & 0.074 & 0.015 & 0.063 & 0.065 &   0.568 \\
Qwen3-8B           & 0.098 & 0.029 & 0.207 & 0.050 & 0.157 & 0.173 &   4.737 \\
\midrule
Mol-Inst.-Llama2-7B    & 0.217 & 0.143 & 0.337 & 0.196 & 0.291 & 0.254 &   0.013 \\
Mol-Inst.-Llama3.1-8B  & 0.419 & 0.361 & 0.719 & 0.646 & 0.709 & 0.637 &  15.059 \\
Mol-LLaMA2-7.2B       & \second 0.433 & 0.385 & 0.711 & 0.649 & 0.601 & 0.601 & \second 0.0087 \\
Mol-LLaMA3.1-8.2B       & \best 0.445 & \second 0.398 & \second0.717 & \second0.656 & \second0.709 & \second 0.617 &   0.0079 \\
MolReasoner-7B  & 0.438 & 0.322 & 0.553 & 0.366 & 0.482 & 0.475 & 10.323 \\
\midrule
\textbf{\modelname-8.3B} & 0.424 & \best0.402 & \best0.726 & \best0.652 & \best0.717 & \best0.631 &   \best 0.0062 \\
\bottomrule
\end{tabular}
}
\end{table}

{%
  % 1) 只在这个 group 里生效
  \captionsetup{%
    labelfont={color=\tablehighlightcolor}, % "Table 2" 变色
    textfont={color=\tablehighlightcolor}   % caption 文字变色
  }%
  
\begin{table}[t]
\centering
\tiny
\color{\tablehighlightcolor}
\setlength{\aboverulesep}{1pt}
\setlength{\belowrulesep}{1pt}
\caption{Results on the Retrosynthesis task from the Mol-Instructions benchmark. Models are finetuned and evaluated on the official splits, reporting 7 metrics as defined by the benchmark. The best (pink) and second-best (light pink) results are highlighted.}
\label{tab: mol_retro}
\resizebox{\linewidth}{!}{
\setlength{\tabcolsep}{4pt}
\begin{tabular}{lccccccc}
\toprule
Model & Exact$\uparrow$ & BLEU$\uparrow$ & Levenshtein$\downarrow$ & RDK FTS$\uparrow$ & MACC FTS$\uparrow$ & Morgan FTS$\uparrow$ & Validity$\uparrow$ \\
\midrule
Alpaca             & 0.000        & 0.063        & 46.915        & 0.005        & 0.023        & 0.007        & 0.160        \\
Baize              & 0.000        & 0.095        & 44.714        & 0.025        & 0.050        & 0.023        & 0.112        \\
ChatGLM            & 0.000        & 0.117        & 48.365        & 0.056        & 0.075        & 0.043        & 0.046        \\
LLaMa              & 0.000        & 0.036        & 46.844        & 0.018        & 0.029        & 0.017        & 0.010        \\
Vicuna             & 0.000        & 0.057        & 46.877        & 0.025        & 0.030        & 0.021        & 0.017        \\
Galactica          & 0.000        & 0.452        & 34.940        & 0.167        & 0.274        & 0.134        & \second0.984  \\
Text+Chem T5       & 0.141        & 0.765        & \second24.043 & 0.685        & 0.765        & 0.585        & 0.698        \\
Mol-Ins.-Llama2    & 0.009        & 0.705        & 31.227        & 0.283        & 0.487        & 0.230        & \best1.000    \\
Mol-Ins.-Llama3.1  & 0.333        & 0.842        & \best17.642   & 0.704        & 0.815        & \second0.646 & \best1.000    \\
Mol-LLama-3.1      & \second0.340 & \second0.877 & 38.324        & \second0.708 & \second0.822 & 0.601        & \best1.000    \\
HIGHT              & 0.008        & 0.863        & 28.912        & 0.564        & 0.340        & 0.309        & \best1.000        \\
EDT-Former         & \best0.387   & \best0.930   & 34.202        & \best0.721   & \best0.836   & \best0.670   & \best1.000    \\
\bottomrule
\end{tabular}
}
\end{table}
}

{%
  % 1) 只在这个 group 里生效
  \captionsetup{%
    labelfont={color=\tablehighlightcolor}, % "Table 2" 变色
    textfont={color=\tablehighlightcolor}   % caption 文字变色
  }%
\begin{table}[t]
\centering
\tiny
\color{\tablehighlightcolor}
\setlength{\aboverulesep}{1pt}
\setlength{\belowrulesep}{1pt}
\caption{Results on the Forward Reaction Prediction task from Mol-Instructions benchmark. Models are fine-tuned and evaluated on the official splits, reporting 7 metrics as defined by the benchmark. The best (pink) and second-best (light pink) results are highlighted.
}
\label{tab: mol_forward}
\resizebox{\linewidth}{!}{
\setlength{\tabcolsep}{4pt}
\begin{tabular}{lccccccc}
\toprule
Model & Exact$\uparrow$ & BLEU$\uparrow$ & Levenshtein$\downarrow$ & RDK FTS$\uparrow$ & MACC FTS$\uparrow$ & Morgan FTS$\uparrow$ & Validity$\uparrow$ \\
\midrule
Alpaca             & 0.000        & 0.065        & 41.989        & 0.004        & 0.024        & 0.008        & 0.138        \\
Baize              & 0.000        & 0.044        & 41.500        & 0.004        & 0.025        & 0.009        & 0.097        \\
ChatGLM            & 0.000        & 0.183        & 40.008        & 0.050        & 0.100        & 0.044        & 0.108        \\
LLaMa              & 0.000        & 0.020        & 42.002        & 0.001        & 0.002        & 0.001        & 0.039        \\
Vicuna             & 0.000        & 0.057        & 41.690        & 0.007        & 0.016        & 0.006        & 0.059        \\
Galactica          & 0.000        & 0.468        & 35.021        & 0.156        & 0.257        & 0.097        & 0.946        \\
Text+Chem T5       & 0.239        & 0.782        & 20.413        & 0.705        & 0.789        & 0.652        & 0.762        \\
Mol-Ins.-Llama2    & 0.045        & 0.654        & 27.262        & 0.313        & 0.509        & 0.262        & \best1.000    \\
Mol-Ins.-Llama3.1  & \best0.503   & 0.883 & \best13.410   & \second0.756 & \second0.863 & \second0.708 & \best1.000    \\
Mol-LLama          & 0.440        & \second0.912   & \second17.120 & 0.724        & 0.859        & 0.665        & \best1.000    \\
HIGHT              & 0.037        & 0.869        & 23.759        & 0.59         & 0.394        & 0.34         & \second0.993  \\
EDT-Former         & \second0.471 & \best0.964   & 18.130        & \best0.776   & \best0.871   & \best0.712   & \best1.000    \\
\bottomrule
\end{tabular}
}
\end{table}
}

{%
  % 1) 只在这个 group 里生效
  \captionsetup{%
    labelfont={color=\tablehighlightcolor}, % "Table 2" 变色
    textfont={color=\tablehighlightcolor}   % caption 文字变色
  }%
  
\begin{table}[t]
\centering
\tiny
\color{\tablehighlightcolor}
\setlength{\aboverulesep}{1pt}
\setlength{\belowrulesep}{1pt}
\caption{\rebuttal{Results on the reagent prediction task from the Mol-Instructions benchmark. Models are fine-tuned and evaluated on the official splits, reporting 7 metrics as defined by the benchmark. The best (pink) and second-best (light pink) results are highlighted.}}
\label{tab: mol_reagent}
\resizebox{\linewidth}{!}{
\setlength{\tabcolsep}{4pt}
\begin{tabular}{lccccccc}
\toprule
Model & Exact$\uparrow$ & BLEU$\uparrow$ & Levenshtein$\downarrow$ & RDK FTS$\uparrow$ & MACC FTS$\uparrow$ & Morgan FTS$\uparrow$ & Validity$\uparrow$ \\
\midrule
Alpaca             & 0.000        & 0.026        & 29.037         & 0.029        & 0.016         & 0.001         & 0.186        \\
Baize              & 0.000        & 0.051        & 30.628         & 0.022        & 0.018         & 0.004         & 0.099        \\
ChatGLM            & 0.000        & 0.019        & 29.169         & 0.017        & 0.006         & 0.002         & 0.074        \\
LLaMa              & 0.000        & 0.003        & 28.040         & 0.037        & 0.001         & 0.001         & 0.001        \\
Vicuna             & 0.000        & 0.010        & 27.948         & 0.038        & 0.002         & 0.001         & 0.007        \\
Galactica          & 0.000        & 0.141        & 30.760         & 0.036        & 0.127         & 0.051         & \second0.995 \\
Text+Chem T5       & 0.000        & 0.225        & 49.323         & 0.039        & 0.186         & 0.052         & 0.313        \\
Mol-Ins.-Llama2    & 0.044        & 0.224        & \second23.167  & 0.237        & 0.364         & 0.213         & \best1.000   \\
Mol-Ins.-Llama3.1  & 0.101        & \second0.648   & \best18.326    & 0.412        & \second0.521  & \second0.375  & \best1.000   \\
Mol-LLama          & \second0.132 & 0.495        & 49.230         & 0.411        & \second0.521  & 0.361         & \best1.000   \\
HIGHT              & 0.050        & 0.462        & 28.970         & \second0.441 & 0.314         & 0.275         & \best1.000   \\
EDT-Former         & \best0.145   & \best0.650 & 46.950         & \best0.464   & \best0.531    & \best0.431    & \best1.000   \\
\bottomrule
\end{tabular}
}
\end{table}

}

% \input{Tables/Rebuttal/mol_design}

% - Mol-Instructions~\citep{fang2023} benchmark, we use the mol description generation and property prediction in that
% - Also, we fine-tune the baselines and our model under the same settings for fair comparison
% - Our model reaches better description ability on most of the metrics and lower property MAE, showing better understanding of molecules
% - Answer \textbf{Q1}
For benchmark Mol-Instructions~\citep{fang2023}, \rebuttal{we evaluate all 6 molecule-oriented and 1 open-question tasks. For fair comparison, all baselines and \modelname\ are fine-tuned under identical settings on the official splits. Tab.~\ref{tab: benchmark-mol-isntructions},~\ref{tab: mol_retro},~\ref{tab: mol_forward},~\ref{tab: mol_reagent} show the performance on Molecule description generation, Property prediction, Retrosynthesis, Forward reaction prediction, and Reagent prediction tasks (with Description-guided molecule design task reported in App.~\ref{app: molecule design}).\modelname\ attains stronger chemical ability on most metrics of all the tasks, indicating improved performance relative to both general and molecular LLM baselines. The detailed experiment settings and metrics are discussed in App~\ref{app: evaluation settings}. We further investigate the influence on natural language ability of \modelname\ and Mol-Llama~\citep{kim2025} when performing fine-tuning in App~\ref{app: NLP analysis}. These address \textbf{Q1}.}

\subsection{Ablation Studies}

Comprehensive ablation experiments are conducted for question \textbf{Q2--Q4}, ranging from multimodal fusion to graph patching strategies for \noveltyA\ and connector design for \noveltyB. All experiments use official splits with matched token budgets and hyperparameter settings, see App.~\ref{app: experiments}. Additional ablation studies are in App.~\ref{app: ablations}.

% \textbf{Component Effects.}
% Multimodal fusion and each core component are ablated on the four MoleculeQA tasks (Fig.~\ref{fig: main-ablation}). The full \modelname\ performs best across Structure, Source, Property, and Application. Removing multimodal fusion produces the largest degradation (26\% avg.), confirming that combining text, graph, and 3D is critical for LLM comprehension. Disabling either \noveltyB\ or \noveltyA\ yields more than 10\% average drops (10.7\% and 11.5\%), indicating that dynamic queries are necessary to handle variable-sized graphs and capture more evidence than fixed-length fusion, while entropy-driven patches expose LLM-salient subgraphs for stronger structure-aware understanding (\textbf{Q2, Q3}).

\textbf{Component Effects.}
\rebuttal{Multimodal fusion and each core component are ablated on the four MoleculeQA tasks (Fig.~\ref{fig: main-ablation}). The full \modelname\ performs best across Structure, Source, Property, and Application. In \emph{w/o Modality Fusion}, the Dynamic Query Transformer is replaced by standard transformer blocks without graph–text–3D alignment; in \emph{w/o Dynamic Query}, we keep the 2D/3D encoders and anchors but substitute the dynamic tokens with a fixed-length projection of graph embeddings; in \emph{w/o Entropy Patching}, entropy-guided segmentation is replaced by uniform 3-token SMILES patches.} Removing multimodal fusion produces the largest degradation (26\% avg.), confirming that combining text, graph, and 3D is critical for LLM comprehension. Disabling either \noveltyB\ or \noveltyA\ yields more than 10\% average drops (10.7\% and 11.5\%), indicating that dynamic queries are necessary to handle variable-sized graphs and capture more evidence than fixed-length fusion, while entropy-driven patches expose LLM-salient subgraphs for stronger structure-aware understanding (\textbf{Q2, Q3}).

% - P1: Summary Starts: Disable our two main components or multimodal fusion to test on the four tasks on MoleculeQA.
% - \modelname performs the very best
% - Multimodal fusion is important, significantly improving the LLM's understanding. Without modality help, it decreases sharply
% - Disable the \noveltyA or \noveltyB will cause an avg decrease of over 10\%, say that dynamic query is important for length-variated graphs and can capture more information than fixed-length fusion; Entropy guided patching can effectively let LLM understand the molecular graph better as it uses the small transformers to help calculate the entropy, which represents the ideal patch for LLM models to understand molecules, answer Q2, Q3

\textbf{Patching Strategies.}
% - Four patching methods, \noveltyA, BRICS-Based (Breaking of Retrosynthetically Interesting Chemical Substructures )~\citep{jinsong2024brics}, Fix-length (k=8), no-patching tested on the property prediction datasets
% - BRICS drops not much, as it is also a good segmentation method for molecules. But entropy-based is more suitable for LLMs; it is an LLM-based understanding sub-group and has potential generalization to other types of graphs except molecular. 
% - Fix-length patching drops more as it does not consider the correct sub-graph
% - None has dropped the largest, indicating the importance of patching
% - Answer Q2
We compare four segmentation methods on PAMPA/BBBP with matched splits, prompts, and query budgets: \noveltyA\ (entropy peaks), BRICS-based fragments~\citep{jinsong2024brics}, random patches (shuffle from BRICS), and no patching. As shown in Tab.~\ref{tab: ablation-patching}, entropy-guided patching yields the strongest results, BRICS trails slightly (chemically sound segmentation) while random patching degrades more (misses the true subgraph boundaries), and removing patching produces the largest drop, confirming that learned, data-driven boundaries are essential. Entropy peaks from a small next-atom transformer mark LLM-salient transitions, exposing true substructure boundaries to the query interface; this data-driven segmentation likely generalizes beyond molecules and is necessary to realize multimodal fusion gains (\textbf{Q2}).

\textbf{Backbone Sensitivity.}
% - across mistral, qwen, llama backbones, on MoleculeQA
% - different LLM has different good abilities, but the average gains are all good
On MoleculeQA with official splits, attaching our connector to Mistral, Qwen3, and Llama series models yields consistent improvements over the text-only backbones (Tab.~\ref{tab: ablation-backbones}). Gains are larger for weaker backbones but remain substantial across all four.

\begin{figure}[t]
    \centering
    \includegraphics[width=1.0\linewidth]{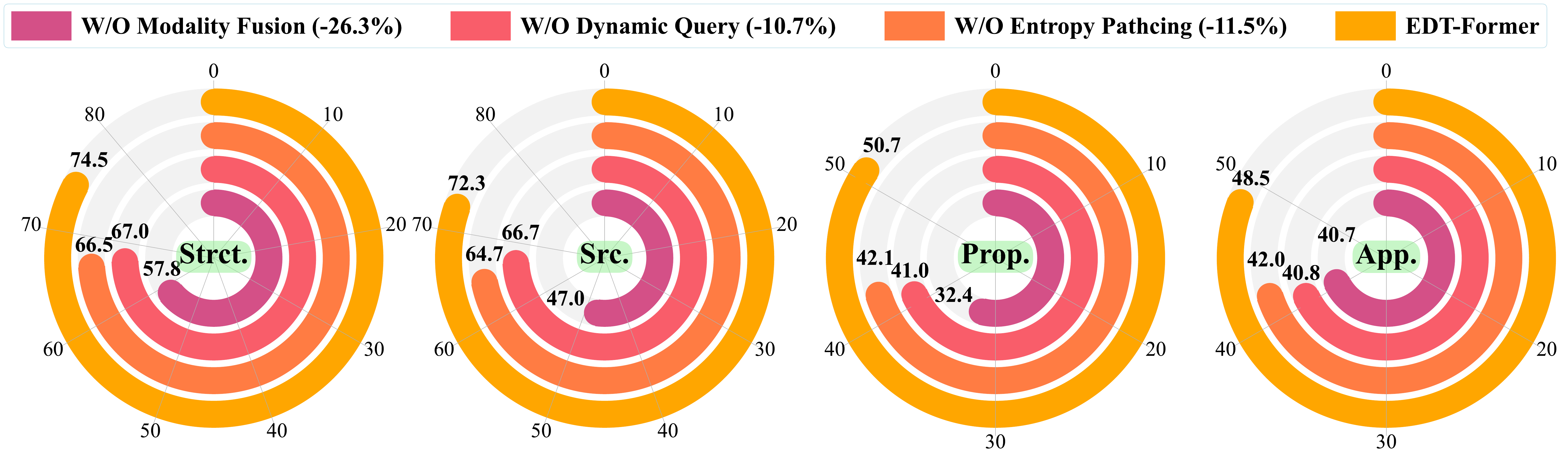}
    \caption{
    Ablation study of components on the MoleculeQA dataset. Accuracy is reported across four task types (Structure, Source, Property, and Application) when removing each component (modality fusion, \noveltyA, or \noveltyB).
    }
    \label{fig: main-ablation}
\end{figure}

\begin{table}[t]
\centering

\begin{minipage}{0.48\linewidth}
\centering
\caption{Ablations on patching strategies. Four methods are tested on the BBBP and Pampa. Accuracy (\%) and F1 (subscript) are reported, with the top and second best highlighted.}

\label{tab: ablation-patching}
\resizebox{\linewidth}{!}{
    \small
    \begin{tabular}{lccl}
\toprule
Methods & BBBP & Pampa & Avg. Drop \\
\midrule
Entropy & \best75.06\fone{75.06} & \best84.52\fone{91.61} & \best0\% \\
BRICS   & \second73.59\fone{84.74} & 71.90\fone{83.09} & \second3.96\% \\
Random & 68.90\fone{80.13} & 66.67\fone{79.32} & 8.81\% \\
None    & 39.67\fone{49.58} & \second78.62\fone{87.90} & 21.60\% \\
\bottomrule
\end{tabular}

}
\end{minipage}
\hfill
\begin{minipage}{0.49\linewidth}
\centering
\setlength{\tabcolsep}{2pt}
\caption{Effect of \modelname on different LLM backbones. Four novel LLMs are tested on the MoleculeQA benchmark. Total accuracy and average gain for each model are reported.}
\label{tab: ablation-backbones}
\resizebox{\linewidth}{!}{
    \small
    \begin{tabular}{lccl}
\toprule
LLMs & LLM Only & +\modelname & Avg. Gain \\
\midrule
Mistral-8B   & 28.00 & 55.63 & +98.71\% \\
Qwen3-8B     & 27.18 & 54.79 & +101.58\% \\
Llama2-7B    & 38.37 & 65.89 & +71.72\% \\
Llama3.1-8B  & 49.31 & 68.34 & +38.59\% \\
\bottomrule
\end{tabular}
}
\end{minipage}

\end{table}

\begin{wraptable}{r}{0.5\textwidth} % r=右侧，0.5\textwidth=占一半宽度
\centering
\vskip -15pt
\caption{Estimated memory usage and training time per step for \modelname with Llama3.1-8B.}
\label{tab: ablation-efficiency}
\vskip -5pt
\setlength{\tabcolsep}{2pt}
\label{tab:dyqformer_mem_time}
\setlength{\tabcolsep}{4pt}
\resizebox{\linewidth}{!}{
\begin{tabular}{lccc}
\toprule
\modelname & Llama3.1-8B & Mem (GB) & Time/Step (s) \\
\midrule
Train & Frozen & 37 & 0.26 \\
Train & LoRA   & 77 & 0.93 \\
Train & Train  & $>$200 & -- \\
\bottomrule
\end{tabular}
}
\vskip -15pt
\end{wraptable}
\textbf{Efficiency Analysis.}
% - Compared to tuning the LLM \modelname strategy, it reduces half the GPU memory usage and 2.5x training speed than LoRA sft backbone settings, can not imagine without LoRA
% - answer Q4
As shown in Tab.~\ref{tab: ablation-efficiency}, relative to LoRA finetuning, our strategy halves GPU memory and achieves \(\sim\)3.5\(\times\) faster training per step under identical settings; full backbone tuning is practically infeasible in our setting. These support that efficient alignment is achievable without updating the LLM (\textbf{Q4}).

\textbf{Fusion Benefits.}
% - Four modality fusion have been tested
% - Text only (with SMILES to represent the molecules) is the worst, indicating the importance of explicit structural features for LLMs
% - Compare to full fusion, without 3D information, performance drops not much, but without 2D, it drops much, indicating the necessity of graph information for LLMs
In Tab.~\ref{tab: ablation-fusion}, four modality settings are evaluated on MoleculeQA. Text-only (SMILES) performs worst, underscoring the need for explicit structural features. Removing 3D causes a modest decline relative to full fusion, whereas removing 2D (graph) degrades substantially, indicating that graph information is essential while 3D provides complementary gains.

\textbf{Attention Insights.}
The attention visualization in Fig.~\ref{fig: Attention Visualization} reveals the complementary roles for the two query types. Fixed-length tokens behave as modality anchors: their attention is diffuse and often spans multiple, unrelated patches, which is useful for global alignment but unreliable for isolating substructures. In contrast, dynamic tokens align with the SMILES-ordered entropy segments and attend sharply to contiguous node ranges that correspond to chemically meaningful subgraphs (e.g., the polyamine backbone with pendant acetate groups highlighted). These patterns substantiate \textbf{Q2} and \textbf{Q3}: multimodal fusion benefits when local structure is exposed, and the proposed novelties (\noveltyA plus the \noveltyB) provide a substructure-aware interface that preserves local fidelity while anchors maintain global consistency.

% - fix-length tokens behaved as modality anchors, randomly focus on patches, one token may capture multiple patches, can not handle the role for capturing substructures
% - Dynamic tokens can do it very well, in smiles order, it segments the graph based on the entropy, which can identify substructures, for example, the Polyamine backbone with pendant acetate groups labeled in the figure
% - conclusion for Q2, Q3, <you caconcludeon here, the advantage of novelties>

\begin{figure}[t]
    \centering
    \vskip -5pt
    \includegraphics[width=1.0\linewidth]{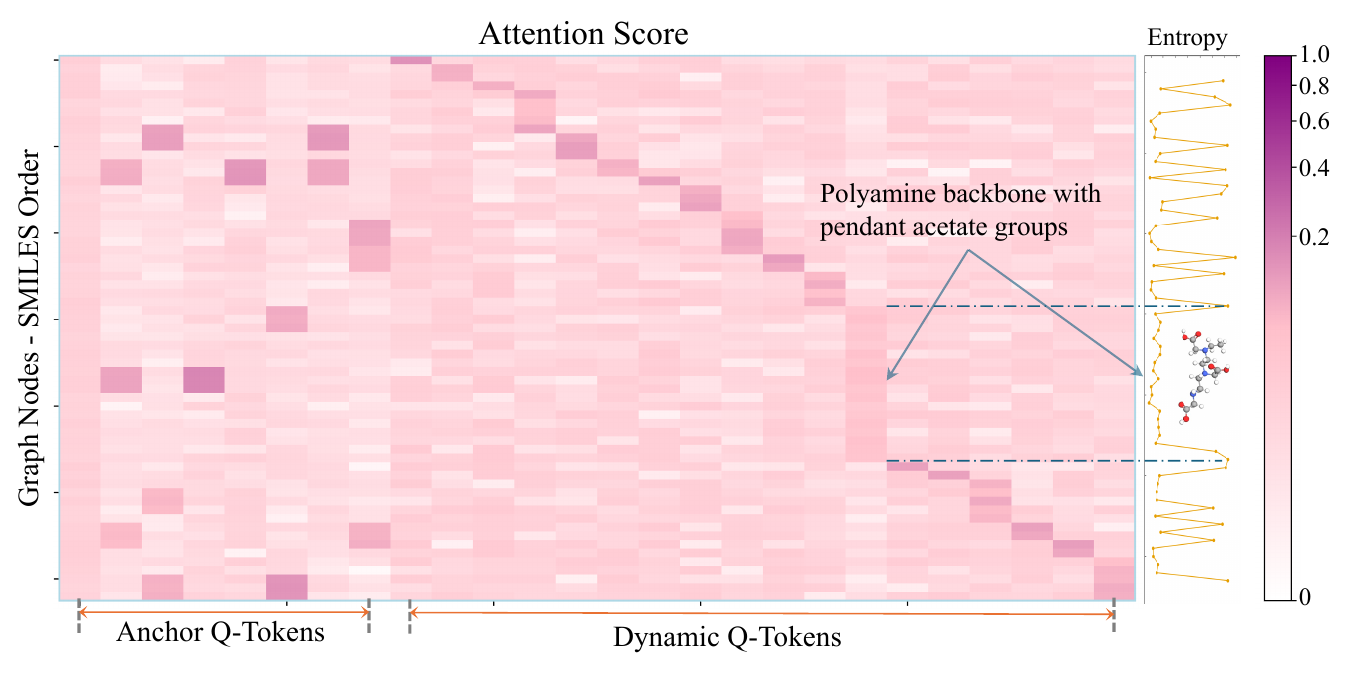}
    \vskip -5pt
    \caption{Attention visualization for \modelname. Last-layer attention map for a single molecule ($N{=}63$ atoms) with atoms in SMILES order along the vertical axis and query tokens (fixed-length anchors followed by dynamic tokens) along the horizontal axis. The right panel shows the corresponding entropy signal for the SMILES sequence. Values indicate normalized attention scores.}
    \vskip -4pt

    \label{fig: Attention Visualization}
\end{figure}

\begin{table}[t]
\centering
\small
\caption{Modality ablation on MoleculeQA. Four modality combinations are tested on four tasks, with the accuracies (\%) reported. The top (pink) and second-best (lightpink) results are highlighted.}
\label{tab: ablation-fusion}
\vskip -5pt
\resizebox{\linewidth}{!}{
\begin{tabular}{lccccccc}
\toprule
Modality & Structure & Source & Property & Application & Average & Total & Avg. Drop \\ 
\midrule
Text/Graph/3D & \best74.55 & \best72.39 & \best50.71 & \best48.58 &\best 61.56 & \best68.34 & \best0.00\% \\
Text/Graph    & \second70.84 & \second68.72 & \second49.22 & \second46.51 & \second58.82 & \second65.11 & \second4.73\% \\
Text/3D       & 66.50 & 64.77 & 42.17 & 42.02 & 53.87 & 60.48 & 11.50\% \\
Text Only     & 66.46 & 60.98 & 40.35 & 36.40 & 51.05 & 58.77 & 14.00\% \\
\bottomrule
\end{tabular}
}

\label{tab: ablation modality}

\end{table}

\section{Conclusion}

% \modelname\ is introduced as a method for aligning molecular graphs with large language models. \noveltyA\ segments molecules at entropy peaks to yield substructure-aware dynamic tokens, and \noveltyB\ integrates these tokens with modality anchors to project into the LLM space, providing a stable interface without updating backbone parameters. Across standard benchmarks, \modelname\ delivers consistent gains over text-only and prior multimodal baselines while markedly reducing training cost relative to LLM tuning. Overall, \modelname\ offers chemically faithful, efficient graph–LLM alignment and a general recipe for multimodal fusion that can extend to broader graph domains; for current constraints and future directions, see App.~\ref{app: limitations and future work}.

\modelname\ is introduced as a method for aligning molecular graphs with large language models. \noveltyA\ segments molecules at entropy peaks to yield substructure-aware dynamic tokens, and \noveltyB\ integrates these tokens with modality anchors to project into the LLM space, providing a stable interface without updating backbone parameters. Across standard benchmarks, \modelname\ delivers consistent gains over text-only and prior multimodal baselines while markedly reducing training cost relative to LLM tuning. Overall, \modelname\ offers chemically faithful, efficient graph–LLM alignment and a general recipe for multimodal fusion that can extend to broader graph domains; code and processed data are released for reproducibility, and for current constraints and future directions, see Apps.~\ref{app: limitations} and~\ref{app: future-work}.

% \modelname\ is introduced as a method for aligning molecular graphs with large language models. \noveltyA\ segments molecules at entropy peaks to yield substructure-aware dynamic tokens, and \noveltyB\ integrates these tokens with modality anchors to project into the LLM space, providing a stable interface without updating backbone parameters. Across Mol-Instructions tasks (including reaction prediction, molecule design, and open-question generation), standard molecular property benchmarks from MoleculeNet and TDC, and MoleculeQA, \modelname\ delivers consistent gains over text-only and prior multimodal baselines, while maintaining strong zero-shot robustness and markedly reducing training cost relative to LLM tuning. Overall, \modelname\ offers chemically faithful, efficient graph–LLM alignment and a general recipe for multimodal fusion that extends across diverse molecular tasks and can be applied to broader graph domains; code and processed data are released for reproducibility, and for current constraints and future directions, see Apps.~\ref{app: limitations} and~\ref{app: future-work}.

\newpage
\section{Ethics statement}
This work does not involve human subjects, personally identifiable information, or non-public/proprietary datasets. All data are publicly available under their respective licenses, and no wet-lab or animal experiments were conducted. Although molecular AI can present dual-use risks (e.g., aiding the design of harmful compounds), our contribution focuses on multimodal representation learning and evaluation on standard benchmarks; we neither train nor release models intended to generate or optimize novel toxic molecules. We will release code and models with terms of use that prohibit harmful or unlawful applications. All authors complied with institutional policies and responsible research practices.

\section{Reproducibility statement}
This work is fully open-source under the MIT license. We release end-to-end resources to reproduce every result in the paper, including: (i) complete training and evaluation code (with all ablation scripts), (ii) baseline and benchmark runners with fixed seeds, (iii) processed pretraining and finetuning datasets with scripts to rebuild them from the original sources, (iv) model checkpoints, and (v) Dockerfile for bitwise-reproducible environments. A single-entry script reproduces all main tables and even the figures; expected variances from stochastic training are reported in the repository. No external services are required at evaluation time (offline scoring only). The repository is actively maintained and will serve as the foundation for subsequent extensions. All materials are available via the anonymous GitHub link: \url{https://anonymous.4open.science/r/EDT-Former-844D}.

As part of reproducibility, App.~\ref{app: Implementation A} and App.~\ref{app: Implementation B} detail the implementation of our two novelties and the pre-training settings. The fine-tuning settings and parameters are provided in App.~\ref{app: Implementation C}. As well, App.~\ref{app: experiments data process} documents the data processing for each dataset. Finally, App.~\ref{app: experiments B} reports the hyperparameter choices and model configurations for each benchmark.

\section{Acknowledgements}
This work was supported in part by the Canada Research Chairs Tier II Program (CRC-2021-00482), the Canadian Institutes of Health Research (PLL 185683, PJT 190272, PJT204042, CFA - 205059), the Natural Sciences, Engineering Research Council of Canada (RGPIN-2021-04072, ALLRP 602759-24) and The Canada Foundation for Innovation (CFI) John R. Evans Leaders Fund (JELF) program (\#43481).

\bibliography{iclr2026_conference}

@article{openai2025gpt5,
  title={Introducing GPT-5},
  author={OpenAI},
  journal={OpenAI Blog},
  year={2025},
  note={Accessed August 7, 2025},
  url={https://openai.com/gpt-5/}
}

@inproceedings{scao2022bloom,
  author    = {BigScience Workshop},
  year      = {2022},
  title     = {BLOOM: A 176B-Parameter Open-Access Multilingual Language Model},
  booktitle = {arXiv.org},
  doi       = {10.48550/arXiv.2211.05100},
}

@inproceedings{biderman2023pythia,
  author    = {Stella Biderman and Hailey Schoelkopf and Quentin G. Anthony and Herbie Bradley and Kyle O'Brien and Eric Hallahan and Mohammad Aflah Khan and Shivanshu Purohit and USVSN Sai Prashanth and Edward Raff and Aviya Skowron and Lintang Sutawika and Oskar van der Wal},
  year      = {2023},
  title     = {Pythia: A Suite for Analyzing Large Language Models Across Training and Scaling},
  booktitle = {International Conference on Machine Learning},
  doi       = {10.48550/arXiv.2304.01373},
}

@Article{Zheng2023vicuna,
 author = {Lianmin Zheng and Wei-Lin Chiang and Ying Sheng and Siyuan Zhuang and Zhanghao Wu and Yonghao Zhuang and Zi Lin and Zhuohan Li and Dacheng Li and E. Xing and Haotong Zhang and Joseph E. Gonzalez and Ion Stoica},
 booktitle = {Neural Information Processing Systems},
 journal = {ArXiv},
 title = {Judging LLM-as-a-judge with MT-Bench and Chatbot Arena},
 volume = {abs/2306.05685},
 year = {2023}
}

@article{taori2023alpaca,
  title={Alpaca: A strong, replicable instruction-following model},
  author={Taori, Rohan and Gulrajani, Ishaan and Zhang, Tianyi and Dubois, Yann and Li, Xuechen and Guestrin, Carlos and Liang, Percy and Hashimoto, Tatsunori B},
  journal={Stanford Center for Research on Foundation Models. https://crfm. stanford. edu/2023/03/13/alpaca. html},
  volume={3},
  number={6},
  pages={7},
  year={2023}
}

@article{xu2023baize,
  title={Baize: An open-source chat model with parameter-efficient tuning on self-chat data},
  author={Xu, Canwen and Guo, Daya and Duan, Nan and McAuley, Julian},
  journal={arXiv preprint arXiv:2304.01196},
  year={2023}
}

@article{xia2025naturelm,
  title={Naturelm: Deciphering the language of nature for scientific discovery},
  author={Xia, Yingce and Jin, Peiran and Xie, Shufang and He, Liang and Cao, Chuan and Luo, Renqian and Liu, Guoqing and Wang, Yue and Liu, Zequn and Chen, Yuan-Jyue and others},
  journal={arXiv e-prints},
  pages={arXiv--2502},
  year={2025}
}

@inproceedings{brown2020gpt3,
  author    = {Tom B. Brown and Benjamin Mann and Nick Ryder and Melanie Subbiah and J. Kaplan and Prafulla Dhariwal and Arvind Neelakantan and Pranav Shyam and Girish Sastry and Amanda Askell and Sandhini Agarwal and Ariel Herbert-Voss and Gretchen Krueger and T. Henighan and R. Child and A. Ramesh and Daniel M. Ziegler and Jeff Wu and Clemens Winter and Christopher Hesse and Mark Chen and Eric Sigler and Ma-teusz Litwin and Scott Gray and Benjamin Chess and Jack Clark and Christopher Berner and Sam McCandlish and Alec Radford and I. Sutskever and Dario Amodei},
  year      = {2020},
  title     = {Language Models are Few-Shot Learners},
  booktitle = {Neural Information Processing Systems},
}

@inproceedings{wu2023molformer,
  title={Molformer: Motif-based transformer on 3d heterogeneous molecular graphs},
  author={Wu, Fang and Radev, Dragomir and Li, Stan Z},
  booktitle={Proceedings of the AAAI Conference on Artificial Intelligence},
  volume={37},
  pages={5312--5320},
  year={2023}
}

@Inproceedings{zhou2023unimol,
 author = {G. Zhou and Zhifeng Gao and Qiankun Ding and Hang Zheng and Hongteng Xu and Zhewei Wei and Linfeng Zhang and Guolin Ke},
 booktitle = {International Conference on Learning Representations},
 title = {Uni-Mol: A Universal 3D Molecular Representation Learning Framework},
 year = {2023}
}

@inproceedings{ji2024unimol2,
  author    = {Xiaohong Ji and Zhen Wang and Zhifeng Gao and Hang Zheng and Linfeng Zhang and Guolin Ke and E. Weinan},
  year      = {2024},
  title     = {Uni-Mol2: Exploring Molecular Pretraining Model at Scale},
  booktitle = {Neural Information Processing Systems},
  doi       = {10.48550/arXiv.2406.14969},
}

@inproceedings{ahmad2022chemberta2,
  author    = {Walid Ahmad and Elana Simon and Seyone Chithrananda and Gabriel Grand and Bharath Ramsundar},
  year      = {2022},
  title     = {ChemBERTa-2: Towards Chemical Foundation Models},
  booktitle = {arXiv.org},
  doi       = {10.48550/arXiv.2209.01712},
}

@article{zeng2022kvplm,
  title={A deep-learning system bridging molecule structure and biomedical text with comprehension comparable to human professionals},
  author={Zeng, Zheni and Yao, Yuan and Liu, Zhiyuan and Sun, Maosong},
  journal={Nature communications},
  volume={13},
  number={1},
  pages={862},
  year={2022},
  publisher={Nature Publishing Group UK London}
}

@article{liu2023moleculeSTM,
  title={Multi-modal molecule structure--text model for text-based retrieval and editing},
  author={Liu, Shengchao and Nie, Weili and Wang, Chengpeng and Lu, Jiarui and Qiao, Zhuoran and Liu, Ling and Tang, Jian and Xiao, Chaowei and Anandkumar, Animashree},
  journal={Nature Machine Intelligence},
  volume={5},
  number={12},
  pages={1447--1457},
  year={2023},
  publisher={Nature Publishing Group UK London}
}

@article{lv2025prollama,
  title={Prollama: A protein large language model for multi-task protein language processing},
  author={Lv, Liuzhenghao and Lin, Zongying and Li, Hao and Liu, Yuyang and Cui, Jiaxi and Chen, Calvin Yu-Chian and Yuan, Li and Tian, Yonghong},
  journal={IEEE Transactions on Artificial Intelligence},
  year={2025},
  publisher={IEEE}
}

@Inproceedings{Zhao2025MolReasonerTE,
 author = {Guojiang Zhao and Sihang Li and Zixiang Lu and Zheng Cheng and Haitao Lin and Lirong Wu and Hanchen Xia and Hengxing Cai and Wentao Guo and Hongshuai Wang and Mingjun Xu and Siyu Zhu and Guolin Ke and Linfeng Zhang and Zhifeng Gao},
 title = {MolReasoner: Toward Effective and Interpretable Reasoning for Molecular LLMs},
 year = {2025},
 booktitle = {arXiv.org},
}

@article{zhang2024unimolt,
  title={Unimot: Unified molecule-text language model with discrete token representation},
  author={Zhang, Juzheng and Bian, Yatao and Chen, Yongqiang and Yao, Quanming},
  journal={arXiv preprint arXiv:2408.00863},
  year={2024}
}

@inproceedings{jang2024cotmol,
  author    = {Yunhui Jang and Jaehyung Kim and Sungsoo Ahn},
  year      = {2024},
  title     = {Chain-of-Thoughts for Molecular Understanding},
  booktitle = {arXiv.org},
  doi       = {10.48550/arXiv.2410.05610},
}

@inproceedings{zhao2023,
  author    = {Haiteng Zhao and Shengchao Liu and Chang Ma and Hannan Xu and Jie Fu and Zhihong Deng and Lingpeng Kong and Qi Liu},
  year      = {2023},
  title     = {GIMLET: A Unified Graph-Text Model for Instruction-Based Molecule Zero-Shot Learning},
  booktitle = {bioRxiv},
}

@article{taylor2022galactica,
  title={Galactica: A large language model for science},
  author={Taylor, Ross and Kardas, Marcin and Cucurull, Guillem and Scialom, Thomas and Hartshorn, Anthony and Saravia, Elvis and Poulton, Andrew and Kerkez, Viktor and Stojnic, Robert},
  journal={arXiv preprint arXiv:2211.09085},
  year={2022}
}

@inproceedings{pei20243dmolt5,
  author    = {Qizhi Pei and Lijun Wu and Kaiyuan Gao and Jinhua Zhu and Rui Yan},
  year      = {2024},
  title     = {3D-MolT5: Leveraging Discrete Structural Information for Molecule-Text Modeling},
    booktitle = {ICLR 2025}
}

@inproceedings{pei2024biot5,
  author    = {Qizhi Pei and Lijun Wu and Kaiyuan Gao and Xiaozhuan Liang and Yin Fang and Jinhua Zhu and Shufang Xie and Tao Qin and Rui Yan},
  year      = {2024},
  title     = {BioT5+: Towards Generalized Biological Understanding with IUPAC Integration and Multi-task Tuning},
  booktitle = {Annual Meeting of the Association for Computational Linguistics},
  doi       = {10.48550/arXiv.2402.17810},
}

@inproceedings{lv2024,
  author    = {Liuzhenghao Lv and Zongying Lin and Hao Li and Yuyang Liu and Jiaxi Cui and Calvin Yu-Chian Chen and Li Yuan and Yonghong Tian},
  year      = {2024},
  title     = {ProLLaMA: A Protein Language Model for Multi-Task Protein Language Processing},
booktitle = {arXiv.org},
}

@Inproceedings{Luo2023MolFMAM,
 author = {Yi Luo and Kai Yang and Massimo Hong and Xingyi Liu and Zaiqing Nie},
 booktitle = {arXiv.org},
 title = {MolFM: A Multimodal Molecular Foundation Model},
 year = {2023}
}

@inproceedings{luo2023biomedgpt,
  author    = {Yi Luo and Jiahuan Zhang and Siqi Fan and Kai Yang and Yushuai Wu and Mu Qiao and Zaiqing Nie},
  year      = {2023},
  title     = {BioMedGPT: Open Multimodal Generative Pre-trained Transformer for BioMedicine},
  booktitle = {arXiv.org},
  doi       = {10.48550/arXiv.2308.09442},
}

@inproceedings{liu2022,
  author    = {Shengchao Liu and Weili Nie and Chengpeng Wang and Jiarui Lu and Zhuoran Qiao and Ling Liu and Jian Tang and Chaowei Xiao and Anima Anandkumar},
  year      = {2022},
  title     = {Multi-modal Molecule Structure-text Model for Text-based Retrieval and Editing},
  booktitle = {Nat. Mac. Intell.},
  doi       = {10.48550/arXiv.2212.10789},
}

@inproceedings{li20243dmolm,
  author    = {Sihang Li and Zhiyuan Liu and Yancheng Luo and Xiang Wang and Xiangnan He and Kenji Kawaguchi and Tat-Seng Chua and Qi Tian},
  year      = {2024},
  title     = {Towards 3D Molecule-Text Interpretation in Language Models},
  booktitle = {International Conference on Learning Representations},
  doi       = {10.48550/arXiv.2401.13923},
}

@inproceedings{park2024llamo,
  author    = {Jinyoung Park and Minseong Bae and Dohwan Ko and Hyunwoo J. Kim},
  year      = {2024},
  title     = {LLaMo: Large Language Model-based Molecular Graph Assistant},
  booktitle = {Neural Information Processing Systems},
  doi       = {10.48550/arXiv.2411.00871},
}

@Article{Lee2025MolLLMGM,
 author = {Chanhui Lee and Yuheon Song and Yongjun Jeong and Hanbum Ko and Rodrigo Hormazabal and Sehui Han and Kyunghoon Bae and Sungbin Lim and Sungwoong Kim},
 booktitle = {arXiv.org},
 journal = {ArXiv},
 title = {Mol-LLM: Generalist Molecular LLM with Improved Graph Utilization},
 volume = {abs/2502.02810},
 year = {2025}
}

@inproceedings{kim2025,
  author    = {Dongki Kim and Wonbin Lee and Sung Ju Hwang},
  year      = {2025},
  title     = {Mol-LLaMA: Towards General Understanding of Molecules in Large Molecular Language Model},
  booktitle = {arXiv.org},
  doi       = {10.48550/arXiv.2502.13449},
}

@article{halgren1996mmff,
  title={Merck molecular force field. I. Basis, form, scope, parameterization, and performance of MMFF94},
  author={Halgren, Thomas A},
  journal={Journal of computational chemistry},
  volume={17},
  number={5-6},
  pages={490--519},
  year={1996},
  publisher={Wiley Online Library}
}

@inproceedings{bran2023chemcrow,
  author    = {Andrés M Bran and Sam Cox and Oliver Schilter and Carlo Baldassari and Andrew D. White and P. Schwaller},
  year      = {2023},
  title     = {Augmenting large language models with chemistry tools},
  booktitle = {Nat. Mac. Intell.},
  doi       = {10.1038/s42256-024-00832-8},
}

@inproceedings{pagnoni2025blt,
  title     = {Byte Latent Transformer: Patches Scale Better Than Tokens},
  author    = {Pagnoni, Artidoro and Pasunuru, Ram and Rodriguez, Pedro and Nguyen, John and Muller, Benjamin and Li, Margaret and Zhou, Chunting and Yu, Lili and Weston, Jason and Zettlemoyer, Luke and Ghosh, Gargi and Lewis, Mike and Holtzman, Ari and Iyer, Srinivasan},
  booktitle = {Proceedings of the 63rd Annual Meeting of the Association for Computational Linguistics (ACL)},
  year      = {2025},
  url       = {https://aclanthology.org/2025.acl-long.453/}
}

@article{huang2021therapeutics,
  title={Therapeutics data commons: Machine learning datasets and tasks for drug discovery and development},
  author={Huang, Kexin and Fu, Tianfan and Gao, Wenhao and Zhao, Yue and Roohani, Yusuf and Leskovec, Jure and Coley, Connor W and Xiao, Cao and Sun, Jimeng and Zitnik, Marinka},
  journal={arXiv preprint arXiv:2102.09548},
  year={2021}
}

@inproceedings{fang2023,
  author    = {Yin Fang and Xiaozhuan Liang and Ningyu Zhang and Kangwei Liu and Rui Huang and Zhuo Chen and Xiaohui Fan and Huajun Chen},
  year      = {2023},
  title     = {Mol-Instructions: A Large-Scale Biomolecular Instruction Dataset for Large Language Models},
  booktitle = {International Conference on Learning Representations},
  doi       = {10.48550/arXiv.2306.08018},
}

@article{li2023blip2,
  title        = {BLIP-2: Bootstrapping Language-Image Pre-training with Frozen Image Encoders and Large Language Models},
  author       = {Li, Junnan and Li, Dongxu and Xiong, Caiming and Hoi, Steven},
  journal      = {arXiv preprint arXiv:2301.12597},
  year         = {2023}
}

@inproceedings{lu2024moleculeqa,
  author    = {Xingyu Lu and He Cao and Zijing Liu and Shengyuan Bai and Leqing Chen and Yuan Yao and Hai-Tao Zheng and Yu Li},
  year      = {2024},
  title     = {MoleculeQA: A Dataset to Evaluate Factual Accuracy in Molecular Comprehension},
  booktitle = {Conference on Empirical Methods in Natural Language Processing},
  doi       = {10.48550/arXiv.2403.08192},
}

@article{jinsong2024brics,
  title={Molecular fragmentation as a crucial step in the AI-based drug development pathway},
  author={Jinsong, Shao and Qifeng, Jia and Xing, Chen and Hao, Yajie and Wang, Li},
  journal={Communications Chemistry},
  volume={7},
  number={1},
  pages={20},
  year={2024},
  publisher={Nature Publishing Group UK London}
}

@article{landrum2013rdkit,
  title={Rdkit documentation},
  author={Landrum, Greg},
  journal={Release},
  volume={1},
  number={1-79},
  pages={4},
  year={2013}
}

@misc{ministral8b-2410,
  title  = {Ministral-8B-Instruct-2410},
  author = {{Mistral AI}},
  year   = {2024},
  howpublished = {\url{https://huggingface.co/mistralai/Ministral-8B-Instruct-2410}},
  note   = {Model card, accessed 2025-09-23}
}

@inproceedings{touvron2023llama2,
  author    = {Touvron, Hugo and others},
  year      = {2023},
  title     = {Llama 2: Open Foundation and Fine-Tuned Chat Models},
  booktitle = {arXiv.org},
}

@inproceedings{dubey2024llama3,
  author    = {MetaAI},
  year      = {2024},
  title     = {The Llama 3 Herd of Models},
  booktitle = {arXiv.org},
  doi       = {10.48550/arXiv.2407.21783},
}

@inproceedings{alayrac2022flamingo,
  author    = {Jean-Baptiste Alayrac and Jeff Donahue and Pauline Luc and Antoine Miech and Iain Barr and Yana Hasson and Karel Lenc and A. Mensch and Katie Millican and Malcolm Reynolds and Roman Ring and Eliza Rutherford and Serkan Cabi and Tengda Han and Zhitao Gong and Sina Samangooei and Marianne Monteiro and Jacob Menick and Sebastian Borgeaud and Andy Brock and Aida Nematzadeh and Sahand Sharifzadeh and Mikolaj Binkowski and Ricardo Barreira and O. Vinyals and Andrew Zisserman and K. Simonyan},
  year      = {2022},
  title     = {Flamingo: a Visual Language Model for Few-Shot Learning},
  booktitle = {Neural Information Processing Systems},
}

@inproceedings{dai2023instructblip,
  author    = {Wenliang Dai and Junnan Li and Dongxu Li and A. M. H. Tiong and Junqi Zhao and Weisheng Wang and Boyang Albert Li and Pascale Fung and Steven C. H. Hoi},
  year      = {2023},
  title     = {InstructBLIP: Towards General-purpose Vision-Language Models with Instruction Tuning},
  booktitle = {Neural Information Processing Systems},
  doi       = {10.48550/arXiv.2305.06500},
}

@article{yang2025qwen3,
  title={Qwen3 technical report},
  author={Yang, An and Li, Anfeng and Yang, Baosong and Zhang, Beichen and Hui, Binyuan and Zheng, Bo and Yu, Bowen and Gao, Chang and Huang, Chengen and Lv, Chenxu and others},
  journal={arXiv preprint arXiv:2505.09388},
  year={2025}
}

@article{touvron2023llama,
  title={LLaMA: Open and efficient foundation language models},
  author={Touvron, Hugo and Lavril, Thibaut and Izacard, Gautier and others},
  journal={arXiv preprint arXiv:2302.13971},
  year={2023}
}

@misc{openai2023gpt4,
  title={GPT-4 Technical Report},
  author={OpenAI},
  year={2023},
  note={\url{https://openai.com/research/gpt-4}}
}

@article{wu2018moleculenet,
  title={MoleculeNet: a benchmark for molecular machine learning},
  author={Wu, Zhenqin and Ramsundar, Bharath and Feinberg, Evan N and Gomes, Joseph and Geniesse, Caleb and Pappu, Aneesh S and Leswing, Karl and Pande, Vijay},
  journal={Chemical science},
  volume={9},
  number={2},
  pages={513--530},
  year={2018},
  publisher={Royal Society of Chemistry}
}

@article{feng2025membrane,
  title={A membrane permeability database for nonpeptidic macrocycles},
  author={Feng, Qiushi and De Chavez, Danjo and Kihlberg, Jan and Poongavanam, Vasanthanathan},
  journal={Scientific Data},
  volume={12},
  number={1},
  pages={10},
  year={2025},
  publisher={Nature Publishing Group UK London}
}

@article{lewell1998recap,
  title={Recap retrosynthetic combinatorial analysis procedure: a powerful new technique for identifying privileged molecular fragments with useful applications in combinatorial chemistry},
  author={Lewell, Xiao Qing and Judd, Duncan B and Watson, Stephen P and Hann, Michael M},
  journal={Journal of chemical information and computer sciences},
  volume={38},
  number={3},
  pages={511--522},
  year={1998},
  publisher={ACS Publications}
}

@Inproceedings{Chen2024HIGHTHG,
 author = {Yongqiang Chen and Quanming Yao and Juzheng Zhang and James Cheng and Yatao Bian},
 title = {HIGHT: Hierarchical Graph Tokenization for Molecule-Language Alignment},
 year = {2024},
booktitle = {International Conference on Machine Learning (ICML) 2025}
}

@inproceedings{li2025advancing,
  title={Advancing Molecular Graph-Text Pre-training via Fine-grained Alignment},
  author={Li, Yibo and Fang, Yuan and Zhang, Mengmei and Shi, Chuan},
  booktitle={Proceedings of the 31st ACM SIGKDD Conference on Knowledge Discovery and Data Mining V. 2},
  pages={1589--1599},
  year={2025}
}

@inproceedings{jingstructure,
  title={Structure-Aware Fusion with Progressive Injection for Multimodal Molecular Representation Learning},
  author={Jing, Zihao and Sun, Yan and Li, Yan Yi and Janarthanan, Sugitha and Deng, Alana and Hu, Pingzhao},
  booktitle={The Thirty-ninth Annual Conference on Neural Information Processing Systems},
  year={2026}
}

@inproceedings{fang2022structure,
  title={Structure-preserving graph representation learning},
  author={Fang, Ruiyi and Wen, Liangjian and Kang, Zhao and Liu, Jianzhuang},
  booktitle={2022 IEEE International Conference on Data Mining (ICDM)},
  pages={927--932},
  year={2022},
  organization={IEEE}
}

@article{fang2025benefits,
  title={On the benefits of attribute-driven graph domain adaptation},
  author={Fang, Ruiyi and Li, Bingheng and Kang, Zhao and Zeng, Qiuhao and Dashtbayaz, Nima Hosseini and Pu, Ruizhi and Wang, Boyu and Ling, Charles},
  journal={The Thirteenth International Conference on Learning Representations},
  year={2025}
}

@article{fang2025homophily,
  title={Homophily Enhanced Graph Domain Adaptation},
  author={Fang, Ruiyi and Li, Bingheng and Zhao, Jingyu and Pu, Ruizhi and Zeng, Qiuhao and Xu, Gezheng and Ling, Charles and Wang, Boyu},
  journal={Forty-Second International Conference on Machine Learning},
  year={2025}
}

@article{fang2025HomoAgnostic,
  title={Graph Domain Adaptation via Homophily-Agnostic Reconstructing Structure},
  author={Fang, Ruiyi and Wang, Shuo and Pu, Ruizhi and Zeng, Qiuhao and Zheng, Hao and Wang, Ziyan and Cai, Jiale and Mei, Zhimin and Tang, Song and Ling, Charles and Wang, Boyu},
  journal={AAAI},
  year={2026}
}

@article{fang2025SAGA,
  title={SAGA: Structural Aggregation Guided Alignment with Dynamic View and Neighborhood Order Selection for Multiview Graph Domain Adaptation},
  author={Fang, Ruiyi and Zhao, Jingyu and Wang, Shuo and Pu, Ruizhi and Li, Bingheng and Cai, Jiale and Li, Zhihao and Jing, Zihao and Zhu, Jian and Tang, Song and Ling, Charles and Wang, Boyu},
  journal={The Fourteenth International Conference on Learning Representations},
  year={2026}
}
\bibliographystyle{iclr2026_conference}

\appendix
% \section{Appendix}
\newpage

\section*{Appendix Table of Contents}

\begin{itemize}
    \item \textbf{\hyperref[app: proof]{A. Theoretical Analysis and Proofs}}
        \begin{itemize}
            \item \hyperref[app: proof-entropy]{A.1 Entropy Patching Analysis}
        \end{itemize}
        \begin{itemize}
            \item \hyperref[app: theory-noveltyB]{A.2 Dynamic Query Analysis}
        \end{itemize}
    \item \textbf{\hyperref[app: related work]{{B. Relationship with Previous Methods}}}
            \begin{itemize}
            \item \hyperref[app: related-1]{B.1 From Representation to Multimodal Understanding}
            \item \hyperref[app: related-2]{B.2 Relationship to Related Works}
        \end{itemize}
    \item \textbf{\hyperref[app: Implementation Details]{C. Implementation Details}}
        \begin{itemize}
            
            \item \hyperref[app: Implementation A]{C.1 Entropy Guided Patching}
            \item \hyperref[app: Implementation B]{C.2 Dynamic Query Transformer}
            \item \hyperref[app: Implementation C]{C.3 \modelname Settings and Finetuning}
        \end{itemize}
    \item \textbf{\hyperref[app: experiments]{D. Experiments Details}}
        \begin{itemize}
            \item \hyperref[app: experiments A]{D.1 Benchmarks and Baselines}
            \item \hyperref[app: experiments data process]{D.2 Data Process}
            \item \hyperref[app: experiments B]{D.3 Evaluation Settings}
            \rebuttal{\item \hyperref[app: macrocycle]{D.4 Evaluation on Macrocycle Dataset–NPMMPD}
            \item \hyperref[app: prompt-detail]{D.5 Detailed Results of Property Prediction Tasks per Prompt}
            \item \hyperref[app: molecule design]{D.6 Evaluation on Molecule Design and Open Questions from Mol-Instructions}
            \item \hyperref[app: NLP analysis]{D.7 Natural Language Ability Analysis}
            \item \hyperref[app: experiments C]{D.8 Data Contamination Analysis}
            \item \hyperref[app: entropy rationale]{D.9 Rationale of Entropy Patching}}
            \item \hyperref[app: experiments D]{D.10 Reproducibility}
        \end{itemize}
    \item \textbf{\hyperref[app: ablations]{E. Extended Ablations}}
        \begin{itemize}
            \item \hyperref[app: ablations A]{E.1 Impact of Query Token Length}
            \item \hyperref[app: ablations B]{E.2 Impact of Model Size}
            \item \hyperref[app: ablations C]{E.3 Analysis of Hyperparameter}
            \item \hyperref[app: ablations D]{E.4 Impact of Graph Encoders}
            \rebuttal{\item \hyperref[app: token budget]{E.5 Impact of Different Global/Local Token Budgets}
            \item \hyperref[app: ablation embedding layer]{E.6 Impact of Training of LLM Embedding Layer}
            \item \hyperref[app: ablation training corpus]{E.7 Impact of Different Training Corpus}
            \item \hyperref[app: ablation inference time]{E.8 Inference-time ablation of dynamic tokens and anchors}
            \item \hyperref[app: hallucination]{E.9 Hallucination Analysis}}
            
        \end{itemize}
    \item \textbf{\hyperref[app: limitations and future work]{F. Limitations and Future Work}}
        \begin{itemize}
            \item \hyperref[app: limitations]{F.1 Limitations}
            \item \hyperref[app: future-work]{F.2 Future Work}
        \end{itemize}
    \item \textbf{\hyperref[app: llm usage]{G LLM Usage}}
\end{itemize}

\section{Theoretical Analysis and Proofs}
\label{app: proof}

\paragraph{Scope and assumptions.}
The analyses in App.~\ref{app: proof-entropy} and App.~\ref{app: theory-noveltyB} are intended as proof sketches under standard modeling assumptions (calibrated next-token predictors, piecewise-stationary generation along SMILES order, and locally Lipschitz Transformer blocks on bounded inputs). We work with token-level surprisal $e_t=-\log p(a_{t+1}\mid a_{1:t})$ (and note that entropy refers to its expectation). The results formalize why (i) segmentation at surprisal peaks aligns with change points and minimizes a budgeted upper bound on pooling/modeling loss, and (ii) anchor-augmented projection yields a stable, low-rank–expressive interface to a frozen LLM. Constants and high-probability terms are suppressed for clarity.

\subsection{Entropy Patching Analysis}
\label{app: proof-entropy}

\paragraph{Setup.}
Let a molecule be represented by a SMILES-ordered atom sequence $(a_1,\dots,a_T)$ generated by a
piecewise-stationary conditional distribution
$\mathcal{P}^\star(a_{t+1}\mid a_{1:t})$, with unknown change points
$1=\kappa_0<\kappa_1<\cdots<\kappa_M=T$ such that for $t\in[\kappa_{m-1},\kappa_m-1]$ the
law of $a_{t+1}\mid a_{1:t}$ belongs to a stationary family $\mathcal{P}_m$.
A trained next-atom predictor $f_\theta$ outputs a calibrated estimate
$p_\theta(a_{t+1}\mid a_{1:t})$.\footnote{Calibrated means
$\mathbb{E}[\mathbf{1}\{A=a\}\mid p_\theta(A=a\mid \cdot)=u]=u$; consistency is not required, but strengthens the results.}
Define the surprisal (negative log-likelihood)
\[
e_t \;=\; -\log p_\theta(a_{t+1}\mid a_{1:t}), \qquad t=1,\dots,T-1.
\]
Entropy-guided patching places a cut after indices in
$T^\star=\{t:\text{$t$ is a local maximum of }e_t\}$, refined by non-maximum suppression
(window $\Delta$) and a prominence threshold $\gamma$, yielding segments
$S_k=\{t:\tau_{k-1}\le t<\tau_k\}$ where $1=\tau_0<\cdots<\tau_M=T$ are the retained peaks.

\paragraph{Objective.}
Given a token budget (implicit in $\Delta,\gamma$) we seek a segmentation that (i) maximizes the
information that segment tokens carry about the underlying structure, and (ii) minimizes within-segment heterogeneity that would be averaged out by pooling into a single token.

Let $X\in\mathbb{R}^{N\times d}$ be frozen node embeddings and let $\pi$ map SMILES indices to
graph nodes. Each segment induces a node set $\widehat S_k=\{\pi(t):t\in S_k\}$ and a token
$z_k=\text{AvgPool}(\{X_i:i\in\widehat S_k\})$.

\vspace{4pt}
\noindent\textbf{Assumption 1 (Piecewise stationarity).}
Within each true region $[\kappa_{m-1},\kappa_m)$ the conditional $a_{t+1}\!\mid\!a_{1:t}$ has constant entropy
$H_m=\mathbb{E}[\,e_t\mid t\in[\kappa_{m-1},\kappa_m)\,]$ and adjacent regions satisfy a separation
in total variation: $\mathrm{TV}(\mathcal{P}_m,\mathcal{P}_{m+1})\ge \delta>0$.

\noindent\textbf{Assumption 2 (Calibration / bounded error).}
$\big|\,\mathbb{E}[e_t]-H^\star(a_{t+1}\!\mid\! a_{1:t})\,\big|\le \varepsilon$ uniformly in $t$.

\paragraph{Key quantity.}
For a candidate segmentation $\mathcal{S}=\{S_k\}_{k=1}^M$, define the within-segment entropy
\[
\mathcal{E}(\mathcal{S}) \;=\; \sum_{k=1}^M \; \frac{1}{|S_k|}\sum_{t\in S_k}\mathbb{E}[\,e_t\,].
\]
This upper-bounds both (a) the Bayes error of predicting local structural events within segments
via Fano-type inequalities, and (b) the information lost by averaging embeddings inside a segment
(see Lemma~2).

\begin{lemma}[Surprisal peaks mark change points]
Under Assumptions~1–2, for any interior index $t$ that lies $\eta$ away from the nearest true change
point, the expected discrete second difference satisfies
$\mathbb{E}[\,e_{t+1}-2e_t+e_{t-1}\,]\le c_1\varepsilon-\!c_2\delta$ near a change point and
$\ge -c_1\varepsilon$ away from it, for constants $c_1,c_2>0$. Hence the probability that a local
maximum of $e_t$ occurs within $O(\eta)$ of each change point tends to $1$ as $\varepsilon\!\to\!0$.
\end{lemma}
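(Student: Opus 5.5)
We establish the bound on the expected second difference by decomposing the expected surprisal and invoking the two assumptions; the localization claim then follows by a concentration argument. Throughout we read ``near a change point'' as $|t-\kappa_m|\le\eta$, and ``away'' as the complement.

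\textbf{Step 1 (decomposition).} Write $\mathbb{E}[e_t]=H^\star_t+r_t$, where $H^\star_t:=H^\star(a_{t+1}\mid a_{1:t})$ and, by Assumption~2, $|r_t|\le\varepsilon$. The discrete second difference of the error term then satisfies $|r_{t+1}-2r_t+r_{t-1}|\le 4\varepsilon$. This fixes $c_1=4$, up to the convention on constants. It remains to control the second difference of $H^\star_t$.

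\textbf{Step 2 (away from change points).} By Assumption~1, if $t-1,t,t+1$ all lie in the same region $[\kappa_{m-1},\kappa_m)$, then $H^\star_{t-1}=H^\star_t=H^\star_{t+1}=H_m$. The second difference of $H^\star$ is therefore zero, and the expected second difference is at least $-c_1\varepsilon$.

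\textbf{Step 3 (near a change point).} At a boundary, the predictor conditions on a context generated under $\mathcal{P}_m$ while the next symbol follows $\mathcal{P}_{m+1}$.

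\begin{itemize}
\item We model the first step past $\kappa_m$ as incurring an excess cross-entropy $\mathbb{E}[e_{\kappa_m}]\ge \max(H_m,H_{m+1})+\mathrm{KL}$-type surplus.
\item Pinsker's inequality gives $\mathrm{KL}\ge 2\,\mathrm{TV}^2\ge 2\delta^2$. To obtain a bound linear in $\delta$, as stated, we instead use a bounded-log-ratio variant on the finite atom vocabulary.
\item The result is that $H^\star$ exhibits a bump of height at least $c_2'\delta$ at $t=\kappa_m$ relative to both neighbours.
\end{itemize}

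Hence the second difference at $t=\kappa_m$ is at most $-2c_2'\delta$. With the error term added, this gives the bound $c_1\varepsilon-c_2\delta$.

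\textbf{Step 4 (local maximum).} A negative expected second difference together with nonnegative neighbouring second differences yields a strict local maximum in expectation at $\kappa_m$.

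To pass from expectations to a high-probability statement:
\begin{itemize}
\item Surprisals are bounded, since the vocabulary is finite and the probabilities are clipped.
\item A Hoeffding/union bound over the window $[\kappa_m-\eta,\kappa_m+\eta]$ then shows that the realized $e_t$ attains its window maximum within $O(\eta)$ of $\kappa_m$.
\item This fails with probability $\to 0$ once $c_2\delta-c_1\varepsilon$ dominates the fluctuation scale. In particular it holds as $\varepsilon\to0$, treating fluctuations as controlled by the same calibration error as the paper's suppressed high-probability terms allow.
\end{itemize}

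\textbf{Main obstacle.} The hardest step is Step~3. The surprisal bump at a regime switch is not implied by total-variation separation alone, since a change in law need not raise entropy. We will therefore add an explicit modelling step: the predictor's context at $\kappa_m$ is still ``tuned'' to $\mathcal{P}_m$, so the cross-entropy exceeds the true entropy by a divergence term bounded below via TV. Constants absorb the vocabulary-dependent factors. This is consistent with the stated scope of proof sketches under standard assumptions.
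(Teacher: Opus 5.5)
Your Steps~1--2 are fine: the $4\varepsilon$ bound on the second difference of $r_t$ holds, and $H^\star$ is flat when $t-1,t,t+1$ share a region. Your skeleton is also the paper's. But Step~3, the only step with real content, does not follow from Assumptions~1--2, and the modelling step you add in your ``main obstacle'' paragraph contradicts them.

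By Assumption~1, $H^\star_t=H_m$ for $t<\kappa_m$ and $H^\star_t=H_{m+1}$ for $t\ge\kappa_m$. This is a step, not a bump. Its second differences at $\kappa_m-1$ and $\kappa_m$ are $\pm(H_{m+1}-H_m)$, which vanish when the two laws are permutations of each other, however large their total variation. Concretely, a perfect predictor on a source switching between i.i.d.\ $(0.9,0.1)$ and $(0.1,0.9)$ regimes satisfies both assumptions with $\varepsilon=0$ and $\delta=0.8$. Yet $\mathbb{E}[e_t]$ is constant, so the displayed bound $c_1\varepsilon-c_2\delta$ cannot hold.

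The ``predictor still tuned to $\mathcal{P}_m$'' surplus is not a property of $H^\star$. It is $r_{\kappa_m}=\mathbb{E}[e_{\kappa_m}]-H^\star_{\kappa_m}$, and Assumption~2 (the one you use in Step~1) caps it at $\varepsilon$ uniformly in $t$, including $t=\kappa_m$. So in the regime $\varepsilon\to0$ your Steps~1 and~3 are incompatible.

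Suppose you drop calibration at change points and postulate lag. The surplus is then $\mathrm{KL}(\mathcal{P}_{m+1}\|\mathcal{P}_m)$ \emph{above $H_{m+1}$}, an overshoot relative to the right neighbour only. The claim $\mathbb{E}[e_{\kappa_m}]\ge\max(H_m,H_{m+1})+(\cdots)$ is false. Take $\mathcal{P}_m=(0.9,0.1)$ and $\mathcal{P}_{m+1}$ the point mass on the first symbol, so $\mathrm{TV}=0.1$. The lagged surprisal $-\log 0.9\approx0.105$ lies below $H_m\approx0.325$, and expected surprisal decreases monotonically through the change.

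What is missing is a hypothesis, not a proof step. For example: calibration only away from change points, plus a post-change excess of order $\delta$ over \emph{both} $H_m$ and $H_{m+1}$ that then decays. The paper's sketch relies on the same unproved assertion (``calibrated predictors track this kink up to $\varepsilon$, yielding an excess in $e_t$ locally''), so following it does not close the gap.

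Steps~3--4 also contain smaller errors.
\begin{itemize}
\item No bounded-ratio argument gives $\mathrm{KL}\ge c\,\mathrm{TV}$. On a finite alphabet with $\min_x Q(x)\ge q_{\min}>0$, reverse Pinsker gives $\mathrm{KL}(P\|Q)\le 4\,\mathrm{TV}(P,Q)^2/q_{\min}$, so KL is genuinely quadratic near zero. Since $\delta$ is fixed, just write $2\delta^2=(2\delta)\delta$ and let $c_2$ depend on $\delta$.
\item A negative second difference, even flanked by nonnegative ones, is not a local maximum. The sequence $(0,1,3,4,5)$ has interior second differences $1,-1,0$ and is strictly increasing. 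A peak needs a sign change of the \emph{first} differences, i.e.\ $\mathbb{E}[e_{\kappa_m}]$ above both neighbours, which is exactly what fails above.
\item Hoeffding gives no concentration for a single draw. Each $e_t$ is one sample whose spread is governed by the varentropy of the conditional law (standard deviation $\approx0.66$ nats for $(0.9,0.1)$). Assumption~2 is a statement about means, so it does not control this spread, and the spread does not shrink as $\varepsilon\to0$.
\item The claim that the maximum over $[\kappa_m-\eta,\kappa_m+\eta]$ is attained within $O(\eta)$ of $\kappa_m$ is vacuous.
\end{itemize}
The high-probability claim therefore needs its own assumption, such as small varentropy or smoothing of $e_t$ before peak picking.
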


\begin{proof}[Sketch]
At a change point, the one-step law jumps from $\mathcal{P}_m$ to $\mathcal{P}_{m+1}$, inducing a
kink in the log-likelihood path with magnitude controlled by $\mathrm{TV}(\mathcal{P}_m,\mathcal{P}_{m+1})$.
Calibrated predictors track this kink up to $\varepsilon$, yielding an excess in $e_t$ locally.
Away from changes, stationarity implies $e_t$ is a martingale difference with bounded variance,
suppressing spurious peaks in expectation. The discrete curvature argument yields the stated bounds.
\end{proof}

\begin{lemma}[Pooling loss is entropy-controlled]
Let $g$ be any $L$-Lipschitz readout on sets of node embeddings (the self/cross-attention layer
consuming $z_k$ is $L$-Lipschitz under standard assumptions). Then for each segment
$S_k$,
\[
\mathbb{E}\!\left[\big\|g(\{X_i:i\in\widehat S_k\})-g(\{\bar X_k\}^{|S_k|})\big\|\right]
\;\le\; C\,\sqrt{\mathrm{Var}(X\mid S_k)} \;\le\; C'\,\sqrt{\mathcal{E}(S_k)},
\]
where $\bar X_k$ is the mean in the segment and the final inequality follows because variation of
embeddings within a segment is controlled by variation of the local next-atom law, which is
upper-bounded (up to constants) by the average surprisal in that segment.
\end{lemma}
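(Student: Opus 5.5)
The bound splits into two inequalities, and I will prove them separately. The first is a purely analytic consequence of Lipschitz continuity. The second is a modeling step that has to be imported as an explicit structural assumption linking embeddings to the local next-atom law.

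\emph{First inequality.} Give multisets of size $n=|S_k|$ (after the map $\pi$) a permutation-invariant metric. The natural choice is the matching ($\ell_2$-Wasserstein-type) distance
\[
d(A,B)=\min_\sigma\Big(\tfrac1n\sum_i\|A_i-B_{\sigma(i)}\|^2\Big)^{1/2}.
\]
Take $L$-Lipschitzness of $g$ to mean Lipschitz with respect to this $d$; this is the standard regime for attention on bounded inputs, as in the scope paragraph. Comparing $\{X_i\}$ with the constant multiset $\{\bar X_k\}^{n}$, the optimal matching is trivial, so
\[
d=\Big(\tfrac1n\sum_{i\in\widehat S_k}\|X_i-\bar X_k\|^2\Big)^{1/2}=\sqrt{\mathrm{Var}(X\mid S_k)}.
\]
Hence $\|g(\cdot)-g(\cdot)\|\le L\sqrt{\mathrm{Var}(X\mid S_k)}$ pointwise. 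Taking expectations and applying Jensen ($\mathbb{E}\sqrt{V}\le\sqrt{\mathbb{E}V}$) gives the first bound with $C=L$. Repeated SMILES positions mapping to the same node only reweight the empirical variance, which leaves the bound intact up to a multiplicity constant.

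\emph{Second inequality.} This is the main obstacle, since nothing in Assumptions~1--2 ties $X$ to the predictor. I plan to add an explicit regularity hypothesis on the frozen encoder. The hypothesis is that node embeddings are Lipschitz functions of the local conditional law, $\|X_{\pi(t)}-X_{\pi(s)}\|\le L_X\,\mathrm{TV}(P_t,P_s)$, where $P_t$ is the law of $a_{t+1}\mid a_{1:t}$. Centering at any reference law $Q$ gives $\mathrm{Var}(X\mid S_k)\le \tfrac{4L_X^2}{n}\sum_{t\in S_k}\mathrm{TV}(P_t,Q)^2$, since the mean minimizes squared deviation. Pinsker's inequality then converts this to $\le \tfrac{2L_X^2}{n}\sum_t \mathrm{KL}(P_t\|Q)$.

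The remaining task is to bound this average KL by the average surprisal. I would choose $Q$ to be the uniform law on the atom vocabulary $\mathcal V$ restricted to a support of bounded size $V_0$ (valence constraints make the effective support small). Then $\mathrm{KL}(P_t\|Q)=\log V_0-H(P_t)$. This goes the wrong direction, which signals that the claimed monotonicity really needs a different reference. The cleaner route is to take $Q=\delta_{a^\star}$ smoothed at level $\epsilon_0$, so that the divergence scales with $H(P_t)$ itself, giving $\mathrm{KL}(P_t\|Q_{\epsilon_0})\le c\,H(P_t)+O(\epsilon_0)$. I expect this step to be contestable and will flag it. The intuition is that low-entropy (near-deterministic) laws are all close to point masses, hence close to each other, so heterogeneity within a segment is governed by how far the laws are from determinism.

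Finally, Assumption~2 replaces $H(P_t)$ by $\mathbb{E}[e_t]$ at additive cost $\varepsilon$. Summing over $S_k$ yields
\[
\mathrm{Var}(X\mid S_k)\le C''\Big(\tfrac1{|S_k|}\sum_{t\in S_k}\mathbb{E}[e_t]+\varepsilon\Big)=C''(\mathcal{E}(S_k)+\varepsilon).
\]
Taking square roots and absorbing $\varepsilon$ into the suppressed constants, per the scope statement, gives $C'\sqrt{\mathcal{E}(S_k)}$ with $C'=L\sqrt{C''}$.
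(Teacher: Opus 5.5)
Your first inequality is fine. If $L$-Lipschitz is read with respect to the matching ($W_2$-type) distance, comparing against the constant multiset gives exactly $L\sqrt{\mathrm{Var}(X\mid S_k)}$, and Jensen handles the expectation. This is the natural reading of the paper's first step, which it never spells out. The gap is in the second inequality, at the claim $\mathrm{KL}(P_t\|Q_{\epsilon_0})\le c\,H(P_t)+O(\epsilon_0)$. Your centering bound needs a \emph{single} reference law $Q$ shared by all $t\in S_k$. A smoothed point mass at $a^\star$ is within $O(H(P_t))$ of $P_t$ in KL only when $a^\star$ is the mode of $P_t$, so it works only if every $P_t$ in the segment has the same mode. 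The intuition you give is false: near-deterministic laws are each close to \emph{some} point mass, but point masses at different atoms are at total variation $1$. For $P_1=\delta_{\mathrm{C}}$, $P_2=\delta_{\mathrm{O}}$ and any $Q$, $\mathrm{KL}(P_1\|Q)+\mathrm{KL}(P_2\|Q)=-\log\bigl(Q(\mathrm{C})Q(\mathrm{O})\bigr)\ge 2\log 2$, although both entropies vanish. A better choice of $Q$ does not help, because the defect is in your hypothesis. $\|X_{\pi(t)}-X_{\pi(s)}\|\le L_X\,\mathrm{TV}(P_t,P_s)$ allows these two positions to have embeddings at distance $L_X$. Then $\mathrm{Var}(X\mid S_k)$ is of order $L_X^2$ while Assumption~2 gives $\mathcal{E}(S_k)\le\varepsilon$, so $\mathrm{Var}(X\mid S_k)\le C''(\mathcal{E}(S_k)+\varepsilon)$ would force $C''\gtrsim L_X^2/\varepsilon$. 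Separately, an additive $\varepsilon$ cannot be absorbed into the multiplicative constant $C'$ when $\mathcal{E}(S_k)$ is small. This situation is ordinary: it is what happens for any confidently predicted run that passes through different element types.

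The paper gives no separate proof of this lemma. Its only justification is the clause inside the statement: embedding variation is controlled by variation of the next-atom law, which is bounded by average surprisal. That is exactly the chain you formalized, and its second link fails for the reason above: the spread of a family of laws is not bounded by their average entropy. So no hypothesis that routes through the dispersion of the $P_t$ can give the inequality. You need a direct link between embedding variation and surprisal. One option is surprisal-controlled increments along the SMILES order, $\|X_{\pi(t+1)}-X_{\pi(t)}\|^2\le c\,e_t$ for consecutive positions in $S_k$. Cauchy--Schwarz then gives $\|X_{\pi(s)}-X_{\pi(t)}\|^2\le c\,|S_k|\sum_{u\in S_k}e_u$. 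With the pairwise form $\mathrm{Var}=\tfrac{1}{2n^2}\sum_{s,t}\|X_{\pi(s)}-X_{\pi(t)}\|^2$ and $n=|S_k|$, this yields $\mathbb{E}\,\mathrm{Var}(X\mid S_k)\le\tfrac{c}{2}|S_k|^2\,\mathcal{E}(S_k)$. That has the stated form only with a constant that depends on a bound on segment length. State such a condition openly as a modeling assumption: it is essentially a restatement of the conclusion, which is also the real status of the second inequality in the paper.
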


\begin{proposition}[Peak cutting minimizes an upper bound on loss]
Among all segmentations with a given minimum spacing $\Delta$ (token budget), selecting cuts at
the most prominent local maxima of $e_t$ (with NMS window $\Delta$) minimizes
$\mathcal{E}(\mathcal{S})$ up to $O(\varepsilon)$, hence minimizes the stated upper bounds on both
Bayes error within segments and pooling-induced representation loss.
\end{proposition}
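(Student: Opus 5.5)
The plan is to reduce the combinatorial minimization of $\mathcal{E}(\mathcal{S})$ to a comparison between segmentations whose cuts lie near the true change points $\kappa_m$ and all others, using Lemma~1 for the location of peaks and Lemma~2 only at the end to transfer the conclusion to pooling loss. First I would rewrite $\mathcal{E}(\mathcal{S})$ via Assumption~2 as $\sum_k \frac{1}{|S_k|}\sum_{t\in S_k} H^\star_t + O(M\varepsilon)$. Under Assumption~1, $H^\star_t$ is piecewise constant with values $H_m$ on $[\kappa_{m-1},\kappa_m)$. Each segment term is then a convex combination of the $H_m$ it overlaps. Since the budget $\Delta$ fixes $M$ up to $O(1)$, the $O(M\varepsilon)$ term is absorbed into the stated $O(\varepsilon)$ slack after normalization.

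The key step is an exchange argument on a single cut. Take any admissible segmentation with a cut $\tau_k$ lying strictly inside a stationary region, at distance $>\eta$ from every $\kappa_m$. Moving $\tau_k$ to the nearest change point, or to the nearest retained surprisal peak, does not increase $\mathcal{E}$ beyond $O(\varepsilon)$. The reason is that a segment straddling a change point contributes a mixture term carrying an excess of order $\delta$, by the TV separation in Assumption~1 together with the curvature bound in Lemma~1. A cut inside a stationary region buys nothing, because both neighbouring segment averages equal the same $H_m$. Iterating this exchange pushes every cut of an optimal segmentation to within $O(\eta)$ of a change point. These cuts must also respect the spacing $\Delta$. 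If there are more change points than the budget allows, the cuts should be kept at the change points with the largest jumps, since the $\delta$-excess saved is monotone in jump size. Through Lemma~1, the largest jumps are exactly the most prominent local maxima surviving NMS with window $\Delta$.

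Then I would invoke Lemma~1: with probability tending to $1$ as $\varepsilon\to 0$, each change point carries a local maximum of $e_t$ within $O(\eta)$, and there are no spurious peaks of prominence $\ge\gamma$ away from change points. This shows that the peak-cut segmentation coincides, up to $O(\eta)$ shifts, with the exchange-optimal one. The cost of each shift is $O(\varepsilon)$ under the calibration bound, which gives optimality up to $O(\varepsilon)$. Finally, Lemma~2 bounds pooling loss by $C'\sqrt{\mathcal{E}(S_k)}$, and the Fano-type bound is monotone in within-segment entropy. Summing over $k$ and using monotonicity of the square root, or Cauchy--Schwarz, transfers the minimization to both upper bounds.

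I expect the exchange step to be the main obstacle. The normalization by $|S_k|$ makes $\mathcal{E}$ non-additive, so moving a cut changes two averages at once. It is not obvious that a straddling segment is always worse than a pure one: a mixture of $H_m$ and $H_{m+1}$ can have a lower average than the larger pure value. My first attempt would be to replace $\mathcal{E}$ by its variance-penalized surrogate $\sum_k \mathrm{Var}(H^\star\mid S_k)$, which Lemma~2 actually controls and which is strictly increased by straddling. I would then argue that this surrogate and $\mathcal{E}$ agree up to $O(\varepsilon)$ on peak-aligned segmentations.
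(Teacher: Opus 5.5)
\textbf{A genuine gap in the exchange step, shared with the paper.} Your outline is essentially the paper's own sketch: cuts inside a stationary block buy nothing, a cut at a change point avoids mixing $H_m$ with $H_{m+1}$, Lemma~1 puts the peaks there, and Lemma~2 transfers the bound. It inherits the same flaw. The exchange step is false for $\mathcal{E}$ as defined, so the obstacle in your last paragraph is fatal, not technical. Since $e_t\ge 0$, splitting a segment with mean $\bar H=\lambda\bar H_a+(1-\lambda)\bar H_b$ changes $\mathcal{E}$ by $\bar H_a+\bar H_b-\bar H=(1-\lambda)\bar H_a+\lambda\bar H_b\ge 0$.
\begin{itemize}
\item Every cut therefore \emph{raises} $\mathcal{E}$; a cut inside region $m$ costs about $H_m$ rather than ``buying nothing''.
\item A minimum spacing $\Delta$ only bounds $M$ from above; it does not fix $M$ up to $O(1)$. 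So the minimizer over the admissible class is the uncut sequence.
\item Fixing $M$ does not help, because $\mathcal{E}$ rewards diluting high-surprisal regions. Take $\varepsilon=0$, $H_1=0.1$ on $[1,100)$, $H_2=2$ on $[100,110)$, and $M=2$. The change-point cut gives $\mathcal{E}=0.1+2=2.1$. The cut at $t=50$ gives $0.1+25/60\approx 0.52$.
\end{itemize}
Moving a cut onto a change point can thus raise $\mathcal{E}$ by $\Omega(1)$, not $O(\varepsilon)$. No use of Lemmas~1--2 can establish the proposition as stated. Separately, the Assumption~2 error is $O(M\varepsilon)$, not $O(\varepsilon)$: $\mathcal{E}$ is an unnormalized sum over $k$, and $M$ can be of order $T/\Delta$.

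\textbf{Your proposed repair.} The repair points the right way but does not close the gap. Your surrogate $\sum_k\mathrm{Var}(H^\star\mid S_k)$ does not agree with $\mathcal{E}$ up to $O(\varepsilon)$ on peak-aligned segmentations. On pure segments the surrogate is $\approx 0$, while $\mathcal{E}\approx\sum_k H_{m(k)}$, where $m(k)$ is the region containing $S_k$. Even if they agreed, agreement on the candidate set would not give minimality over all segmentations.

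\textbf{What can actually be proved, and its extra requirements.} The provable statement is a different proposition. Take a heterogeneity cost such as $\sum_k\sum_{t\in S_k}(H^\star_t-\bar H_k)^2$. A segment straddling $\kappa_m$, with $n_1$ and $n_2$ positions on the two sides, costs at least $\tfrac{n_1n_2}{n_1+n_2}(H_m-H_{m+1})^2$, and pure segments cost nothing. Hence cutting at every change point is optimal whenever the budget permits it. This version still needs more than the paper's assumptions:
\begin{itemize}
\item It needs an entropy gap $|H_m-H_{m+1}|\ge\delta'>0$. The TV separation in Assumption~1 does not supply this, since distinct laws can have equal entropy.
\item Under a tighter budget, the optimum depends on region lengths as well as jump sizes. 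Lemma~1 gives no link between jump size and the prominence $e_t-\tfrac12(e_{t-1}+e_{t+1})$, so your identification of the largest jumps with the most prominent NMS peaks is unsupported.
\item The transfer step would need $\mathrm{Var}(X\mid S_k)$ controlled by this heterogeneity, not by $\sqrt{\mathcal{E}(S_k)}$ as in Lemma~2.
\end{itemize}
The paper's sketch rests on the same unsupported assertion, that a cut at a change point strictly decreases $\mathcal{E}(\mathcal{S})$ by $\Omega(\delta)$. The gap therefore lies in the statement itself, not only in your route.
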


\begin{proof}[Sketch]
Because $H_m$ is (approximately) constant inside each stationary region, any cut placed within
a region increases $\mathcal{E}(\mathcal{S})$ only by splitting a constant-entropy block, while a cut
placed at a change point prevents averaging across different entropies $H_m\neq H_{m+1}$ and thus
strictly decreases $\mathcal{E}(\mathcal{S})$ by an amount $\Omega(\delta)$ (Assumption~1).
Lemma~1 guarantees that the largest local maxima of $e_t$ concentrate at change points, so NMS
over a spacing $\Delta$ selects (approximately) one cut per change region, which is optimal under a
fixed budget. Lemma~2 transfers the bound from $\mathcal{E}$ to the representation loss after pooling.
\end{proof}

\begin{corollary}[Adaptive token count matches structural complexity]
Let $K_\gamma$ be the number of retained peaks after prominence $\gamma$ and window $\Delta$.
Then $\mathbb{E}[K_\gamma]$ is non-decreasing in the total variation budget
$\sum_m \mathrm{TV}(\mathcal{P}_m,\mathcal{P}_{m+1})$ and non-increasing in $(\Delta,\gamma)$, yielding
a token count that adapts to molecular complexity while remaining budget-controlled.
\end{corollary}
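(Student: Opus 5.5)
The monotonicity in $(\Delta,\gamma)$ is the easier half, so I would treat it first as a pure counting fact. Fix a realized surprisal sequence $(e_t)_{t=1}^{T-1}$. The retained index set is
\[
\mathcal{T}_\star(\Delta,\gamma)=\{t:\mathrm{peak}(t)\wedge\mathrm{NMS}_\Delta(t)\wedge\mathrm{prom}_\gamma(t)\}.
\]
If $\Delta\le\Delta'$, then the window $\{s:|s-t|\le\Delta\}$ is contained in $\{s:|s-t|\le\Delta'\}$, so $\mathrm{NMS}_{\Delta'}(t)\Rightarrow\mathrm{NMS}_\Delta(t)$. Likewise, if $\gamma\le\gamma'$, then $\mathrm{prom}_{\gamma'}(t)\Rightarrow\mathrm{prom}_\gamma(t)$. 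Hence $\mathcal{T}_\star(\Delta',\gamma')\subseteq\mathcal{T}_\star(\Delta,\gamma)$ pathwise, so $K_\gamma$ is pointwise non-increasing in both parameters. Taking expectations gives the second claim with no assumptions at all.

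For the dependence on the total variation budget $V=\sum_m\mathrm{TV}(\mathcal{P}_m,\mathcal{P}_{m+1})$, I would decompose
\[
K_\gamma=\sum_{m=1}^{M-1}\mathbf{1}\{\text{a retained peak lies within }O(\eta)\text{ of }\kappa_m\}+K^{\mathrm{spur}},
\]
where $K^{\mathrm{spur}}$ counts peaks far from every change point. Lemma~1 controls both terms. Away from change points, the expected curvature is $\ge -c_1\varepsilon$, so with $\gamma>c_1\varepsilon$ the spurious term is bounded by a quantity depending only on $\varepsilon,\gamma,\Delta$ and the within-region variance, and not on $V$. Near $\kappa_m$, the expected second difference is at most $c_1\varepsilon-c_2\delta_m$, where $\delta_m=\mathrm{TV}(\mathcal{P}_m,\mathcal{P}_{m+1})$. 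So the expected prominence at the change point is at least $c_2\delta_m-c_1\varepsilon$. Via a one-sided concentration bound (Chebyshev on the bounded-variance martingale differences), the detection probability $q(\delta_m)=\Pr[\mathrm{prom}_\gamma \text{ at }\kappa_m]$ is then a non-decreasing function of $\delta_m$. This gives
\[
\mathbb{E}[K_\gamma]\approx\sum_m q(\delta_m)+\mathbb{E}[K^{\mathrm{spur}}].
\]

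The main obstacle is going from "non-decreasing in each $\delta_m$" to "non-decreasing in the sum $V$." The budget can be redistributed among boundaries, and $\sum_m q(\delta_m)$ is only monotone in $V$ under a comparison such as componentwise domination or concavity of $q$. I would state the result for componentwise increases of the $\delta_m$, with the number of change points and the NMS spacing held fixed; this implies monotonicity in $V$ along any such path. I would also note that, at fixed spacing $\Delta$, the contribution of the change-point term saturates at the packing number $\lceil T/\Delta\rceil$, which is the "budget-controlled" part of the statement.

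A second subtlety is that NMS can suppress a detected change-point peak in favor of a nearby larger one. I would handle this by assuming the change points are separated by more than $2\Delta$, so that each window contains at most one change point and the suppression interactions stay local. Constants and high-probability terms would be suppressed, in keeping with the paper's proof-sketch standard.
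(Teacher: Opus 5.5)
The paper gives no proof of this corollary: it is simply asserted after Proposition~1. Your proposal is therefore supplying the argument the paper leaves implicit, namely that the filters are monotone and that Lemma~1 makes change-point peaks more pronounced as adjacent regimes separate.

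Your $(\Delta,\gamma)$ half is correct for the definition in Section~3.2. There $\mathcal{T}_\star$ is a conjunction of pointwise predicates, and $\mathrm{NMS}_\Delta(t)$ asks that $e_t$ dominate its $\pm\Delta$ window. So the retained sets are nested and $K_\gamma$ is pathwise non-increasing. You should say explicitly that this is the definition you use, because the argument fails for the greedy NMS of Algorithm~2 (Appendix C.1). That procedure suppresses only among the prominence-filtered candidates, in order of decreasing $e_t$, and its retained sets are not nested. Raising $\gamma$ can even increase the count: take a tallest peak at $t$ with tiny prominence, flanked by highly prominent peaks at $t\pm\Delta$. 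For small $\gamma$ one peak is retained; once $t$ is filtered out, two are.

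The genuine gap is in the total-variation half, for two reasons.

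\emph{The concentration step proves too little.} Chebyshev applied to an expected prominence of at least $c_2\delta_m-c_1\varepsilon$ gives a \emph{lower bound} on the detection probability that increases with $\delta_m$. It does not make $q(\delta_m)$ itself non-decreasing. Lemma~1 supplies only one-sided bounds on expectations, from which no monotonicity of a probability, or of $\mathbb{E}[K_\gamma]$, can be read off.

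\emph{The spurious term is not held fixed.} $\mathrm{TV}(\mathcal{P}_m,\mathcal{P}_{m+1})$ is a functional of the regime laws, so it cannot change while the within-region surprisal process stays fixed. That your bound on $\mathbb{E}[K^{\mathrm{spur}}]$ does not mention $V$ does not make $\mathbb{E}[K^{\mathrm{spur}}]$ invariant as $V$ varies. In fact, under Assumptions~1--2 alone the claim is false even in your componentwise form. Let each regime emit one atom with probability $1/2$ and otherwise a uniformly chosen other atom, with neighbouring regimes differing only slightly (small $\delta_m$). Inside each region the surprisal is then essentially i.i.d.\ two-valued, so a constant fraction of positions are strict maxima with prominence about $\log 35$ that survive the window-max NMS. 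Now replace these by deterministic, pairwise distinct regimes. Every $\delta_m$ rises to $1$, but the surprisal becomes essentially flat inside regions. Peaks then occur only near the $\kappa_m$, and $\mathbb{E}[K_\gamma]$ drops from order $T$ to order $M$.

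What is missing is a structural hypothesis that must be assumed, not derived from Lemma~1. One example: a coupled family of laws in which the surprisals outside the $O(\eta)$-neighbourhoods of the change points are unchanged, and the local excess at each $\kappa_m$ is stochastically increasing in $\delta_m$. Under such a coupling together with your $2\Delta$ separation, the event that a peak near $\kappa_m$ is retained is pathwise monotone, and the expectation inherits this. Your remark that monotonicity in the sum $V$ needs more than componentwise monotonicity is correct. It is a further reason the statement as written has to be weakened.
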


\paragraph{Robustness and determinism.}
The algorithm is deterministic given $(\Delta,\gamma)$, linear in $T$ for entropy and segmentation,
and stable to predictor noise: if $\varepsilon< c\,\delta$ (Assumption~2), the selected peaks coincide
with true change neighborhoods with high probability; otherwise NMS and prominence thresholding
suppress spurious bumps so that $\mathcal{E}(\mathcal{S})$ remains within $O(\varepsilon)$ of the
optimal budgeted value.

\paragraph{Takeaway.}
Entropy-guided patching is a principled change-point segmentation that (i) places boundaries where
the conditional law changes (surprisal peaks), (ii) provably minimizes an entropy-based upper bound
on within-segment error and pooling loss under a token budget, and (iii) adapts the number of
dynamic tokens to structural complexity while remaining compute-efficient.

\subsection{Dynamic Query Analysis}
\label{app: theory-noveltyB}

\paragraph{Setup.}
Let $Z=\{z_k\}_{k=1}^{K}$ be the variable-length substructure tokens from \noveltyA, $A=\{a_j\}_{j=1}^{Q}$ be $Q$ modality anchors (fixed, learnable, or lightly tuned), and let $P\in\mathbb{R}^{d_{\text{in}}\times d_{\text{LLM}}}$ be a connector that linearly projects tokens into the frozen LLM embedding space. The input to the LLM is the concatenated sequence
$\widetilde{X}=[\,PA\;;\;PZ\,]\in\mathbb{R}^{(Q+K)\times d_{\text{LLM}}}$.
Denote the first Transformer block (self-attention + MLP) of the frozen LLM by
$f:\mathbb{R}^{(Q+K)\times d_{\text{LLM}}}\!\to\!\mathbb{R}^{(Q+K)\times d_{\text{LLM}}}$,
with attention maps parameterized by frozen matrices $(W_Q,W_K,W_V)$ and output projection $W_O$.
Subsequent blocks are identical copies with frozen parameters.

\vspace{4pt}
\noindent\textbf{Assumption A1 (Lipschitz block).}
Each LLM block is $L$-Lipschitz on a compact input set $\mathcal{X}$:
$\|f(X)-f(Y)\|\le L\|X-Y\|$ (this holds under standard bounded-activation/softmax assumptions).

\noindent\textbf{Assumption A2 (Anchor coverage).}
Let $\mathcal{S}_A=\text{span}\{W_KPA\}\subset\mathbb{R}^{d_k}$ be the key subspace induced by anchors.
There exists $\sigma_{\min}>0$ such that the minimum singular value of $[W_KPA]$ is $\ge\sigma_{\min}$,
and $\text{span}\{W_QPA\}=\mathcal{S}_A$ as well (anchors expose a stable key/query frame).

\noindent\textbf{Assumption A3 (Bounded projection error).}
Let $Z^\star\in\mathbb{R}^{K\times d_{\text{LLM}}}$ be an (unknown) ideal embedding that would produce
the desired frozen-LLM behavior. The learned connector satisfies
$\|PZ - Z^\star\|\le \varepsilon$ on the data manifold.
  
\paragraph{Stability via anchors.}
Consider attention logits in the first block:
$S=\widetilde{X}W_Q(W_K^\top \widetilde{X}^\top)/\sqrt{d_k}$.
Anchors contribute structured rows/columns $S_{AA}$ and $S_{AZ}$ that do not depend on $Z$'s length or order.
Because A2 ensures a well-conditioned anchor subspace $\mathcal{S}_A$, the Jacobian of attention
w.r.t.\ $PZ$ is bounded, which yields an input-output Lipschitz bound uniform in $K$.

\begin{lemma}[Anchor-conditioned Lipschitz stability]
\label{lem:anchor-stability}
Under A1--A2, there exists $C=C(L,\sigma_{\min},\|W_Q\|,\|W_K\|,\|W_V\|)$ such that for any two token sets
$Z_1,Z_2$ (possibly different lengths) and the same anchors $A$,
\[
\big\|f([PA;PZ_1]) - f([PA;PZ_2])\big\|
\;\le\; C\,\big\|PZ_1 - PZ_2\big\|.
\]
Hence the per-block perturbation is controlled by the connector-space distance, independent of $K$.
\end{lemma}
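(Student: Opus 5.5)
The plan is to reduce the statement to Assumption A1 after fixing what the norm means when $Z_1$ and $Z_2$ have different lengths. The anchors then enter only to make the Lipschitz constant uniform in $K$.

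\emph{Equal lengths.} When $K_1=K_2=K$, the anchor rows of $[PA;PZ_1]$ and $[PA;PZ_2]$ coincide. Hence $\|[PA;PZ_1]-[PA;PZ_2]\|=\|PZ_1-PZ_2\|$ in Frobenius norm. A1 then gives the claim with $C=L$, provided both inputs lie in the compact set $\mathcal{X}$. I would secure that by restricting to bounded connector outputs on the data manifold, using A3 and a bound on $\|Z^\star\|$.

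\emph{Different lengths.} Let $K=\max(K_1,K_2)$. I would embed both sequences into $\mathbb{R}^{(Q+K)\times d_{\text{LLM}}}$ by padding the shorter one with zero rows that are masked as keys, with outputs compared only on the anchor rows and the shared token rows. The distance $\|PZ_1-PZ_2\|$ is read on the padded matrices, so the surplus tokens contribute their own norms. The point to check is that masked attention on padded inputs is still Lipschitz with the same constant. Concretely, a padded row must change the output by at most a constant times its norm. This is where A2 is used.

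\emph{Explicit $K$-uniform bound.} Rather than invoking A1 as a black box, I would bound the first block directly. Write the attention output for a query row $i$ as $\sum_j \alpha_{ij}\, x_j W_V W_O$, with $\alpha_{i\cdot}=\mathrm{softmax}(x_iW_Q W_K^\top \widetilde X^\top/\sqrt{d_k})$. Perturbing the token rows changes two things:
\begin{itemize}
\item the values, which contributes at most $\|W_V\|\|W_O\|\,\|PZ_1-PZ_2\|$, because each row of $\alpha$ is a probability vector;
\item the logits, which enters through the softmax Jacobian $\mathrm{diag}(\alpha_{i\cdot})-\alpha_{i\cdot}\alpha_{i\cdot}^\top$ multiplied by value rows of bounded norm on $\mathcal{X}$.
\end{itemize}
Here A2 enters. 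The anchor keys span $\mathcal{S}_A$ with singular value at least $\sigma_{\min}$, and the queries lie in the same frame. So every query has a logit against some anchor key that is bounded below in terms of $\sigma_{\min}$. This keeps a fixed fraction of attention mass on the $Q$ anchors regardless of $K$. The softmax normalizer is then bounded below independently of $K$, and the mass a single token row can capture is controlled by its own logit perturbation. Summing the row contributions with Cauchy--Schwarz gives a Frobenius bound $C(\|W_Q\|,\|W_K\|,\|W_V\|,\sigma_{\min})\,\|PZ_1-PZ_2\|$ with no explicit factor of $K$. The residual branch and the MLP contribute a further factor through $L$.

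\emph{Main obstacle.} The hard step is the $K$-uniformity of the softmax Jacobian term. Generic softmax attention is not globally Lipschitz, and naive bounds pick up factors of $\sqrt{K}$ or of the input radius. My first attempt would be to show that the anchor-mass lower bound makes the variance of the value vectors under $\alpha_{i\cdot}$ bounded by a constant depending only on the compact set and $\sigma_{\min}$, which then absorbs the $K$-dependence. If that fails, I would fall back on the weaker statement with $C$ depending on the diameter of $\mathcal{X}$. That is still independent of $K$ once $\mathcal{X}$ is fixed, because compactness already bounds all logits.
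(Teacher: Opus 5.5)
Your equal-length step is correct. The anchor rows cancel, so $\|[PA;PZ_1]-[PA;PZ_2]\|=\|PZ_1-PZ_2\|$, and A1 gives $C=L$ with no use of A2.

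\textbf{The different-length step fails.} Use your convention: pad with masked zero rows, compare shared rows, and let the surplus token be $x_k=\epsilon u$. Its logits tend to $0$, not to $-\infty$. So in every shared row $i$ it keeps weight $\alpha_{ik}\to 1/\bigl(1+\sum_{j\neq k}e^{s_{ij}}\bigr)>0$. With $v_j=x_jW_VW_O$, that row's attention output changes by exactly $\alpha_{ik}(v_k-\bar v_i)$, where $\bar v_i$ is the attention average of the remaining values. As $\epsilon\to 0$ this tends to $-\alpha_{ik}\bar v_i\neq 0$, while $\|PZ_1-PZ_2\|=\epsilon\|u\|\to 0$. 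So for generic weights no finite $C$ exists. A2 cannot repair this, since it only constrains the anchor block; one anchor with nonzero, parallel query and key already satisfies it. Unmasked zero padding does not help either, because it replaces $[PA;PZ_2]$ by a different input. The lemma can be proved only for $K_1=K_2$, or with an extra additive term for the inserted tokens.

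\textbf{The anchor-mass argument for $K$-uniformity also fails.} A2 says nothing about the queries of content rows. Even inside $\mathcal S_A$, a query equal to minus an anchor key has a negative logit against it. More decisively, once logits are bounded, $|s_{ij}|\le B$, the anchors receive total weight at most $Qe^{2B}/(Q+K)$, which goes to $0$ with $K$. So there is no $K$-independent anchor fraction.

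\textbf{Your value-term bound is too optimistic.} For a row perturbation $E$, row-stochasticity gives $\|\alpha E\|_F\le\|\alpha\|_2\|E\|_F$ only with $\|\alpha\|_2\le\bigl(\max_j\sum_i\alpha_{ij}\bigr)^{1/2}$. This can be as large as $\sqrt{Q+K}$ when every row concentrates on one key. The $K$-dependence sits in these column sums, both in the value term and in the key-side logit perturbation $x_iW_QW_K^\top E_j^\top/\sqrt{d_k}$. It does not sit in the $\alpha$-variance of the values, which is trivially a constant times $R^2$.

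\textbf{Your fallback is what actually works.} If every row has norm at most $R$, then $B=\|W_Q\|\|W_K\|R^2/\sqrt{d_k}$ and $\alpha_{ij}\le e^{2B}/(Q+K)$. Hence every column sum is at most $e^{2B}$. The covariance form of the softmax derivative then gives a constant that depends on the weight norms and exponentially on $R^2$, is uniform in $K$, and does not use A2. Two caveats remain. First, $R$ must be a per-row radius, since the Frobenius diameter of such a set grows like $\sqrt{Q+K}$. Second, the constant depends on $R$, which the statement does not permit unless you absorb it into $L$.

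The paper's sketch follows your outline (bilinear logits, value mixing, A2 against ill-conditioning, then A1) but only asserts these steps, so it closes neither gap.
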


\begin{proof}[Sketch]
Write the attention output as a composition of (i) bilinear logits $(\cdot)W_Q(W_K^\top \cdot^\top)$ and
(ii) softmax-weighted value mixing. The anchor block $PA$ produces a key/query frame whose spectrum is
bounded below by $\sigma_{\min}$ (A2), preventing ill-conditioning in the softmax sensitivity. Combine with A1 and standard Jacobian bounds for softmax to obtain the stated Lipschitz constant $C$.
\end{proof}

\paragraph{Expressivity vs.\ LoRA at the input.}
Linearizing the first block at an operating point $\widetilde{X}_0=[PA;PZ_0]$ gives
$f(\widetilde{X})\approx f(\widetilde{X}_0)+J_0(\widetilde{X}-\widetilde{X}_0)$,
where $J_0$ is the Jacobian. A rank-$r$ LoRA update at the input embedding
($E\mapsto E+\Delta E$, $\text{rank}(\Delta E)\le r$) induces an output shift $J_0\Delta E$
of rank at most $r$. If $P$ is factored as $P=UV^\top$ with $\text{rank}(P)\le r$, then
$PZ$ can realize an equivalent family of rank-$r$ shifts in the first-block input, matching what a LoRA
adapter would add at that boundary.

\begin{proposition}[Connector-only can emulate low-rank input adaptation]
\label{prop:lora-equivalence}
Fix anchors $A$ and linearization point $\widetilde{X}_0$.
For any rank-$r$ update $\Delta E$ to the input embeddings, there exists a rank-$r$ connector $P=UV^\top$
and a perturbation $\Delta Z$ such that $J_0([0;\Delta E]) = J_0([0;P\Delta Z])$.
Hence, in the first-order regime, connector-only training with rank-$r$ $P$ is as expressive as a rank-$r$ LoRA at the input boundary.
\end{proposition}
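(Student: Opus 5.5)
The plan is to work entirely in the linearized regime at $\widetilde{X}_0=[PA;PZ_0]$ and reduce the claim to a statement about column spaces of matrices. Since $J_0$ is a fixed linear map, the identity $J_0([0;\Delta E])=J_0([0;P\Delta Z])$ holds as soon as the arguments agree. It therefore suffices to exhibit a rank-$r$ connector $P=UV^\top$ and a perturbation $\Delta Z$ with $P\Delta Z=\Delta E$ on the $K$ dynamic-token rows. The anchor rows are left untouched; the $0$ block records that anchors are fixed. Equality of the arguments is stronger than needed, but it avoids any analysis of the kernel of $J_0$.

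\textbf{Step 1 (factorize the LoRA update).} Write the rank-$r$ update in thin SVD form, $\Delta E=BC^\top$ with $B\in\mathbb{R}^{K\times r}$ and $C\in\mathbb{R}^{d_{\text{LLM}}\times r}$. Here I adopt the row convention of $\widetilde{X}$, where tokens are rows and the projection acts on the right, $Z\mapsto ZP$. I will note explicitly that the paper's notation $PZ$ is to be read in this sense.

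\textbf{Step 2 (construct the connector).} Choose $P=UV^\top\in\mathbb{R}^{d_{\text{in}}\times d_{\text{LLM}}}$ with $V=C$ and $U\in\mathbb{R}^{d_{\text{in}}\times r}$ of full column rank. This requires $d_{\text{in}}\ge r$, which I will state as a standing assumption. Then set $\Delta Z=BU^{+}$, using the pseudoinverse, so that
\[
\Delta Z\,P = B\,U^{+}U\,C^\top = BC^\top = \Delta E .
\]
The key identity is $U^{+}U=I_r$, which holds because $U$ has full column rank. With this choice $P$ has rank exactly $r$, and the required equality follows from linearity of $J_0$. For the converse direction ("as expressive as"), observe that any connector perturbation $\Delta Z\,P$ has rank at most $\operatorname{rank}(P)\le r$. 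So the two families of first-block input shifts coincide, both being the set of rank-$\le r$ matrices in the relevant row/column spaces, and their images under $J_0$ coincide as well.

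\textbf{Main obstacle.} The difficulty is a consistency issue rather than the algebra. In the statement $P$ is also the operating-point connector used to define $\widetilde{X}_0$. If $P$ is changed to match $C$, then the linearization point $\widetilde{X}_0=[PA;PZ_0]$ moves too. My first attempt will handle this by treating the LoRA update as given relative to a fixed $P$ and allowing $\Delta Z$ alone to vary. That version needs the row space of $\Delta E$ to lie inside the column space of $P$, so I would add the hypothesis $\operatorname{col}(C)\subseteq\operatorname{row}(P)$, i.e.\ the LoRA's output directions lie in the range of the connector.

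If that hypothesis is undesirable, the fallback is to absorb the change of $P$ into the choice of anchors $A$ and base tokens $Z_0$. One keeps the products $PA$ and $PZ_0$ fixed, which is possible when $P$ is right-invertible onto its range, and then re-factors $P$. This re-parametrization is the step I expect to need the most care. All resulting statements hold only to first order, consistent with the linearized regime.
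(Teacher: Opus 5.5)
Your Steps 1--2 are correct and are the paper's sketch made explicit. The paper likewise matches the arguments of $J_0$ by choosing $P\Delta Z=\Delta E$ with $P$ factored through an $r$-dimensional coordinate system, and your construction $P=UC^\top$, $\Delta Z=BU^{+}$ (row convention, $d_{\text{in}}\ge r$) realizes this exactly. The paper treats $J_0$ as fixed once $\widetilde{X}_0$ is fixed, which is also the literal reading of the statement. So neither your extra hypothesis nor the reparametrization fallback is needed (the fallback would in any case conflict with fixing $A$, and generally with $\text{rank}(P)=r$), though the consistency issue you flag is a fair point about the ``hence'' clause that the paper leaves unaddressed.
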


\begin{proof}[Sketch]
Let $J_0$ define a linear map on inputs; match images by choosing $P\Delta Z=\Delta E$ in the column space that $J_0$ uses. Factor $P$ with $\text{rank}\le r$ and select $\Delta Z$ in the corresponding $r$-dimensional coordinate system to reproduce $J_0\Delta E$. 
\end{proof}

\paragraph{Information preservation.}
Let $\phi$ be the (frozen) LLM map from the first block to the layer used by the cross-attention interface (or to logits for analysis).
Define the task-relevant mutual information $I(Z; \phi([PA;PZ]))$.
Because $f$ is $L$-Lipschitz and the connector error is $\varepsilon$ (A3), data processing and stability imply only small information loss.

\begin{lemma}[Lower bound on retained information]
\label{lem:mi-lower}
Under A1--A3 and mild regularity (bounded support and densities), there exists $\alpha>0$ such that
\[
I(Z;\phi([PA;PZ])) \;\ge\; I(Z;\phi([PA;Z^\star])) - \alpha\,\varepsilon.
\]
Thus, if $P$ approximates the ideal embedding within $\varepsilon$, downstream information loss is $O(\varepsilon)$.
\end{lemma}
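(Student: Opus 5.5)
The plan is to chain the Lipschitz stability of Lemma~\ref{lem:anchor-stability} with a continuity estimate for mutual information, passing through an intermediate ``ideal'' representation. Write $Y=\phi([PA;PZ])$ and $Y^\star=\phi([PA;Z^\star])$.

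\textbf{Step 1 (Lipschitz closeness of the two representations).} The map $\phi$ is a composition of frozen blocks, each $L$-Lipschitz on the compact set $\mathcal{X}$ by A1. The first block additionally obeys the anchor-conditioned bound of Lemma~\ref{lem:anchor-stability}, with a constant $C$ that does not depend on $K$. Applying that lemma with $PZ_1=PZ$ and $PZ_2=Z^\star$, and then iterating A1 over the remaining $D-1$ blocks, gives
\[
\|Y-Y^\star\|\le C L^{D-1}\|PZ-Z^\star\|\le C L^{D-1}\varepsilon
\]
almost surely on the data manifold, using A3. Call this constant $\beta$. It is uniform in $K$, so variable-length dynamic tokens are handled in the same way.

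\textbf{Step 2 (from metric closeness to information closeness).} I would use the chain rule in the form
\[
I(Z;Y^\star)\le I(Z;Y,Y^\star)=I(Z;Y)+I(Z;Y^\star\mid Y).
\]
It then suffices to bound $I(Z;Y^\star\mid Y)\le h(Y^\star\mid Y)-h(Y^\star\mid Y,Z)$.

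\textbf{Step 3 (the main obstacle).} The hard part is controlling this residual term linearly in $\varepsilon$. Differential entropies are not continuous in the $\ell_2$ distance without regularity, and that is exactly what the ``bounded support and densities'' hypothesis is for. My first attempt would be the following. Given $Y$, the vector $Y^\star$ lies in a ball of radius $\beta\varepsilon$, so the residual uncertainty is confined to that ball. Because the densities are bounded above and below on a compact support, the conditional density of $Y^\star$ given $Y$ and that given $(Y,Z)$ differ by a relative factor $1+O(\beta\varepsilon)$. The entropy difference is then $O(\beta\varepsilon)$, with the constant depending on the density bounds and the Lipschitz modulus of the log-density. This yields $\alpha=c\,\beta$.

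If that route fails, I would take a fallback. Smooth both $Y$ and $Y^\star$ by independent Gaussian noise of fixed scale, as is standard in analyses of the information bottleneck. I would then apply the Wasserstein-continuity of mutual information for Gaussian-smoothed variables (a Polyanskiy--Wu type bound), which gives $|I(Z;Y)-I(Z;Y^\star)|\le \alpha W_1\le \alpha\beta\varepsilon$. In that case the constant $\alpha$ absorbs the smoothing scale, consistent with the convention in the scope paragraph that constants are suppressed.

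Rearranging either bound gives the claimed inequality, and hence an $O(\varepsilon)$ loss of downstream information.
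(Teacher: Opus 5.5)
Your Step~1 is the same first move as the paper's sketch and is fine. In fact it needs only A1 and A3: the inputs $[PA;PZ]$ and $[PA;Z^\star]$ share their anchor rows, so $\|Y-Y^\star\|\le L^{D}\varepsilon$ directly. That also avoids passing $Z^\star$, which need not lie in the range of $P$, through Lemma~\ref{lem:anchor-stability}.

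The gap is your primary argument in Step~3. $P$, $A$ and $\phi$ are fixed and deterministic, and $Z^\star$ is attached to each data point, so $Y$ and $Y^\star$ are deterministic functions of the sample.
\begin{itemize}
\item Conditionally on $(Y,Z)$, $Y^\star$ is therefore (generically) a point mass. There is no conditional density to compare, $h(Y^\star\mid Y,Z)=-\infty$, and the ``relative factor $1+O(\beta\varepsilon)$'' has no meaning.
\item The regularity hypothesis cannot repair this. The conditional laws live on balls of radius $\beta\varepsilon$, a scale at which global bounds on the input density say nothing.
\item More basically, mutual information is not continuous under small deterministic perturbations. Take $Z\sim\mathrm{Unif}[0,1]$, $Y^\star=Z$ and $Y=\varepsilon\lfloor Z/\varepsilon\rfloor$. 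Support and density are bounded and $|Y-Y^\star|\le\varepsilon$, yet $I(Z;Y)=H(Y)\le\log\lceil 1/\varepsilon\rceil$ while $I(Z;Y^\star)=\infty$. So $I(Z;Y^\star\mid Y)$ is not $O(\varepsilon)$.
\item If you instead use that $\phi$ is Lipschitz, then for $Z$ with a density both informations are typically $+\infty$, since a non-constant Lipschitz image of such a $Z$ has a non-atomic part. The noiseless inequality is then vacuous.
\end{itemize}

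Your fallback is what the paper's one-line sketch means by ``Wasserstein stability,'' and it is the route that can be made rigorous. However, $\alpha$ cannot absorb the smoothing, because smoothing changes the quantities being compared. With $N\sim\mathcal{N}(0,\sigma^2 I)$ independent of everything, what you actually get is:
\begin{itemize}
\item $I(Z;Y+N)=h(Y+N)-h(N)$, because $Y$ is a function of $Z$;
\item $I(Z;Y^\star+N)\le h(Y^\star+N)-h(N)$;
\item $h(Y^\star+N)-h(Y+N)\le c_\sigma\beta\varepsilon$, from the Polyanskiy--Wu regularity of Gaussian-smoothed densities plus the coupling through the shared $(Z,N)$. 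Here $c_\sigma$ is of order $\sigma^{-2}$ for small $\sigma$ and fixed support radius.
\end{itemize}
Hence $I(Z;Y+N)\ge I(Z;Y^\star+N)-c_\sigma\beta\varepsilon$.

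The noise cannot be removed afterwards. Data processing gives $I(Z;Y)\ge I(Z;Y+N)$ on the left, but on the right it gives $I(Z;Y^\star+N)\le I(Z;Y^\star)$, which is the wrong direction. In addition, $c_\sigma\to\infty$ as $\sigma\to 0$.

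So the missing ingredient is a noisy-observation model built into the definition of retained information; the paper's sketch also leaves this implicit. Once the lemma is stated for $I(Z;\phi(\cdot)+N)$, your Step~1 plus the fallback is a complete proof. Without it, the noiseless inequality is either vacuous or not controlled by $\|Y-Y^\star\|$ at all.
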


\begin{proof}[Sketch]
Combine Lemma~\ref{lem:anchor-stability} with standard continuity bounds for mutual information under Lipschitz perturbations (via Wasserstein stability or Pinsker-type inequalities). The $\alpha$ constant absorbs Lipschitz and measure-regularity terms.
\end{proof}

\paragraph{Budget awareness and interference control.}
Let $\Pi_A$ be the orthogonal projector onto $\text{span}\{PA\}$ in the LLM embedding space.
Adding an orthogonality penalty $\|\Pi_A(PZ)\|^2$ ensures that content tokens $PZ$ occupy a subspace complementary to anchors, reducing cross-attention interference and concentrating gradients on task-relevant directions.

\begin{proposition}[Conditioning and interference]
\label{prop:conditioning}
If training enforces (approximately) $\Pi_A(PZ)=0$ and normalizes $PA$, then the attention Gram
matrix on $\widetilde{X}$ has condition number bounded by a constant depending only on anchors.
Consequently, gradient norms for $P$ are well-conditioned and do not deteriorate with token count $K$.
\end{proposition}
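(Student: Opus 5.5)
The plan is to write the Gram matrix of the concatenated input $\widetilde{X}=[PA;PZ]$ in block form and use the (approximate) orthogonality $\Pi_A(PZ)=0$ to kill the off-diagonal blocks. Concretely, with $G=\widetilde{X}\widetilde{X}^\top$ (or its key/query analogue $\widetilde{X}W_QW_K^\top\widetilde{X}^\top$, handled by the same argument after absorbing $\|W_Q\|,\|W_K\|$ and the singular-value floor of Assumption A2), we have
\[
G=\begin{pmatrix} (PA)(PA)^\top & (PA)(PZ)^\top\\ (PZ)(PA)^\top & (PZ)(PZ)^\top\end{pmatrix}.
\]
If $\Pi_A(PZ)=0$ exactly, then the rows of $PZ$ lie in the orthogonal complement of $\mathrm{span}\{PA\}$, so the off-diagonal blocks vanish and $G$ is block diagonal. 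The anchor block has spectrum in $[\sigma_{\min}^2,\sigma_{\max}(PA)^2]$ because $PA$ is normalized and well conditioned (A2). This gives the anchor-only constant $\kappa_A=\sigma_{\max}(PA)^2/\sigma_{\min}^2$.

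For the approximate case, I treat $\|\Pi_A(PZ)\|\le\eta$ as a perturbation. The off-diagonal blocks then have norm at most $\|PA\|\,\eta$. Weyl's inequality moves every eigenvalue by at most this amount, and a Schur-complement bound controls the anchor block's smallest eigenvalue.

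The content block $(PZ)(PZ)^\top$ is where $K$ enters, and it is the main obstacle. Its condition number is not controlled by anchors alone: duplicate or near-collinear dynamic tokens make it singular. I would therefore restate the claim in terms of the relevant quantity. The attention operator is applied row-wise through softmax, so what matters for stability is the per-row logit sensitivity, not the full Gram spectrum. I would bound this sensitivity via Lemma~\ref{lem:anchor-stability}: the softmax Jacobian is bounded by $1/2$ in operator norm uniformly in the number of columns. Anchors contribute a fixed set of at least $Q$ columns whose logits are bounded away from degeneracy by $\sigma_{\min}$, and this prevents the softmax from collapsing onto the $Z$ columns. Normalizing the rows of $PZ$, which is already implicit in the bounded-input assumption A1, keeps the content block's largest eigenvalue per row bounded. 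The effective conditioning on $\mathrm{span}\{PA\}$, where gradients are measured, is then a constant depending only on $A$.

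Finally, for gradients with respect to $P$, I would write $\nabla_P\mathcal{L}=\sum_j \partial\mathcal{L}/\partial\widetilde{X}_j\,(\text{input}_j)^\top$. The chain rule through $f$ then gives $\|\nabla_P\mathcal{L}\|\le C\,\|\partial\mathcal{L}/\partial f\|\,\max_j\|\text{input}_j\|$, with $C$ from Lemma~\ref{lem:anchor-stability}, which is independent of $K$. Averaging over tokens, as the loss does, removes any linear growth in $K$. This yields gradient norms that do not deteriorate with token count. The honest caveat I would state is that "condition number bounded" should be read as conditioning restricted to the anchor frame plus row-normalized content tokens, since the full Gram matrix can be singular.
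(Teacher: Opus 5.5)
\paragraph{Where you match the paper, and where it breaks down.} Your opening is the paper's sketch: orthogonality kills the mixed blocks of $\widetilde{X}\widetilde{X}^\top$, and A2 controls the anchor block. Your Weyl treatment of the approximate case is fine. One small correction: A2 bounds $\sigma_{\min}(W_KPA)$, so the anchor block of the plain Gram is only bounded below by $\sigma_{\min}^2/\|W_K\|^2$, not $\sigma_{\min}^2$.

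Where you depart, your diagnosis is correct and the paper's sketch is not. The sketch asserts a ``controlled spectrum on the content block via normalization and Lipschitzness.'' Row normalization only fixes the diagonal of $(PZ)(PZ)^\top$, so its top eigenvalue can reach $K$ (all content rows equal) and its bottom eigenvalue can be $0$ (two segments pooling to the same embedding, e.g.\ symmetry-equivalent fragments). Lipschitzness of the frozen block says nothing about the Gram matrix of its input. The block-diagonal Gram has condition number $\max(\lambda^{A}_{\max},\lambda^{Z}_{\max})/\min(\lambda^{A}_{\min},\lambda^{Z}_{\min})$. So the proposition as stated cannot be proved without an extra hypothesis such as $c\,I\preceq(PZ)(PZ)^\top\preceq C\,I$, and the constant then depends on $c,C$, not only on the anchors.

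Also, the key/query form $\widetilde{X}W_QW_K^\top\widetilde{X}^\top$ is not ``handled by the same argument.'' $\Pi_A(PZ)=0$ is orthogonality in the embedding space, and the mixed block $(PA)W_QW_K^\top(PZ)^\top$ does not vanish in general. You would need, e.g., $W_QW_K^\top$ and its transpose to preserve $\mathrm{span}\{PA\}$, which nothing in A1--A3 provides.

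\paragraph{Your substitute does not close the gap.}
\begin{itemize}
\item Conditioning ``restricted to the anchor frame'' is $K$-independent for a trivial reason: under exact orthogonality that block does not involve $Z$ at all. It therefore discards exactly the content directions the proposition is about.
\item The softmax step does not use the anchors. The bound $\|\mathrm{diag}(p)-pp^\top\|\le 1/2$ holds for every probability vector. A singular-value floor on $W_KPA$ does not stop the softmax from concentrating on $Z$ columns. The $K$-uniformity there is borrowed wholesale from Lemma~\ref{lem:anchor-stability}.
\item Most concretely, the gradient bound drops a factor. Let $G$ be the stacked per-position gradients and $X$ the stacked inputs to $P$. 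Then one only has $\|\sum_j g_jx_j^\top\|_F=\|G^\top X\|_F\le\|G\|_F\,\|X\|_{\mathrm{op}}$. Equality holds when all $g_j$ coincide and all $x_j$ coincide, and in that case $\|X\|_{\mathrm{op}}=\sqrt{Q+K}\,\max_j\|x_j\|$. So the same degenerate content configuration you conceded for the Gram matrix reappears as a $\sqrt{K}$ factor in $\nabla_P\mathcal{L}$.
\item ``Averaging over tokens'' does not remove that factor. The training objective is the LLM's next-token loss on the response, not a mean over the $Q+K$ conditioning positions, so that normalization would have to be an explicit assumption.
\item ``Well-conditioned'' also requires a lower bound, which your argument never addresses.
\end{itemize}

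A correct version needs a spectral hypothesis on the content tokens (or on $[A;Z]$), with constants that depend on it.
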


\begin{proof}[Sketch]
Under the orthogonality constraint, the mixed Gram blocks between anchors and content vanish at first order, yielding a block-structured Gram with bounded spectrum on the anchor block (A2) and controlled spectrum on the content block via normalization and Lipschitzness, hence a uniform bound on the condition number.
\end{proof}

\paragraph{Takeaways.}
(i) Stability: anchors provide a well-conditioned key/query frame that makes the frozen LLM locally Lipschitz in the connector space, uniformly in token count (Lemma~\ref{lem:anchor-stability}).
(ii) Expressivity: a low-rank connector can emulate first-order effects of a low-rank (LoRA) input adapter (Proposition~\ref{prop:lora-equivalence}), explaining strong performance without tuning the backbone.
(iii) Information retention: if $P$ approximates an ideal embedding, the downstream information loss is $O(\varepsilon)$ (Lemma~\ref{lem:mi-lower}).
(iv) Good conditioning: orthogonalizing content against anchors controls attention conditioning and prevents interference (Proposition~\ref{prop:conditioning}).

Overall, \noveltyB\ yields a stable, expressive, and budget-aware interface to frozen LLMs, complementing \noveltyA's adaptive tokenization.

\section{Extended Related Work}
\label{app: related work}
In addition to the discussion in Sec.~\ref{sec: related work}, this section provides further analysis of how our approach relates to prior efforts. Specifically, we highlight the differences between our \noveltyB and fixed-length Q-Former-style bridges, as well as connections to entropy-related works and recent multimodal molecular language models. This extended review situates our model within the broader landscape and clarifies its novel contributions.

\subsection{From Representation to Multimodal Understanding}
\label{app: related-1}
\paragraph{Representation models vs.\ language models.}
Early molecular representation learning focused on embedding models that map graphs or sequences to vector spaces for downstream prediction, e.g., MolFormer~\citep{wu2023molformer}, Uni-Mol/Uni-Mol-v2~\citep{zhou2023unimol, ji2024unimol2}, and ChemBERTa-2~\citep{ahmad2022chemberta2}. These encoders excel at property classification and regression as they optimize contrastive and masked-token training over large chemical corpora, yielding stable, geometry-aware embeddings for downstream heads. However, they do not natively support natural-language generation (e.g. UniMoT~\citep{zhang2024unimolt}) or multi-hop reasoning~\citep{jang2024cotmol}; in these settings, evidence remains implicit in continuous vectors, and exact numeric targets are better served by calibrated regressors than autoregressive tokens.

\paragraph{From generation to reasoning with LLMs.}
LLM-based molecular systems provide a natural-language interface for explanation, instruction following, and tool use, enabling generative descriptions and stepwise reasoning~\citep{fang2023,zhao2023,zhang2024unimolt}. Yet vanilla LLMs struggle with exact property prediction due to the mismatch between discrete token likelihoods and calibrated real-valued targets, unit/scale sensitivity, and exposure bias during numeric generation; recent work addresses these issues with instruction data, tool-augmented agents, and chain-of-thought methods tailored to molecular structure~\citep{kim2025,bran2023chemcrow,jang2024cotmol}. Overall effectiveness hinges on how molecular structure is exposed to the language model--whether via tokenized geometry, graph-aware connectors, or substructure tokens.

\paragraph{Multimodal fusion for molecular generative modeling.}
Beyond SMILES-only models (e.g., KV-PLM~\citep{zeng2022kvplm}), recent work fuses SMILES, molecular graphs, and 3D conformers with LLMs. Instruction-tuned systems such as GIMLET~\citep{zhao2023} show that paired graph-text supervision imparts basic molecular understanding; larger frameworks (e.g., MoleculeSTM~\citep{liu2023moleculeSTM}, MolFM~\citep{Luo2023MolFMAM}, BioT5+~\citep{pei2024biot5}, ProLLaMA~\citep{lv2025prollama}) broaden modality and biomedical knowledge coverage; and 3D-aware models (e.g., 3D-MolT5~\citep{pei20243dmolt5}) introduce coarse geometry for conditional generation. More recent graph--LLM connectors (e.g., Mol-LLM~\citep{Lee2025MolLLMGM}, Mol-LLaMA~\citep{kim2025}), and UniMolT~\citep{zhang2024unimolt} adapt the BLIP-2/Q-Former~\citep{li2023blip2} paradigm with fixed-length, learnable query tokens. Although effective for alignment, fixed-length fusion compresses stereochemistry and substructures, resulting in structural information loss that worsens with increasing molecular size and conformer diversity; this motivates the development of dynamic and substructure-aware interfaces that present structures more faithfully to the LLM.

% 1. from the transformer embedding model to generative models
% 2. Tell the problem of LLMs in predicting exact properties and the reason
% 3. The problem and generative reasoning
% 4. The engagement of multimodal integration

\subsection{Relationship to Related Works}
\label{app: related-2}

\paragraph{Relationship to Q-Former.}
BLIP-2 introduces a lightweight Querying Transformer (Q-Former) that uses a fixed set of learnable query tokens to extract visual evidence from a frozen image encoder via cross-attention, then projects the resulting tokens to a frozen LLM—enabling efficient vision–language alignment without updating the backbones~\citep{li2023blip2}. The Q-Former is pretrained with vision–language objectives (e.g., contrastive/alignment and generative stages) to ensure its queries attend to informative regions and produce LLM-consumable embeddings. Subsequent systems (e.g., InstructBLIP~\citep{dai2023instructblip}) keep the same mechanism but condition the Q-Former on textual instructions, improving instruction following while retaining the fixed query budget. Related adapters such as Flamingo’s~\citep{alayrac2022flamingo} Perceiver-Resampler also distill high-dimensional visual features to a small, fixed-length token set for a frozen LM.

\begin{figure}[t]
    \centering
    \vskip -5pt
    \includegraphics[width=1.0\linewidth]{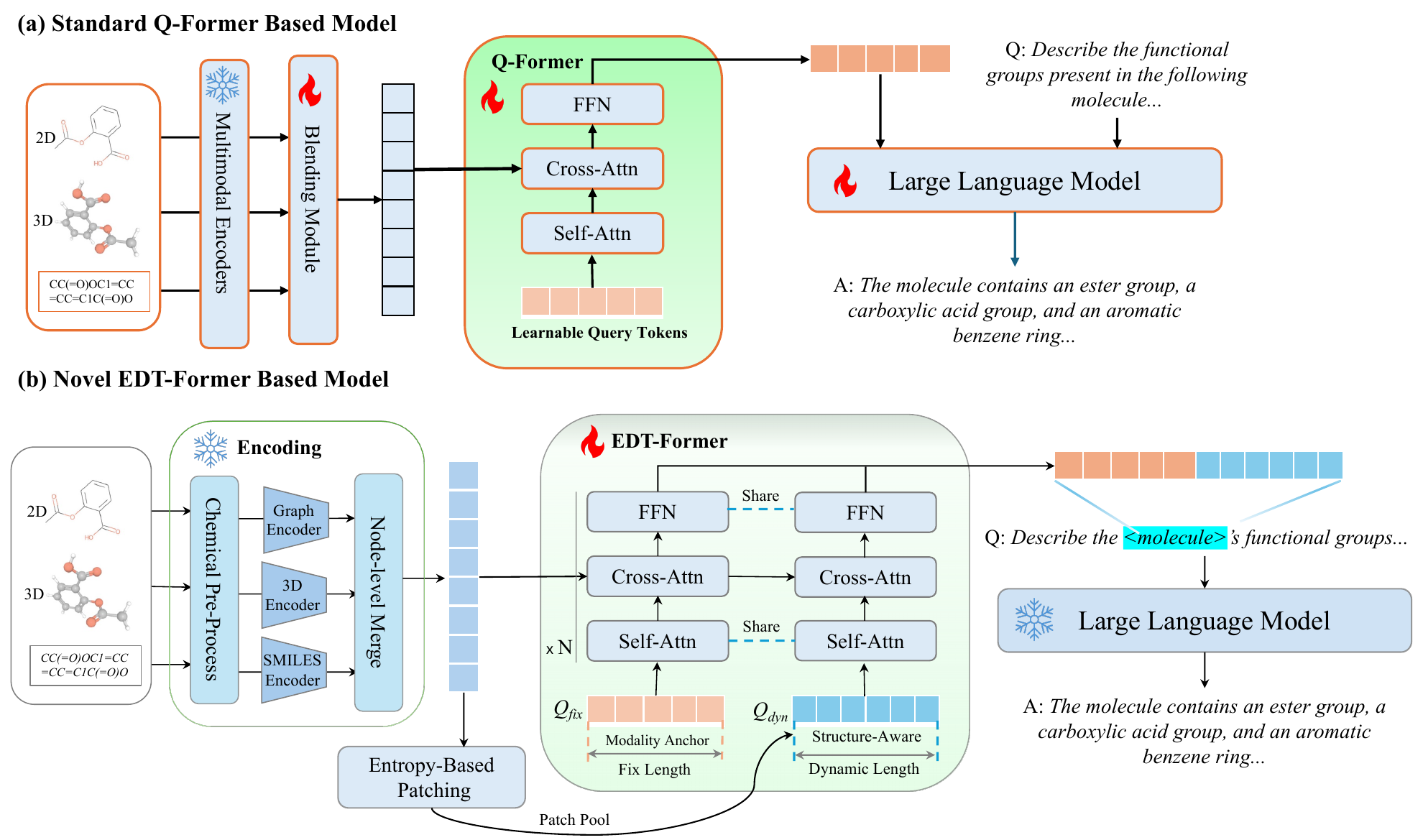}
    \vskip -5pt
    \caption{Architecture comparison between Q-Former-based model and \modelname. (a) Standard Q-Former connector that uses fixed-length learnable query tokens as modality anchors. (b) Novel \modelname connector that combines fixed-length tokens and entropy-guided structure-aware dynamic query tokens.}
    \vskip -4pt
    \label{fig: arch-comarision}
\end{figure}

As shown in Fig.~\ref{fig: arch-comarision}, \noveltyB differs in two aspects. 1) Modality. Molecules are variable-sized graphs (and conformers), where structural fidelity hinges on preserving substructures (stereochemistry, functional groups), not just salient image regions. 2) Interface. Instead of a fixed query budget, \modelname\ upgrades the Q-Former’s fixed queries with \textbf{entropy-guided dynamic queries}. Concretely, \noveltyA\ discovers entropy-peak–based segments and pools node embeddings into dynamic tokens, and \noveltyB\ refines these tokens (with a small set of anchors) via self-/cross-attention before projection to the LLM space. This design preserves Q-Former–style efficiency (frozen backbones, a lightweight adapter) while improving graph-specific fidelity by matching the number and placement of queries to molecular structure, thereby enabling multimodal molecular models without fine-tuning the LLM backbone (excluding the embedding layer). In short, we adopt the Q-Former rationale—frozen encoders bridged by a compact adapter—but move beyond fixed-length queries to a dynamic structure-aligned interface tailored to graphs.

\paragraph{Relationship to BLT (Byte Latent Transformer).}
BLT groups raw bytes into dynamic variable-length patches using the entropy of the next byte, so that higher-entropy regions receive more compute; a latent transformer then operates over these patches and projects to a language model, yielding tokenizer-free efficiency and robustness~\citep{pagnoni2025blt}. Our approach draws inspiration from the entropy-driven segmentation principle, but targets a different modality and objective: molecules are variable-sized graphs where fidelity depends on preserving substructures and stereochemistry. Concretely, we estimate next-atom uncertainty over SMILES to place peak-based cut points, map segments to graph nodes, and pool node embeddings into dynamic substructure tokens. These tokens are integrated (with a small set of anchors) via self-/cross-attention and projected to a frozen LLM, aligning query number and placement to molecular structure. Thus, while BLT’s entropy patching motivates adaptive granularity, our entropy definition, boundary rule, and graph-aware interface are tailored to structural chemistry rather than byte-level text.

\paragraph{Comparison to other molecular LLMs.} 
(1) Mol-Instructions~\citep{fang2023}.
Mol-Instructions establishes large-scale instruction data for molecules and fine-tunes general LLMs (e.g., Llama-2/3~\citep{touvron2023llama2, dubey2024llama3}) from SMILES inputs to demonstrate the value of instruction supervision for captioning and property-related responses. We adopt its benchmarks and reported baselines for our evaluations on molecular description and property prediction. While effective as a single-modality (text/SMILES) approach, it does not expose explicit structural signals (graphs/3D), which limits structure-aware reasoning beyond what can be inferred from tokenized strings.
(2) Mol-LLaMA~\citep{kim2025}.
Mol-LLaMA bridges molecular structure and language using a Q-Former–style connector: a fixed set of learnable query tokens cross-attends to strong encoders and conditions an LLM. This design proves the feasibility of graph–LLM alignment but inherits the fixed-length bottleneck, which compresses substructures and stereochemistry as molecular size and conformer diversity grow. In contrast, our method retains the lightweight adapter philosophy while replacing fixed queries with dynamic entropy-guided substructure tokens refined by a dynamic query transformer. For completeness, we employ recognized 2D/3D encoders (e.g., MoleculeSTM~\citep{liu2023moleculeSTM}, Uni-Mol~\citep{zhou2023unimol}) and comparable processed training data when reproducing baselines; these choices standardize comparisons but are orthogonal to the core novelty.

\rebuttal{\paragraph{Other graph-text alignment works.}
Recent frameworks such as MoleculeSTM~\citep{liu2023moleculeSTM} and FineMolTex~\citep{li2025advancing} learn new multimodal encoders from scratch via large-scale structure–text contrastive training and motif-level masking on curated molecule–caption pairs, targeting strong standalone representations for retrieval and editing without relying on a frozen LLM. In contrast, EDT-Former adopts a connector-only paradigm: we freeze both the molecular encoder and the LLM backbone (excluding the embedding layer), and learn a lightweight bridge that uses entropy-guided dynamic SMILES patches plus a few anchors to adapt the token budget to molecular complexity. These approaches are complementary—such pretrained encoders could serve as stronger molecular backbones, while EDT-Former focuses on efficiently aligning them with instruction-tuned LLMs for QA, property prediction, and reaction-centric generation.}

\paragraph{Inspiration from graph and fusion embedding models.} Our previous work MuMo~\citep{jingstructure} is a multimodal embedding model for sequences, graphs, and 3D structures, evaluated across a comprehensive suite of molecular tasks. We also drew broader insights on graph modeling and transfer techniques from graph-related works: ~\citep{fang2022structure}, ~\citep{fang2025benefits},~\citep{fang2025homophily}, ~\citep{fang2025HomoAgnostic},~\citep{fang2025SAGA}.

% 1. Mol-Instructions~\citep{fang2023}, they propose large-scale datasets and use SMILES as input to finetune the Llama2/3 models to prove the advantage of datasets. We use their benchmarks and some baseline results to evaluate our model on molecular captioning and property tasks. It is a try for a natural language LLM in the molecular field and has nice performance. But the limitation is mainly focused on benchmarks and is a single-modality LLM.
% 2. Mol-Llama~\citep{kim2025}. Use Q-Former to bridge strong graph encoders with LLMs. As we said before, the limitation is the fixed-length tokens. We use the strong 2D/3D graph encoders MoleculeSTM~\cite{liu2023moleculeSTM} and UniMol~\citep{Zhou2023UniMol}, as they are well-recognized models for modeling molecules. We also use the processed training data as them and some of the baseline results. But these are not related to our or their novelty.

\section{Implementation Details}
\label{app: Implementation Details}

This section details the implementation for reproducibility. It begins with the entropy-guided patching algorithm, including entropy computation, peak detection, the next-token predictor, and a brief rationale recap. Attention computations for integrating dynamic tokens with anchors are then formalized. Configurations of \modelname, covering both graph encoders and LLM backbones, are described next. The section concludes with pretraining settings (objectives, prompts, and hyperparameters) and finetuning protocols (setups, prompts, and representative inference examples).

\subsection{\noveltyA}
\label{app: Implementation A}

\textbf{Algorithm of \noveltyA.} Algorithm~\ref{alg:entropy_patching_a2e} segments a SMILES sequence into dynamic substructures using an entropy signal from a lightweight next-atom predictor (NAP). For each position, the predictor provides the probability of the ground-truth next atom; its negative log-probability defines the entropy trace. Local maxima are detected with a prominence threshold and non-maximum suppression, and cuts are placed after the retained peaks. Segments are mapped from SMILES indices to graph nodes and average-pooled over frozen node embeddings to form dynamic substructure tokens. The procedure is simple, deterministic given $(\Delta,\gamma)$, linear in sequence length for entropy and segmentation, and produces structure-aware tokens aligned with molecular complexity.

\textbf{The NAP model configuration.} As summarized in Tab.~\ref{tab:nap_config_train_combined}, the NAP model adopts a standard transformer architecture with a maximum input context of 512 tokens. This context window bounds the effective sequence length used during training and inference. The SMILES atom vocabulary shows in Tab.~\ref{tab:smiles_vocab_categories}) comprises 39 tokens and spans major chemical groups--halogens, noble gases, alkali/alkaline earth metals, transition/post-transition metals, metalloids, and other nonmetals--providing broad elemental coverage for next-atom prediction.

\begin{table}[t]
\centering
\caption{Configurations and Training settings for the Next-Atom Prediction (NAP) model, architecture, hyperparameters, together with the resulting compute footprint and parameter count.}
\label{tab:nap_config_train_combined}
\vskip -3pt
\resizebox{\linewidth}{!}{
\begin{tabular}{ll ll}
\toprule
\multicolumn{2}{c}{Configurations} & \multicolumn{2}{c}{Pre-Training} \\
\cmidrule(lr){1-2} \cmidrule(lr){3-4}
Item & Value & Item & Value \\
\midrule
Vocabulary size & 39 & Learning rate & $1\times10^{-4}$ \\
Model architecture & GPT-2LMHead & Batch size & 64 \\
Maximum position embeddings & 1024 & Training epochs & 1 \\
Context window & 512 & Training steps & 4,929 \\
Transformer layers & 2 & Weight decay & 0.01 \\
Attention heads per layer & 2 & Total FLOPs & $7.5395\times10^{13}$ \\
Hidden size (embedding dimension) & 128 & Model size & 0.538M \\
Activation function & \texttt{GELU} & Dataset & Pubchem \\
Dropout - residual connections & 0.10 & Max Length & 128 \\
Dropout - token/position embeddings & 0.10 & Concat Data & True \\
Dropout - attention & 0.10 & Precision & FP16 \\
LayerNorm epsilon & $1\times10^{-5}$ & Objective & NAP \\
\bottomrule
\end{tabular}
}
\end{table}

\begin{table}[t]
\centering
\setlength{\tabcolsep}{15pt}
\caption{Molecular atom vocabulary from SMILES grouped by category for the NAP (Next-Atom Prediction) model, with per-category counts (total tokens = 39, including 3 special tokens).}
\label{tab:smiles_vocab_categories}
\begin{tabular}{l l c}
\toprule
Category & Symbols & Count \\
\midrule
Special tokens & \texttt{<pad>}, \texttt{<bos>}, \texttt{<eos>} & 3 \\
\midrule
Halogens & F, Cl, Br, I & 4 \\
Noble gases & He, Ne, Ar, Kr, Xe & 5 \\
Alkali metals & Li, Na & 2 \\
Alkaline earth metals & Mg, Ca & 2 \\
Transition metals & Fe, Co, Ni, Cu, Zn, Ag, Au, Pd, Pt, Mn, Hg & 11 \\
Post-transition metals & Al, Sn & 2 \\
Metalloids & B, Si, Sb, Te & 4 \\
Other nonmetals & C, N, O, P, S, Se & 6 \\
\midrule
Total &  & 39 \\
\bottomrule
\end{tabular}
\vskip -10pt
\end{table}

\begin{algorithm}[t]
\caption{Entropy-Guided Patching}
\label{alg:entropy_patching_a2e}
\SetKwInOut{KwIn}{Input}\SetKwInOut{KwOut}{Output}
\KwIn{SMILES atom sequence $(a_1,\dots,a_T)$; trained next-atom predictor $f_\theta$; node embeddings $X\!\in\!\mathbb{R}^{N\times d}$ (frozen graph encoder); map $\pi$ (SMILES$\!\to\!$graph); NMS window $\Delta$; prominence $\gamma$.}
\KwOut{Dynamic tokens $\{z_k\}_{k=1}^{M}$; SMILES segments $\{\mathcal{S}_k\}_{k=1}^{M}$.}

\BlankLine
\textbf{Entropy computation:}\;
\For{$t \leftarrow 1$ \KwTo $T-1$}{
  $L_t \leftarrow f_\theta(a_{1:t})$ \tcp*[r]{logits over atom vocab $\mathcal{V}$}
  $p_t \leftarrow \mathrm{softmax}(L_t)[a_{t+1}]$ \tcp*[r]{prob.\ of ground-truth next atom}
  $e_t \leftarrow -\log p_t$ \tcp*[r]{information content (nats); $O(T)$ total}
}

\BlankLine
\textbf{Peak detection (local maxima + prominence):}\;
$\mathcal{T}_\star \leftarrow \emptyset$ \tcp*[r]{candidate split indices}
\For{$t \leftarrow 2$ \KwTo $T-2$}{
  $\mathrm{isLocalMax} \leftarrow (e_{t-1} < e_t) \wedge (e_t > e_{t+1})$ \tcp*[r]{strict peak}
  $\mathrm{prominent} \leftarrow \big(e_t - \tfrac{1}{2}(e_{t-1}+e_{t+1}) \ge \gamma\big)$ \tcp*[r]{remove shallow bumps}
  \If{$\mathrm{isLocalMax} \wedge \mathrm{prominent}$}{
    $\mathcal{T}_\star \leftarrow \mathcal{T}_\star \cup \{t\}$ \tcp*[r]{keep peak $t$}
  }
}

\BlankLine
\textbf{Non-maximum suppression (window $\Delta$):}\;
Sort $\mathcal{T}_\star$ by descending $e_t$ to get $(t^{(1)}, t^{(2)},\dots)$ \tcp*[r]{highest entropy first; $O(|\mathcal{T}_\star|\log|\mathcal{T}_\star|)$}
$\mathcal{T} \leftarrow \emptyset$ \tcp*[r]{final peak set}
\ForEach{$u \in (t^{(1)}, t^{(2)},\dots)$}{
  \If{$\forall v \in \mathcal{T},\ |u-v|>\Delta$}{
    $\mathcal{T} \leftarrow \mathcal{T} \cup \{u\}$ \tcp*[r]{suppress neighbors within $\Delta$}
  }
}

\BlankLine
\textbf{Segmentation (cut after peaks):}\;
Set $1=\tau_0 < \tau_1 < \cdots < \tau_{M-1} < \tau_M=T$ with $\{\tau_1,\dots,\tau_{M-1}\}=\mathcal{T}$ \tcp*[r]{deterministic cuts}
\For{$k \leftarrow 1$ \KwTo $M$}{
  $\mathcal{S}_k \leftarrow \{\,t \mid \tau_{k-1} \le t < \tau_k\,\}$ \tcp*[r]{SMILES index segment}
  $\widehat{\mathcal{S}}_k \leftarrow \{\,\pi(t): t\in \mathcal{S}_k\,\}$ \tcp*[r]{map to graph nodes}
}

\BlankLine
\textbf{Pooling to dynamic tokens (avg-pool):}\;
\For{$k \leftarrow 1$ \KwTo $M$}{
  $z_k \leftarrow \mathrm{AvgPool}\big(\{\,X_i: i\in \widehat{\mathcal{S}}_k\,\}\big)\in \mathbb{R}^{d}$ \tcp*[r]{substructure token; linear in $|\widehat{\mathcal{S}}_k|$}
}

\Return{$\{z_k\}_{k=1}^{M}$, $\{\mathcal{S}_k\}_{k=1}^{M}$ \tcp*[r]{dynamic structure-aware tokens}}
\end{algorithm}

\textbf{Pre-training of NAP.}
We train the NAP model with standard autoregressive next-token prediction over SMILES. 
Given a tokenized sequence \(x_{1:T}\) (prepended with \(\langle\texttt{bos}\rangle\) and optionally terminated by \(\langle\texttt{eos}\rangle\)), a causal Transformer parameterized by \(\theta\) defines the left-to-right factorization
\begin{equation}
p_{\theta}(x_{1:T}) \;=\; \prod_{t=1}^{T} p_{\theta}\!\left(x_t \,\middle|\, x_{<t}\right).
\end{equation}
We optimize the negative log-likelihood (cross-entropy) under teacher forcing, averaged over non-padding tokens:
\begin{equation}
\mathcal{L}(\theta)
\;=\;
-\,\frac{1}{\sum_{t=1}^{T} m_t}
\sum_{t=1}^{T} 
m_t \,\log p_{\theta}\!\left(x_t \,\middle|\, x_{<t}\right),
\end{equation}
where \(m_t\in\{0,1\}\) masks out \(\langle\texttt{pad}\rangle\) (and any positions beyond the context window). 
In practice, logits are computed over the 39-token vocabulary; optional label smoothing can be applied by replacing the one-hot targets with a smoothed distribution. 
During inference, we sample (or decode greedily) from \(p_{\theta}(\cdot \mid x_{<t})\) until \(\langle\texttt{eos}\rangle\) or the 512-token context limit is reached.

\textbf{Computing costs analysis of NAP model.} Using the settings in Tab.~\ref{tab:nap_config_train_combined} on a single RTX 3090, end-to-end training lands in the “single-digit minutes” range (well below 0.1 GPU-hours for a full pass), so the cost is effectively negligible. The model is tiny ($\approx0.54$ M parameters) and can also be trained comfortably on a modern multi-core CPU within an hour, avoiding the need for a GPU altogether. Compared with a typical 8B-parameter model, this NAP setup is $\approx150$× smaller and $\approx1000$× faster for both training and inference, so it does not increase the training or inference budget.

Entropy peaks from the NAP predictor identify structural transition points in SMILES; cutting at these peaks yields data-driven substructures and dynamic tokens that preserve locality and align cleanly with the language stream--simple, rule-free, and robust under a frozen backbone. The pipeline adds negligible overhead and does not materially affect training or inference costs.

% - explain the algorithm
% - details of the entropy analysis implementation
% - the NTP model's settings (size, attention settings, computing efficiency)
% - the training of the NTP model (dataset, configs)
% - reclaim the rationale of the entropy-guided approach

\subsection{\noveltyB}
\label{app: Implementation B}
\label{app: implementation noveltyb}

\textbf{Modeling Details with Anchors and Dynamic Tokens.}
We keep standard Transformer machinery and only specify the routed, two-stream updates. Let
\(Q_{\mathrm{fix}}^{(\ell)}\in\mathbb{R}^{k\times d}\) (anchors) and \(Z^{(\ell)}\in\mathbb{R}^{M\times d}\) (dynamic tokens).

\begin{equation}
\label{eq:block-sa-fix}
Q_{\mathrm{fix},\mathrm{sa}}^{(\ell)} \;=\; Q_{\mathrm{fix}}^{(\ell)} \;+\; 
\mathrm{SelfAttn}_{\mathrm{fix}}\!\big(Q_{\mathrm{fix}}^{(\ell)},\, Z^{(\ell)}\big),
\end{equation}
\begin{equation}
\label{eq:block-sa-z}
Z_{\mathrm{sa}}^{(\ell)} \;=\; Z^{(\ell)} \;+\; 
\mathrm{SelfAttn}_{Z}\!\big(Q_{\mathrm{fix}}^{(\ell)},\, Z^{(\ell)}\big).
\end{equation}

With graph node embeddings \(X\in\mathbb{R}^{N\times d}\), each stream retrieves evidence independently:
\begin{equation}
\label{eq:xatt-fix}
\widehat{Q}_{\mathrm{fix}}^{(\ell)} \;=\; Q_{\mathrm{fix},\mathrm{sa}}^{(\ell)} \;+\;
\mathrm{CrossAttn}_{\mathrm{fix}}\!\big(Q_{\mathrm{fix},\mathrm{sa}}^{(\ell)},\,X\big),
\end{equation}
\begin{equation}
\label{eq:xatt-z}
\widehat{Z}^{(\ell)} \;=\; Z_{\mathrm{sa}}^{(\ell)} \;+\;
\mathrm{CrossAttn}_{Z}\!\big(Z_{\mathrm{sa}}^{(\ell)},\,X\big).
\end{equation}

A shared FFN is applied to each stream separately:
\begin{equation}
\label{eq:ffn-update}
Q_{\mathrm{fix}}^{(\ell+1)} \;=\; \widehat{Q}_{\mathrm{fix}}^{(\ell)} \;+\; 
\mathrm{FFN}\!\big(\widehat{Q}_{\mathrm{fix}}^{(\ell)}\big), \qquad
Z^{(\ell+1)} \;=\; \widehat{Z}^{(\ell)} \;+\; 
\mathrm{FFN}\!\big(\widehat{Z}^{(\ell)}\big).
\end{equation}

After \(L\) layers, project to the LLM space and concatenate to form the conditioning sequence:
\begin{equation}
\label{eq:proj-llm}
U \;=\; \big[\,Q_{\mathrm{fix}}^{(L)} W_{\mathrm{proj}} \,;\, Z^{(L)} W_{\mathrm{proj}}\,\big]
\;\in\; \mathbb{R}^{(k+M)\times d_{\mathrm{LLM}}}.
\end{equation}

\textbf{Configurations and pre-training settings}. We pretrain the \noveltyB bridge with a fixed anchor/dynamic query budget, moderate-depth Transformer, and a standard mixed-precision schedule with warmup, cosine decay, and weight decay; full hyperparameters and decoding settings are reported in Tab.~\ref{tab:noveltyB_params}.

\begin{table}[t]
\centering
\setlength{\tabcolsep}{10pt}
\caption{Configurations and pre-training settings of \noveltyB. The model configuration is based on SciBERT and trained from scratch.}
\label{tab:noveltyB_params}
\begin{tabular}{ll ll}
\toprule
Item & Value & Item & Value \\
\midrule
Base config & SciBERT & Embedding dimension & 512 \\
Max dynamic queries & 64 & Anchor queries & 16 \\
Max input tokens & 594 & Batch size & 48 \\
Attention heads & 8 & Transformer layers & 8 \\
Training epochs & 10 & Total steps & 15{,}599 \\
Precision & bf16-mixed & Scheduler & cosine\_lr \\
Initial learning rate & $1.0\times10^{-4}$ & Minimum learning rate & $1.0\times10^{-5}$ \\
Warmup learning rate & $1.0\times10^{-6}$ & Warmup steps & 500 \\
Weight decay & 0.05 & Temperature & 0.10 \\
Top-$k$ & 50 & Top-$p$ & 1.0 \\
Repetition penalty & 1.0 & Accumulate grad batches & 1\\
\bottomrule
\end{tabular}
\end{table}

\textbf{Pre-training objectives.}
\label{app: pretrain noveltyb}
The \noveltyB is pre-trained with three complementary objectives on Mol-Llama-Instruct dataset~\citep{kim2025}: (1) cross-modal contrastive alignment to pull together representations of the same molecule across modalities and push apart different molecules; (2) modality–matching to make fixed anchors decode the modality identity, enforcing a shared semantic interface; and (3) masked substructure reconstruction to teach dynamic tokens to preserve fine-grained structural content.

\textit{(1) Cross-modal contrastive loss.}
For a mini-batch \(\mathcal{B}\), and modalities \(m\) and \(m'\), we use an InfoNCE objective
\begin{equation}
\mathcal{L}_{\mathrm{contrast}}
= -\frac{1}{|\mathcal{B}|}
\sum_{(i,m)\in\mathcal{B}}
\log
\frac{\exp\!\big(\mathrm{sim}(z^{(i,m)}, z^{(i,m')})/\tau\big)}
{\sum_{j\in\mathcal{B}}\exp\!\big(\mathrm{sim}(z^{(i,m)}, z^{(j,m')})/\tau\big)},
\end{equation}
where \(z^{(i,m)}\in\mathbb{R}^d\) is the pooled query embedding of molecule \(i\) in modality \(m\), \(\mathrm{sim}\) is cosine similarity, and \(\tau\) is a temperature.

\textit{(2) Modality-matching loss.}
Fixed-length anchors \(Q_{\mathrm{fix}}\) must predict the modality identity via an \(M\)-way classification:
\begin{equation}
\mathcal{L}_{\mathrm{match}}
= \frac{1}{|\mathcal{B}|}
\sum_{(i,m)\in\mathcal{B}}
\mathrm{CE}\!\big(\mathbf{W}_m^{\top}\,\bar{z}^{(i)},\, m\big),
\end{equation}
where \(\bar{z}^{(i)}\) is the average of the \(k\) anchor outputs for molecule \(i\), \(\mathbf{W}_m\) are classifier parameters, and \(\mathrm{CE}\) is cross-entropy.

\textit{(3) Masked substructure reconstruction loss.}
Dynamic queries \(Q_{\mathrm{dyn}}\) are trained to recover masked fragments within each modality:
\begin{equation}
\mathcal{L}_{\mathrm{recon}}
= \frac{1}{|\mathcal{B}|\,M}
\sum_{(i,m)\in\mathcal{B}}
\mathrm{CE}\!\big(f^{(m)}_{\mathrm{dec}}(z^{(i,m)}),\; \mathrm{masked\_tokens}^{(i,m)}\big),
\end{equation}
where \(f^{(m)}_{\mathrm{dec}}\) is a modality-specific decoder and \(\mathrm{masked\_tokens}^{(i,m)}\) are the targets.

The total objective is a weighted sum:
\begin{equation}
\mathcal{L}_{\mathrm{total}}
= \lambda_{1}\,\mathcal{L}_{\mathrm{contrast}}
+ \lambda_{2}\,\mathcal{L}_{\mathrm{match}}
+ \lambda_{3}\,\mathcal{L}_{\mathrm{recon}},
\qquad \lambda_{1},\lambda_{2},\lambda_{3}>0.
\end{equation}

% - Use some equations to explain the attention calculation
\subsection{\modelname Settings and Finetuning}
\label{app: Implementation C}
\label{app: implementation finetune}

After pre-training the \noveltyB, the next stage is finetuning this connector with frozen LLMs.

\begin{wraptable}{r}{0.5\textwidth}
\centering
\vskip -13pt
\caption{Fine-tuning dataset. Four types of instruction data from Mol-Llama-Instruct, the data types and amounts are listed.} 
\vskip -5pt
\label{tab: finetune data}
\begin{tabular}{l r}
\toprule
Category & Amount \\
\midrule
Detailed Structural Descriptions & 77,239 \\
Structure-to-Chemical Features & 73,712 \\
Structure-to-Biological Features & 73,645 \\
Comprehensive Conversations & 60,147 \\
% \midrule
% Total & 284,743 \\
\bottomrule
\end{tabular}
\vskip -10pt
\end{wraptable}
\textbf{Selection of LLMs.} ~\ref{tab: ablation-backbones}
After pre-training the \noveltyB\ bridge, we finetune only the connector jointly with graph encoders and the LLM, while keeping both the encoders and the LLM frozen. We screen several 7–8B backbones, including Mistral-8B~\citep{ministral8b-2410}, Qwen3-8B~\citep{yang2025qwen3}, Llama2-7B~\citep{touvron2023llama2}, and Llama3.1-8B~\citep{dubey2024llama3}--covering strong base and reasoning-oriented models. As shown in Tab.~\ref{tab: ablation-backbones}, Llama3.1-8B yields consistent performance on the MoleculeQA~\citep{lu2024moleculeqa} benchmark. We therefore adopt Llama3.1-8B as the default backbone for subsequent experiments, as it provides the strongest raw molecular understanding and the best absolute performance once paired with our connector. Notably, \modelname delivers consistent gains across all LLM backbones, underscoring the effectiveness of our connector design and its robustness to different architectures. For the frozen backbone, we freeze the main layers and retain the embedding layers trainable to ensure a stable convergence process, as the graph tokens from the \noveltyB are mapped into LLM's embedding space.

\textbf{Finetune configuration and settings.} The graph encoders and LLM backbone use the default settings released by the author~\citep{zhou2023unimol,liu2023moleculeSTM}. The training settings of \noveltyB list in the Tab~\ref{tab: finetune_settings}.

\begin{table}[t]
\centering
\caption{Fine-tuning settings for \modelname\ with frozen Llama-3.1-8B backbone.}
\label{tab: finetune_settings}
\begin{tabular}{ll ll}
\toprule
Item & Value & Item & Value \\
\midrule
Precision & bf16-mixed & Epochs & 2 \\
Accumulate\_grad\_batches & 8 & Weight decay & 0.05 \\
Initial learning rate & $1.0\times10^{-4}$ & Minimum learning rate & $5.0\times10^{-6}$ \\
Warmup learning rate & $1.0\times10^{-6}$ & Warmup steps & 1{,}000 \\
Scheduler & linear\_warmup\_cosine\_lr & Temperature & 0.10 \\
LoRA rank ($r$) & 8 & LoRA alpha & 32 \\
LoRA dropout & 0.10 & Trainable parameters & 213M \\
Total parameters & 8.3B & & \\
\bottomrule
\end{tabular}
\end{table}

\textbf{Training dataset and prompt examples.} As summarized in Tab.~\ref{tab: finetune data}, our fine-tuning corpus from Mol-Llama-Instruct comprises four instruction types: (i) Detailed Structural Descriptions, that elicit atom-/substructure–level narratives of a molecule; (ii) Structure-to-Chemical Features, linking structure to physicochemical properties; (iii) Structure-to-Biological Features, covering bioactivity/ADMET–style attributes; and (iv) Comprehensive Conversations for multi-turn reasoning and explanation. The dataset contains 284,743 instances in total, providing balanced coverage of descriptive, analytic, and conversational skills. Fig.~\ref{fig: sft data example} shows the prompts and examples of the finetuning data.
\begin{figure}[t]
    \centering
    \vskip -5pt
    \includegraphics[width=1.0\linewidth]{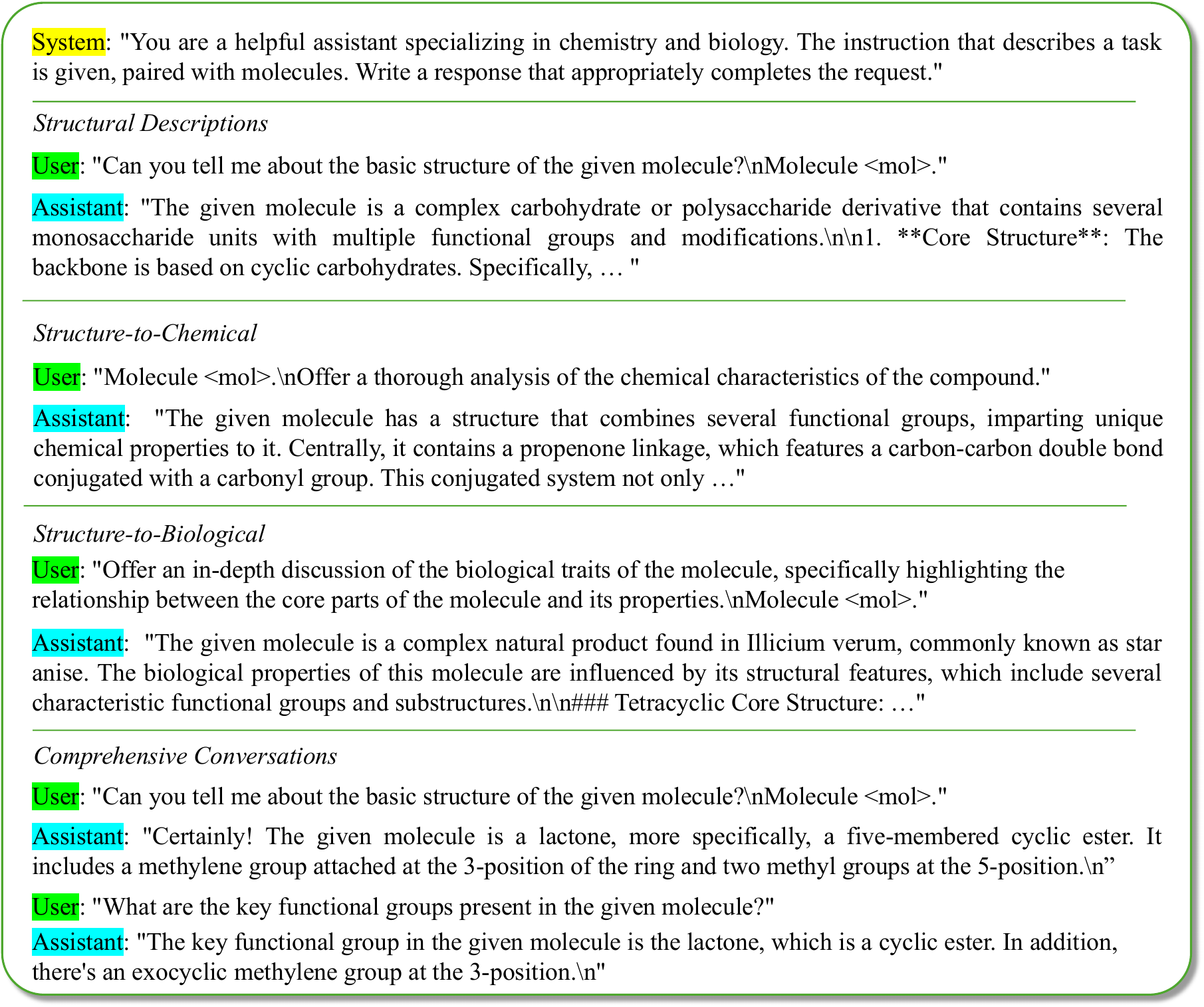}
    \vskip -5pt
    \caption{Prompt setup and instruction data examples of \modelname in the finetuning stage with backbone frozen. One instance is shown for each data type; the comprehensive conversation category contains multiple rounds (exceeding ten).}
    \vskip -4pt
    \label{fig: sft data example}
\end{figure}

% - the config and selection of graph encoders
% - the config and selection of LLMs

\section{Experiments Details}
\label{app: experiments}

Here, we provide additional experimental details to complement the main results. This section first describes the benchmarks and baseline systems used for comparison, followed by the evaluation setup (splits, prompts, metrics, and decoding). We then include representative response examples to illustrate typical model behavior, and then we report our data-contamination checks and screening procedures. Finally, we document reproducibility essentials--configurations and other necessary information to enable faithful re-runs.

\subsection{Benchmarks and Baselines}
\label{app: experiments A}

\subsubsection*{Benchmarks}

\begin{wraptable}{r}{0.5\textwidth}
\centering
\setlength{\tabcolsep}{10pt}
\vskip -13pt
\caption{MoleculeQA split counts using Bemis--Murcko scaffold-based splitting at the molecule level to avoid scaffold overlap across splits.}
\vskip -5pt
\label{tab:moleculeqa_splits}
\begin{tabular}{l l}
\toprule
Category & Train / Valid / Test \\
\midrule
Overall   & 49{,}993 \;/\; 5{,}795 \;/\; 5{,}786 \\
Structure & 32{,}176 \;/\; 3{,}314 \;/\; 3{,}113 \\
Property  & 4{,}838  \;/\; 698   \;/\; 731   \\
Application & 1{,}917 \;/\; 558   \;/\; 599   \\
Source    & 11{,}062 \;/\; 1{,}225 \;/\; 1{,}343 \\
\bottomrule
\end{tabular}
\vskip -10pt
\end{wraptable}

\textbf{MoleculeQA.} A large-scale multiple-choice QA benchmark for molecular factuality, where each instance pairs a human-written question with one correct option and three distractors from authoritative molecule descriptions~\citep{lu2024moleculeqa}. It covers four aspects: Structure (configuration, functional groups, backbone), Source (isolation, discovery/derivation, metabolites), Property (physicochemical/biological properties, safety, mechanisms), and Application (therapeutic/chemical uses, approvals, research/agricultural agents). Molecules are split by Bemis-Murcko scaffolds into train/dev/test sets of QA pairs as a total of 61,574. Aspect-wise counts are reported in Tab.~\ref{tab:moleculeqa_splits}.

\textbf{Mol-Instructions.} Mol-Instructions~\citep{fang2023} is a large-scale instruction-tuning corpus for the biomolecular domain in instruction–response format for LLM supervision. Each example couples a natural-language prompt with target text for molecule-centric skills. \rebuttal{We focus on seven tasks: six molecule-oriented tasks and one open question task. The corpus spans over two million instructions and includes the captioning and property instruction sets used in our fine-tuning.}

\textbf{Molecular Property Prediction.} We use ten ADME-focused datasets from Therapeutics Data Commons (TDC)~\citep{huang2021therapeutics} \rebuttal{and MoleculeNet~\citep{wu2018moleculenet}. For instance,} BBBP (Blood-Brain Barrier Penetration) is a binary classification task predicting a compound’s ability to cross the blood–brain barrier. PAMPA (Parallel Artificial Membrane Permeability Assay) predicts passive membrane permeability from an in vitro surrogate of oral absorption. Both datasets are standardized in TDC for model development and fair comparison.

% - introduce the benchmarks and the reason we chose these datasets
% - introduce the baseline models and reasons for the choices
\subsubsection*{Baselines}

\textbf{General Language Models.} We compare against strong general LLMs spanning proprietary and open releases: GPT-4o~\citep{openai2023gpt4} and GPT-5~\citep{openai2025gpt5} as OpenAI’s flagship multimodal and next-generation reasoning models. From Meta, we include Llama2-7B and Llama3.1-8B, widely used open backbones for downstream fine-tuning and dialogue~\citep{touvron2023llama2,dubey2024llama3}. We also consider domain/community models: Galactica for science-centric pretraining~\citep{taylor2022galactica}, BLOOM as a large multilingual open model~\citep{scao2022bloom}, and Pythia for scaling/analysis studies~\citep{biderman2023pythia}. Finally, we include popular instruction-tuned chat models derived from LLaMA/Llama-2 (Vicuna-v1.5~\citep{Zheng2023vicuna}, Alpaca-7B~\citep{taori2023alpaca}, Baize-7B~\citep{xu2023baize}) as well as Qwen3-8B as a recent multilingual/reasoning baseline~\cite{yang2025qwen3}.

\textbf{Molecular Language Models.}
For molecular-domain LLMs spanning datasets, graph/3D alignment, and reasoning, Mol-Instructions~\citep{fang2023} provides a large instruction–response corpus tailored to molecules, while trained Llama-based Mol-Instructions model themselves. Mol-Llama~\citep{kim2025} targets general molecular understanding via multi-modal instruction tuning. 3D-MoLM~\citep{li20243dmolm} equips an LM with a 3D molecular encoder through a dedicated projector for 3D-aware captioning and QA. LLaMo~\citep{park2024llamo} integrates a molecular graph encoder with a multi-level graph projector to bridge graphs and language. BioMedGPT-LM~\citep{luo2023biomedgpt} is a biomedical generative LM adapted for molecule- and biology-centric tasks. Mol-Reasoner~\citep{Zhao2025MolReasonerTE} emphasizes interpretable chemical reasoning with a two-stage procedure (supervised reasoning initialization followed by reinforcement learning). \rebuttal{We further compare with a graph-text align model HIGHT~\citep{Chen2024HIGHTHG} on three tasks from the Mol-Instructions~\citep{fang2023} benchmark.}

Benchmarks span factual QA, molecular captioning, \rebuttal{property prediction, forward reaction prediction, retrosynthesis, reagent prediction}, and ADME tasks, providing complementary coverage of accuracy, reasoning, and domain grounding. Baselines include both strong general LLMs and chemistry-specialized LLMs to isolate \modelname’s contribution beyond backbone capacity.

\subsubsection*{Baseline Results}

The official benchmark protocols and data splits are adopted for all evaluations. For baselines whose reported performance trails ours by more than 10\%, we cite the results directly from the benchmark sources \citep{lu2024moleculeqa,fang2023,kim2025}. For competitive models (within this margin) and for baselines not covered by the benchmark, we re-run both the baselines and our method under a unified, fair protocol using released code and configurations. Full implementation and evaluation details are provided in App.~\ref{app: experiments B}.

\subsection{Data Processing}
\label{app: experiments data process}
% For the structural input, we use Uni-Mol~\citep{zhou2023unimol} and MoleculeSTM~\citep{liu2023moleculeSTM} approaches. The 3D coordinates are provided in the Mol-Llama-Instruct dataset for pre-training and fine-tuning. For other datasets, such as the Mol-Instructions dataset, we use MMFF to optimize the 3D conformer and use RDKit to generate a 2D graph. We filtered out the molecules that can not be processed as they are incorrect molecules. We also open-source these data. Please explain what MMFF is.

For structural inputs, we adopt Uni-Mol~\citep{zhou2023unimol} and MoleculeSTM~\citep{liu2023moleculeSTM}. For Mol-LLaMA-Instruct~\citep{kim2025}, we use the provided 3D coordinates for both pretraining and finetuning. For other datasets (e.g., Mol-Instructions~\citep{fang2023}), we generate eleven conformers with RDKit and perform geometry optimization using \textbf{MMFF} (Merck Molecular Force Field~\cite{halgren1996mmff}), a classical small molecule force field that assigns bonded and nonbonded terms to atoms and bonds and minimizes molecular energy to a locally stable 3D geometry. After optimization, RDKit is used to construct the 2D molecular graph (including atom and bond types, charges, hybridization, aromaticity, and stereochemical flags). Molecules that fail RDKit sanitization are filtered out. All processed data and scripts are released for reproducibility.

\subsection{Evaluation Settings}
\label{app: experiments B}
\label{app: evaluation settings}
\begin{figure}[t]
    \centering
    \vskip -5pt
    \includegraphics[width=1.0\linewidth]{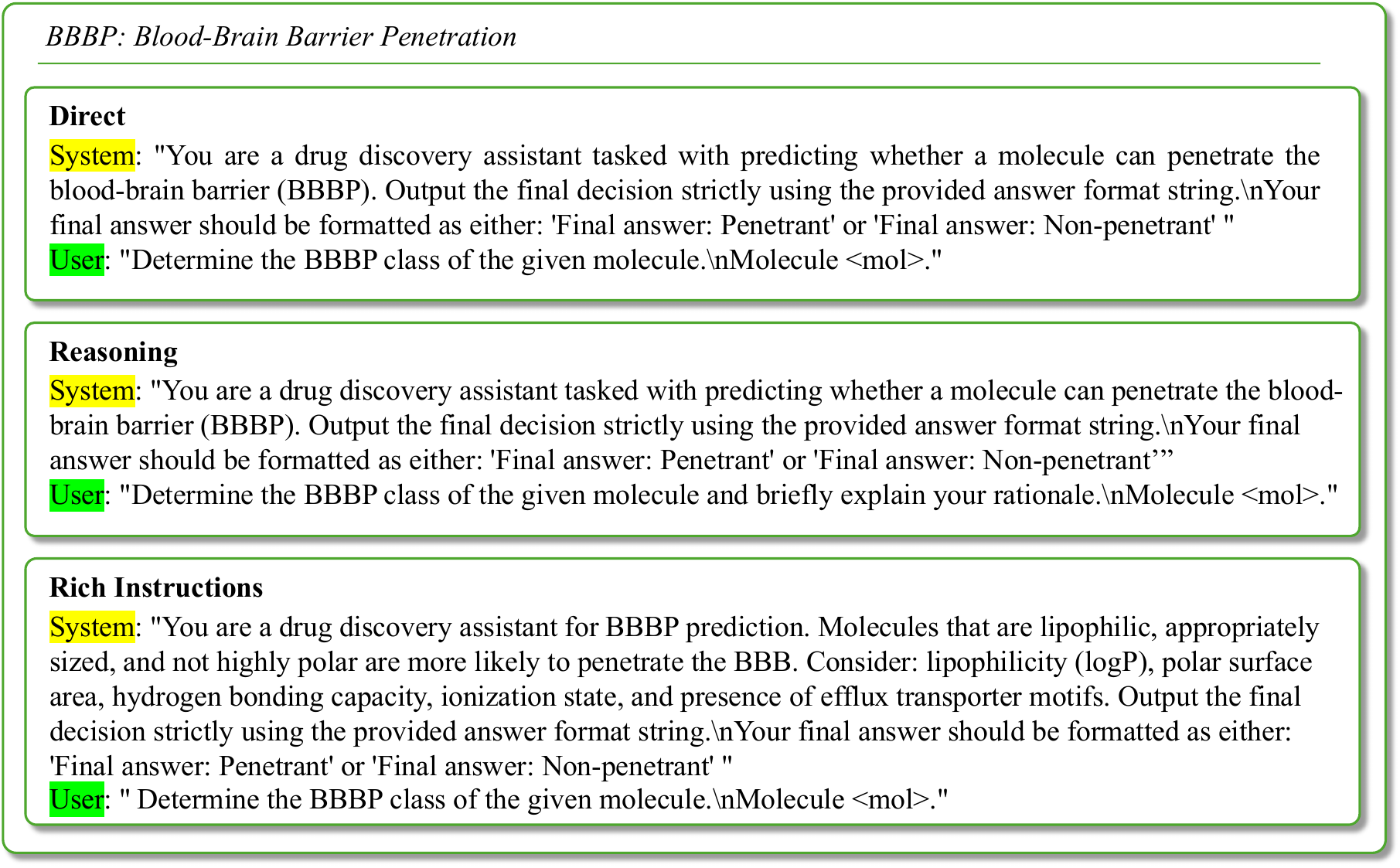}
    \vskip -5pt

    \caption{Zero-shot prompt templates for BBBP (Blood–Brain Barrier Penetration). Three settings are evaluated: (i) Direct, which asks for a binary decision; (ii) Reasoning, which requests a brief rationale before the decision; and (iii) Rich Instructions, which provides domain context (e.g., lipophilicity, PSA, H-bonding, ionization, efflux motifs) prior to answering. All prompts enforce an identical output format: \texttt{Final answer: Penetrant} or \texttt{Final answer: Non-penetrant}.}
    \vskip -4pt
    \label{fig: bbbp-prompt}
\end{figure}

\begin{table}[t]
\centering
\setlength{\tabcolsep}{10pt}
\caption{Hyperparameter settings for benchmarks. For all the evaluated models and datasets, identical finetuning or inference parameters are applied for fair comparison.}
\vskip -5pt
\label{tab:finetune_by_benchmark}
\begin{tabular}{lccc}
\toprule
Parameter & MoleculeQA & Mol-Instructions & PAMPA/BBBP \\
\midrule
Batch size & 2 & 4 & 32 \\
Precision & BF16-Mixed & BF16-Mixed & BF16-Mixed \\
Max epochs & 5 & 10 & -- \\
Accumulate grad batches & 64 & 32 & -- \\
Weight decay & 0.05 & 0.05 & -- \\
Initial LR & 1.00e-04 & 1.00e-04 & -- \\
Minimum LR & 1.00e-05 & 1.00e-05 & -- \\
Warmup LR & 1.00e-06 & 1.00e-06 & -- \\
Warmup steps & 100 & 100 & -- \\
Scheduler & Cosine LR & Cosine LR & -- \\
Temperature & 0.1 & 0.1 & 0.1 \\
Max input tokens & 1024 & 1024 & 512 \\
\bottomrule
\end{tabular}
\vskip -10pt
\end{table}

As summarized in Tab.~\ref{tab:finetune_by_benchmark}, all models evaluated on MoleculeQA~\citep{lu2024moleculeqa} and Mol-Instructions~\citep{fang2023} are finetuned under identical hyperparameter settings to ensure a fair comparison. For PAMPA and BBBP~\citep{huang2021therapeutics}, all models are evaluated zero-shot with a shared prompt template—BBBP as shown in Fig.~\ref{fig: bbbp-prompt} and PAMPA following Mol-Llama~\citep{kim2025}. For the 10-shot setting on MoleculeQA, the support set consists of 10 randomly selected examples fixed once and reused for every 10-shot model to control variance.

\paragraph{Evaluation Metrics.}
\label{app: evaluation metrics}
(1) MoleculeQA~\citep{lu2024moleculeqa}: We report task-wise accuracy (fraction of correctly answered questions within each task). Average accuracy is the unweighted mean of the four task accuracies, and total accuracy is the overall correct/total across all questions pooled from the four tasks (count-based). Higher is better.
(2) Mol-Instructions~\citep{fang2023}: For property prediction, we use Mean Absolute Error (MAE), $\mathrm{MAE}=\tfrac{1}{n}\sum_{i=1}^{n}\lvert \hat{y}_i-y_i\rvert$ (lower is better). \rebuttal{For molecular description generation, we report:
BLEU-2/4 (n\mbox{-}gram precision with a brevity penalty; BLEU-2 uses up to bigrams, BLEU-4 up to 4-grams; higher is better),
ROUGE-1/2 (recall-oriented overlap of unigrams/bigrams; higher is better), ROUGE-L (recall-oriented longest common subsequence; higher is better), and METEOR (harmonic mean of unigram precision and recall with stemming/synonym matching and fragmentation penalty; higher is better). For SMILES/text generation tasks, we additionally report Exact (fraction of predictions that exactly match the reference sequence; higher is better), Levenshtein (mean Levenshtein edit distance between predicted and reference SMILES; lower is better), RDK/MACCS/Morgan FTS (mean fingerprint Tanimoto similarity computed with RDKit, MACCS, and Morgan fingerprints respectively; higher is better), Validity (fraction of generated SMILES that can be parsed and sanitized into valid molecules by RDKit; higher is better), and BERTScore (contextual-embedding-based semantic similarity between generated and reference texts; higher is better).}
% - fair setting for each model
% - table of finetune settings

% \subsection{Reasoning Ability and Response Examples}

% - MoleculeQA, pampa bbbp, description
\begingroup
\color{\tablehighlightcolor}
% \captionsetup{labelfont={color=blue}, textfont={color=blue}}

\subsection{Evaluation on Macrocycle Dataset–NPMMPD}
\label{app: macrocycle}

Macrocycles represent a challenging regime for molecular–language alignment due to their large ring systems, long-range dependencies, and highly flexible conformational space. To assess the robustness of EDT-Former under this more demanding structural distribution, we additionally evaluate on the NPMMPD macrocycle benchmark~\citep{feng2025membrane}. The dataset provides experimentally derived permeability measurements, and following prior work, we focus on the PAMPA Log~P\textsubscript{app} prediction task, which contains 326 unique macrocyclic molecules. This setting remains strictly zero-shot.

For a fair comparison across models with different pretraining protocols, we adopt the standardized 13 prompt configurations used in the main paper (Direct, Reasoning, Rich Instructions, and ten GPT-generated paraphrased templates). All models receive exactly the same molecule description and prompt wording under each configuration.

Table~\ref{tab: macrocycles} reports accuracy under each prompt and the averaged accuracy. Across nearly all prompt variants, EDT-Former achieves the strongest performance, demonstrating stable behavior even when prompts vary substantially in structure and reasoning depth. This robustness highlights the advantage of combining entropy-guided dynamic segmentation with anchor-based alignment, especially for large and structurally complex macrocyclic systems where fixed-length token connectors typically struggle.

\begin{table}[t]
\centering
\tiny
\color{\tablehighlightcolor}
\setlength{\aboverulesep}{1pt}
\setlength{\belowrulesep}{1pt}
\caption{Zero-shot accuracy (\%) on the NPMMPD PAMPA dataset under 13 prompt settings. The best (pink) and second-best (lightpink) results are highlighted.}
\label{tab: macrocycles}
\resizebox{\linewidth}{!}{
\setlength{\tabcolsep}{4pt}
\begin{tabular}{lccccccccccccccc}
\toprule
Model & D1 & D2 & D3 & D4 & R1 & R-2 & R-3 & RI-1 & RI-2 & RI-3 & Bi & List & Conf & Avg. \\
\midrule
\rowcolor[HTML]{EFEFEF}
\multicolumn{15}{l}{\textit{General LLMs}} \\
Llama2            & \second59.84 & \second52.56  & \second47.48 & \second41.37 & \second52.49 & \best48.03  & 41.08        & \best57.14  & \second52.67  & \second49.78  & 41.3         & 42.24        & 41.3         & \second48.25 \\
Llama3.1          & 46.54        & 42.51       & 38.68        & 36.10        & 47.33        & 44.25       & 37.68        & 51.4        & 46.69       & 43.82       & 41.3         & 41.61        & 43.48        & 43.18       \\
\midrule
\rowcolor[HTML]{EFEFEF}
\multicolumn{15}{l}{\textit{Molecular LLMs}} \\
Mol-Ins.-Llama2   & 41.94        & 30.98       & 37.30        & 24.32        & 44.63        & 38.23       & 41.00        & 44.55       & 42.02       & 38.68       & 45.34        & 40.37        & 40.37        & 39.21       \\
Mol-Ins.-Llama3.1 & 41.67        & 35.27       & 42.03        & 39.23        & 41.13        & 43.54       & 39.53        & 38.87       & 35.36       & 35.71       & 42.55        & 41.6         & \best59.57   & 41.24       \\
Mol-LLaMA-2       & 44.88        & 35.92       & 41.58        & 32.40        & 36.21        & 25.58       & 31.20        & 38.81       & 33.58       & 30.19       & \best54.04    & 41.56        & 40.94        & 37.45       \\
Mol-LLaMA3.1      & 39.79        & 30.86       & 36.46        & 23.63        & 42.93        & 42.93       & \second42.93 & 38.11       & 35.14       & 27.71       & \second49.84  & \second42.95  & \second44.72  & 38.31       \\
\midrule
\rowcolor[HTML]{EFEFEF}
\multicolumn{15}{l}{\textit{Ours}} \\
\textbf{EDT-Former}    & \best62.66   & \best53.93 & \best55.55   & \best43.47   & \best58.39   & \second47.98 & \best50.98   & \second55.9  & \best53.06 & \best50.59 & 45.65        & \best43.48   & \second55.59  & \textbf{\best51.81}   \\
\bottomrule
\end{tabular}
}
\end{table}

\setcounter{topnumber}{4}
\renewcommand{\topfraction}{1}

\begin{table}[t]
\centering
\tiny
\color{\tablehighlightcolor}
\setlength{\aboverulesep}{1pt}
\setlength{\belowrulesep}{1pt}
\caption{Zero-shot performance per prompt on the AMES toxicity prediction task from the TDC benchmark. We report accuracy (\%) under 13 prompt settings. The best (pink) and second-best (lightpink) results are highlighted.}
\label{tab:tdc_ames_prompts}
\vskip -5pt
\resizebox{\linewidth}{!}{
\setlength{\tabcolsep}{4pt}
\begin{tabular}{lcccccccccccccccc}
\toprule
Model & Size & D1 & D2 & D3 & D4 & R1 & R-2 & R-3 & RI-1 & RI-2 & RI-3 & Bi & List & Conf & Avg. \\
\midrule
\rowcolor[HTML]{EFEFEF}
\multicolumn{16}{l}{\textit{General LLMs}} \\
GPT-4o   & -   & 46.26        & 54.95        & \second55.43 & \second55.00 & 47.11        & 55.38        & 55.27        & 53.07        & 53.90        & 55.72        & \best55.17  & 55.48        & 53.93        & 53.59        \\
Llama2   & 7B  & 45.74        & 54.92        & 54.81        & 54.81        & 45.80        & 54.81        & 54.81        & 52.34        & 53.44        & 54.81        & \second54.88 & 54.13        & 54.68        & 53.08        \\
Llama3.1 & 8B  & 46.13        & 54.33        & 55.40        & 54.54        & 47.77        & 55.30        & 55.09        & 53.15        & 53.71        & \second55.98 & 54.81        & \best56.19   & 52.54        & 53.46        \\
\midrule
\rowcolor[HTML]{EFEFEF}
\multicolumn{16}{l}{\textit{Molecular LLMs}} \\
3D-MoLM         & 7B   & 51.75        & 54.43        & 52.95        & 50.29        & 52.35        & 53.46        & 53.90        & 52.75        & 53.59        & 55.48        & 53.03        & 50.28        & 54.91        & 53.01        \\
LLaMo           & 7B   & 49.50        & 52.31        & 53.21        & 50.36        & 49.80        & 50.70        & 50.91        & 50.34        & 53.11        & 54.89        & 53.57        & 48.12        & 54.93        & 51.67        \\
Mol-Ins.-Llama2 & 8B   & \second53.37 & \best55.91   & 52.06        & 49.59        & \best54.26   & 55.57        & \second56.26 & \second54.51 & 53.44        & 55.43        & 51.86        & 51.79        & 54.26        & 53.72        \\
Mol-Ins.-Llama3.1
                & 8B   & 44.98        & 48.07        & 53.71        & 50.48        & 44.70        & 45.19        & 44.91        & 45.53        & 52.13        & 53.71        & 54.63        & 43.81        & 54.95        & 48.98        \\
Mol-LLaMA-2     & 7.2B & 46.34        & 53.27        & 54.62        & 53.01        & 51.37        & \second55.83 & 54.66        & 53.28        & \second56.49 & 53.16        & 48.84        & 52.29        & 55.51        & 52.97        \\
Mol-LLaMA3.1    & 8.3B & \best53.51   & 54.92        & 55.16        & \best55.60   & \second53.39 & 54.68        & 54.21        & 53.78        & 55.36        & 53.05        & 54.12        & 54.81        & \second55.96 & \second54.50 \\
\midrule\rowcolor[HTML]{EFEFEF}
\multicolumn{16}{l}{\textit{Ours}} \\
EDT-Former      & 8.3B & 48.43        & \second55.53 & \best56.18   & 54.97        & 52.69        & \best56.69   & \best57.86   & \best54.52   & \best56.98   & \best58.79   & 50.77        & \second55.59 & \best58.59   & \best55.20   \\
\bottomrule
\end{tabular}
}
\end{table}

\begin{table}[t]
\centering
\tiny
\color{\tablehighlightcolor}
\setlength{\aboverulesep}{1pt}
\setlength{\belowrulesep}{1pt}
\caption{Zero-shot performance per prompt on the BACE classification task from the MoleculeNet benchmark. We report accuracy (\%) under 13 prompt settings. The best (pink) and second-best (lightpink) results are highlighted.}
\label{tab:molnet_bace_prompts}
\vskip -5pt
\setlength{\tabcolsep}{4pt}
\resizebox{\linewidth}{!}{
\begin{tabular}{lcccccccccccccccc}
\toprule
Model & Size & D1 & D2 & D3 & D4 & R1 & R-2 & R-3 & RI-1 & RI-2 & RI-3 & Bi & List & Conf & Avg. \\
\midrule
\rowcolor[HTML]{EFEFEF}
\multicolumn{16}{l}{\textit{General LLMs}} \\
GPT-4o   & -   & 40.72        & 37.22        & 38.88        & 39.46        & \second40.14 & 39.50        & \second40.39 & 38.96        & 38.16        & 39.34        & 39.12        & \second42.25 & 39.13        & 39.48        \\
Llama2   & 7B  & 40.13        & 36.84        & 38.82        & 38.82        & 39.47        & 38.82        & 39.47        & 38.00        & 37.50        & 38.82        & 38.82        & 42.11        & 38.82        & 38.96        \\
Llama3.1 & 8B  & 40.13        & 38.16        & 38.82        & 43.42        & 38.82        & \second41.45 & 38.82        & 40.79        & 38.16        & \best46.71   & 39.47        & 38.16        & 42.76        & 40.44        \\
\midrule\rowcolor[HTML]{EFEFEF}
\multicolumn{16}{l}{\textit{Molecular LLMs}} \\
3D-MoLM         & 7B   & 39.74        & 39.53        & \second48.17 & 50.44        & 37.91        & 35.14        & 36.58        & 37.29        & \best45.79   & 42.12        & 45.73  & 36.62        & 43.52        & 41.43        \\
LLaMo           & 7B   & 39.60        & 39.08        & 48.06        & 49.94        & 39.67        & 36.37        & 37.73        & 36.87        & 45.30        & 41.65        & \second47.77 & 37.39        & 45.37        & 41.91        \\
Mol-Ins.-Llama2 & 8B   & 38.82        & 40.13        & \best49.34   & \second51.97 & 38.82        & 38.16        & 38.82        & 38.82        & \second45.39 & 40.79        & \best48.03   & 39.47        & \second45.39 & \second42.61 \\
Mol-Ins.-Llama3.1
                & 8B   & 40.22        & 36.36        & 39.78        & 36.76        & 39.84        & 38.37        & 39.68        & \second45.16 & 42.67        & 39.76        & 40.00        & 36.11        & \best46.91   & 40.12        \\
Mol-LLaMA-2     & 7.2B & \second44.49 & \second48.68 & 40.67        & \second52.63 & 38.82        & 39.07        & 38.82        & 38.03        & 38.57        & 38.46        & 44.74        & 38.16        & 38.82        & 41.54        \\
Mol-LLaMA3.1    & 8.3B & 37.50        & 39.29        & 38.19        & 39.86        & 39.47        & 40.54        & 36.67        & 38.03        & 35.51        & 38.81        & 40.40        & 37.14        & 41.45        & 38.68        \\
\midrule\rowcolor[HTML]{EFEFEF}
\multicolumn{16}{l}{\textit{Ours}} \\
EDT-Former      & 8.3B & \best46.05   & \best55.92   & 37.50        & \best52.98   & \best46.71   & \best61.18   & \best60.53   & \best55.26   & 44.08        & \second44.74 & 38.82        & \best53.29   & 41.45        & \best49.12   \\
\bottomrule
\end{tabular}
}
\end{table}

\begin{table}[t]
\centering
\tiny
\setlength{\aboverulesep}{1pt}
\setlength{\belowrulesep}{1pt}
\color{\tablehighlightcolor}
\caption{Zero-shot performance per prompt on the BBBP classification task from the TDC benchmark. We report accuracy (\%) under 13 prompt settings. The best (pink) and second-best (lightpink) results are highlighted.}
\vskip -5pt
\label{tab:tdc_bbbp_prompts}
\resizebox{\linewidth}{!}{
\setlength{\tabcolsep}{4pt}
\begin{tabular}{lcccccccccccccccc}
\toprule
Model & Size & D1 & D2 & D3 & D4 & R1 & R-2 & R-3 & RI-1 & RI-2 & RI-3 & Bi & List & Conf & Avg. \\
\midrule
\rowcolor[HTML]{EFEFEF}
\multicolumn{16}{l}{\textit{General LLMs}} \\
GPT-4o   & -   & \second60.82 & \second61.46 & \second61.46 & \second62.34 & \second61.34 & \second61.86 & \second62.66 & \second64.43 & \second65.10 & \second65.11 & \second65.91 & \second66.37 & \second66.89 & \second63.52 \\
Llama2   & 7B  & 37.37        & 38.33        & 38.50        & 39.22        & 51.56        & 52.35        & 52.53        & 53.09        & 53.34        & 54.07        & 55.01        & 55.71        & 56.71        & 49.06        \\
Llama3.1 & 8B  & 57.07        & 57.15        & 57.19        & 58.02        & 51.03        & 51.44        & 52.34        & 55.15        & 55.68        & 56.28        & 57.06        & 57.21        & 57.66        & 55.64        \\
\midrule\rowcolor[HTML]{EFEFEF}
\multicolumn{16}{l}{\textit{Molecular LLMs}} \\
3D-MoLM         & 7B   & 49.14        & 50.01        & 50.42        & 50.74        & 51.65        & 52.16        & 51.07        & 51.91        & 53.61        & 50.74        & 51.28        & 51.54        & 51.40        & 51.20        \\
LLaMo           & 7B   & 55.44        & 55.81        & 56.42        & 56.85        & 55.45        & 55.46        & 56.16        & 56.91        & 58.67        & 57.52        & 59.31        & 59.29        & 60.08        & 57.18        \\
Mol-Ins.-Llama2 & 8B   & 52.58        & 53.47        & 53.95        & 54.29        & 52.58        & 51.55        & 50.79        & 51.34        & 50.14        & 48.97        & 51.29        & 48.68        & 46.19        & 51.22        \\
Mol-Ins.-Llama3.1
                & 8B   & 53.44        & 54.37        & 55.09        & 55.74        & 55.31        & 55.38        & 56.76        & 54.91        & 54.38        & 52.27        & 49.38        & 49.15        & 46.62        & 53.29        \\
Mol-LLaMA-2     & 7.2B & 53.37        & 55.28        & 53.97        & 54.19        & 52.58        & 53.65        & 55.14        & 52.58        & 52.43        & 53.89        & 55.84        & 52.56        & 56.38        & 53.99        \\
Mol-LLaMA3.1    & 8.3B & 59.54        & 59.54        & 58.24        & 59.38        & 55.56        & 53.18        & 51.72        & 59.08        & 54.67        & 53.29        & 55.73        & 58.92        & 57.45        & 56.64        \\
\midrule\rowcolor[HTML]{EFEFEF}
\multicolumn{16}{l}{\textit{Ours}} \\
EDT-Former      & 8.3B & \best74.44   & \best71.83   & \best73.46   & \best70.92   & \best74.69   & \best70.15   & \best68.73   & \best75.86   & \best72.38   & \best70.95   & \best73.12   & \best71.47   & \best74.28   & \best72.48   \\
\bottomrule
\end{tabular}
}
\end{table}

\subsection{Detailed Results of Property Prediction Tasks per Prompt}
\label{app: prompt-detail}

Large language models exhibit substantial variability under different natural language instructions, especially for molecular property prediction. To systematically evaluate this prompt sensitivity, we design a unified set of \textbf{13 prompts} spanning three major categories (\textbf{Direct}, \textbf{Reasoning}, and \textbf{Rich Instructions}) together with three complementary variants that stress different aspects of LLM behavior. A full prompt example of BBBP is shown in Fig.~\ref{fig: detailed prompt}.

\paragraph{Direct prompts.}
These prompts provide a minimalistic instruction: the model is asked to output only the final class label without any auxiliary context or justification. They represent the “pure classification’’ setting and test whether the model can reliably make a decision without verbal reasoning or domain cues.

\paragraph{Reasoning prompts.}
These prompts require the model to briefly explain the structural or physicochemical rationale (e.g., substructures, polarity, lipophilicity) before giving the final answer. Although the reasoning text is not evaluated, this setup probes whether intermediate verbalization stabilizes predictions or helps models avoid superficial correlations.

\paragraph{Rich-instruction prompts.}
These prompts include additional domain background, such as ADMET principles, known structure–property heuristics, or high-level hints about BBBP, toxicity, or activity patterns. They simulate a more instructive ``expert assistant'' setting and evaluate whether models can utilize domain knowledge to improve prediction robustness.

\paragraph{Additional control-style prompts.}
Beyond these three primary categories, we include:
\begin{itemize}[leftmargin=*, itemsep=0.3ex, topsep=0.3ex, parsep=0pt, partopsep=0pt]
    \item \textbf{Binary}: forces the LLM to behave like a strict binary classifier.
    \item \textbf{Confidence}: encourages internal reflection before giving the final label.
    \item \textbf{Checklist}: requires the LLM to follow a structured analysis procedure (identify substructures, consider polarity, check alerts, then decide).
\end{itemize}
These variants test different behavioral biases of LLMs and reveal how formatting, reasoning style, and instruction complexity influence prediction outputs.

\paragraph{Why prompt diversity matters.}
Property prediction tasks, especially classification benchmarks such as BBBP, AMES, BACE, and toxicity tasks, are highly sensitive to instruction phrasing.  
For tasks with balanced classes and clear structural cues (e.g., BBBP, AMES), most models remain relatively stable across the 13 prompts.  
However, for datasets with severe imbalance or safety-triggering semantics (e.g., \textbf{ClinTox}, where the term “toxicity’’ may cause LLMs to refuse or hedge answers), prompt phrasing can cause large swings in accuracy.  
Therefore, evaluating with a single prompt can give a misleading view of a model’s real capability.

\paragraph{Results across the 13 prompts.}
Tables in this section report complete accuracy across all prompts for all ten datasets (AMES-Tab.~\ref{tab:tdc_ames_prompts}, BACE-Tab.~\ref{tab:molnet_bace_prompts}, BBBP-Tab.~\ref{tab:tdc_bbbp_prompts}, ClinTox-Tab.~\ref{tab:molnet_clintox_prompts}, DILI-Tab.~\ref{tab:tdc_dili_prompts}, hERG-Tab.~\ref{tab:tdc_herg_prompts}, HIA-Tab.~\ref{tab:tdc_hia_prompts}, HIV-Tab.~\ref{tab:molnet_hiv_prompts}, PGP-Tab.~\ref{tab:tdc_pgp_prompts}, PAMPA-Tab.~\ref{tab:tdc_pampa_prompts}). Across these 13 variants, \textbf{EDT-Former achieves the best mean accuracy on 9 out of the 10 tasks}, demonstrating strong robustness to prompt variation. For most datasets, \textbf{EDT-Former also ranks top-1 under the majority of individual prompts}, indicating that its advantage is not tied to a single favorable template. On particularly challenging toxicity tasks such as ClinTox, all LLMs exhibit substantial prompt-to-prompt variation due to class imbalance and occasional refusal-to-answer behavior, yet EDT-Former still maintains consistently strong relative performance across prompts.

\begin{figure}[t]
    \centering
    \includegraphics[width=1\linewidth]{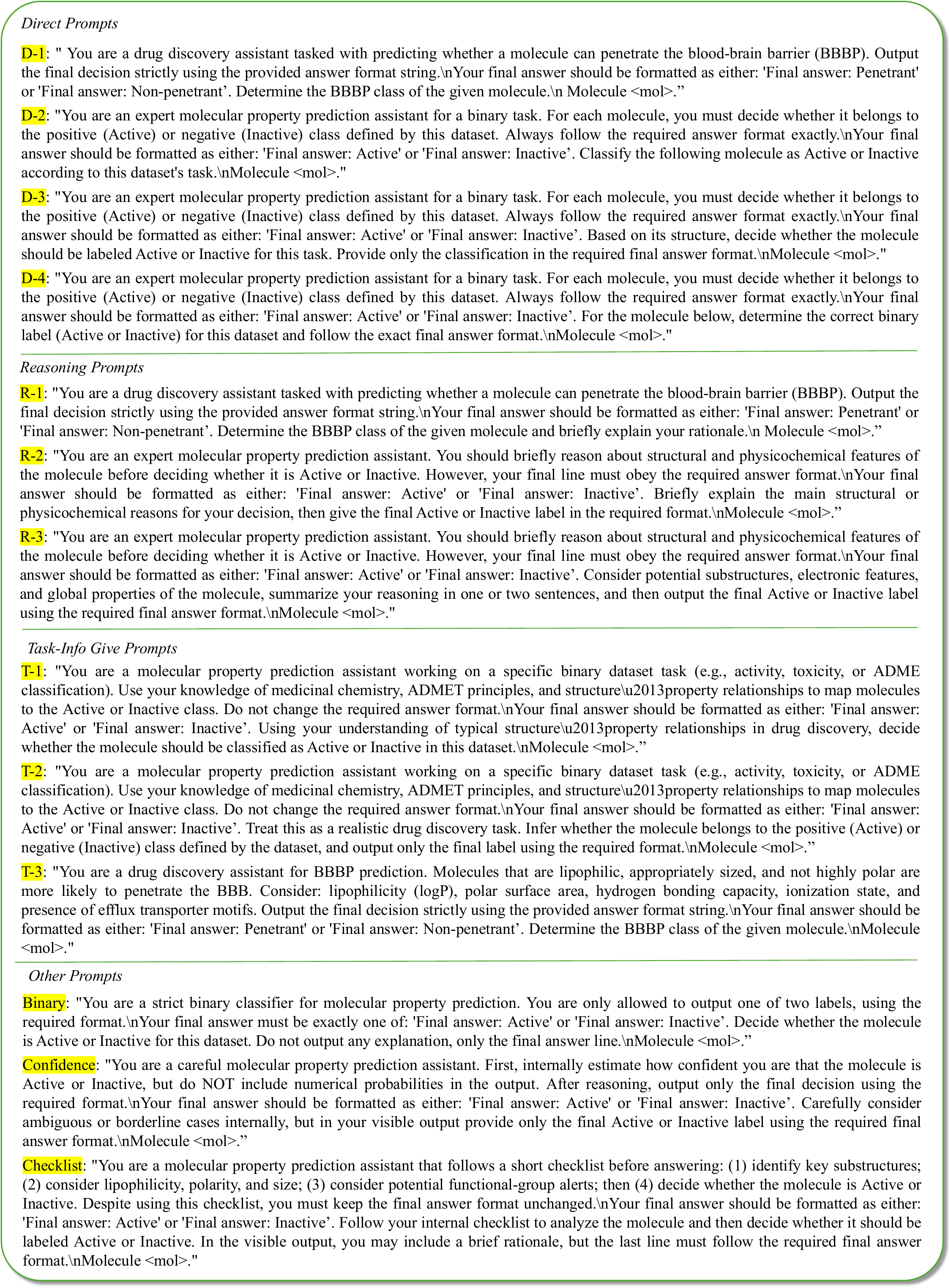}
    \caption{Example (BBBP task) of 13 prompt settings for the property prediction tasks from MoleculeNet and TDC benchmark. 10 Prompts are divided into three main categories–direct, reasoning, and rich-instruction (task-info). Three more prompts are named binary, confidence, and checklist.}
    \label{fig: detailed prompt}
\end{figure}

\begin{table}[t]
\centering
\tiny
\setlength{\aboverulesep}{1pt}
\setlength{\belowrulesep}{1pt}
\color{\tablehighlightcolor}
\caption{Zero-shot performance per prompt on the CLINTOX classification task from the MoleculeNet benchmark. We report accuracy (\%) under 13 prompt settings. The best (pink) and second-best (lightpink) results are highlighted.}
\label{tab:molnet_clintox_prompts}
\vskip -5pt
\resizebox{\linewidth}{!}{
\setlength{\tabcolsep}{4pt}
\begin{tabular}{lcccccccccccccccc}
\toprule
Model & Size & D1 & D2 & D3 & D4 & R1 & R-2 & R-3 & RI-1 & RI-2 & RI-3 & Bi & List & Conf & Avg. \\
\midrule
\rowcolor[HTML]{EFEFEF}
\multicolumn{16}{l}{\textit{General LLMs}} \\
GPT-4o   & -   & \best93.01   & 10.50       & 5.81        & 8.69        & 82.46        & 9.82        & 10.02        & 71.10        & 10.98       & 7.20        & 5.91         & 18.08        & 8.87         & 26.34        \\
Llama2   & 7B  & \second91.03 & 11.72       & 7.59        & 7.59        & 83.45        & 8.97        & 7.59         & \second71.72 & 9.66        & 7.59        & 8.28         & 15.86        & 7.59         & 26.05        \\
Llama3.1 & 8B  & 88.28        & 9.66        & 7.59        & 17.93       & \best91.03   & 8.97        & 8.97         & 15.86        & 14.48       & 19.31       & 8.97         & 10.34        & 9.66         & 23.93        \\
\midrule\rowcolor[HTML]{EFEFEF}
\multicolumn{16}{l}{\textit{Molecular LLMs}} \\
3D-MoLM         & 7B   & 23.89        & 28.95       & 45.36       & \best58.18   & 15.01        & \second10.18 & 13.69        & 18.35        & \second19.44 & 20.54       & \second41.14 & \second20.85 & 21.40        & 25.92        \\
LLaMo           & 7B   & 23.81        & \second31.04 & \best47.08  & 56.79       & 15.56        & 9.98        & 11.84        & 18.84        & 17.04       & 21.34       & 40.02        & 19.52        & 19.30        & 25.55        \\
Mol-Ins.-Llama2 & 8B   & 25.52        & 30.34       & \second46.90 & \second57.93 & 15.17        & 8.97        & \second13.79 & 17.93        & 17.93       & \second23.45 & 40.69        & 17.93        & 21.38        & 25.99        \\
Mol-Ins.-Llama3.1
                & 8B   & 47.25        & 18.37       & 14.42       & 11.58       & 43.88        & 8.97        & 8.64         & 56.70        & 10.53       & 9.90        & \best47.17   & 17.54        & \best53.26   & 26.79        \\
Mol-LLaMA-2     & 7.2B & 55.43        & 28.47       & 25.00       & 48.97       & \second88.43 & 8.28        & 7.64         & 23.21        & 12.95       & 12.68       & 37.24        & 7.59         & 7.64         & \second27.96 \\
Mol-LLaMA3.1    & 8.3B & 88.28        & 9.29        & 9.42        & 7.91        & 86.81        & 10.07       & 9.86         & 14.69        & 8.11        & 13.33       & 17.14        & 9.16         & 15.86        & 23.07        \\
\midrule\rowcolor[HTML]{EFEFEF}
\multicolumn{16}{l}{\textit{Ours}} \\
EDT-Former      & 8.3B & 75.86        & \best48.28  & 30.34       & 52.41       & 80.00        & \best92.41  & \best91.72  & \best77.24  & \best32.41  & \best34.48  & \second22.07 & \best75.86   & \second22.07 & \best56.55   \\
\bottomrule
\end{tabular}
}
\end{table}

\begin{table}[t]
\centering
\tiny
\setlength{\aboverulesep}{1pt}
\setlength{\belowrulesep}{1pt}
\color{\tablehighlightcolor}
\caption{Zero-shot performance per prompt on the DILI classification task from the TDC benchmark. We report accuracy (\%) under 13 prompt settings. The best and second-best are highlighted.}
\label{tab:tdc_dili_prompts}
\vskip -5pt
\resizebox{\linewidth}{!}{
\setlength{\tabcolsep}{4pt}
\begin{tabular}{lcccccccccccccccc}
\toprule
Model & Size & D1 & D2 & D3 & D4 & R1 & R-2 & R-3 & RI-1 & RI-2 & RI-3 & Bi & List & Conf & Avg. \\
\midrule
\rowcolor[HTML]{EFEFEF}
\multicolumn{16}{l}{\textit{General LLMs}} \\
GPT-4o   & -   & \second55.84 & \second54.72 & 51.42        & 51.81        & \second55.72 & 51.36        & \best55.94   & \second51.58 & \best57.40   & 53.17        & \best54.86   & 50.11        & 51.12        & \best53.47   \\
Llama2   & 7B  & 53.68        & 51.58        & 50.53        & 50.53        & 53.68        & 49.47        & 51.58        & \second51.58 & 53.68        & 51.58        & 51.58        & 46.32        & 50.53        & 51.26        \\
Llama3.1 & 8B  & 49.47        & 48.42        & \second54.74 & \second52.63 & 50.53        & \second51.58 & \second52.63 & 50.53        & 50.53        & \best60.00   & 51.58        & \second54.74 & \best52.63   & 52.31        \\
\midrule\rowcolor[HTML]{EFEFEF}
\multicolumn{16}{l}{\textit{Molecular LLMs}} \\
3D-MoLM         & 7B   & 50.52        & 48.00        & 43.26        & 41.16        & 54.85        & 47.28        & 50.94        & 50.19        & \second56.65 & 51.81        & 45.02        & 53.08        & 41.34        & 48.78        \\
LLaMo           & 7B   & 52.01        & 49.65        & 44.66        & 43.38        & 52.80        & 49.43        & 48.55        & 50.93        & 55.71        & 52.12        & 44.16        & 53.09        & 41.49        & 49.08        \\
Mol-Ins.-Llama2 & 8B   & 50.53        & 49.47        & 43.16        & 44.21        & 51.58        & 49.47        & 50.53        & 50.53        & 54.74        & 52.63        & 45.26        & 52.63        & 41.05        & 48.91        \\
Mol-Ins.-Llama3.1
                & 8B   & 47.37        & 53.68        & \best58.95   & \best55.79   & \best55.79   & 49.47        & 50.53        & \best53.68   & 54.74        & 51.58        & 52.63        & 45.26        & \second51.58 & \second52.39 \\
Mol-LLaMA-2     & 7.2B & 50.29        & \best56.84   & 52.13        & 44.68        & 50.53        & \best52.13   & \second51.58 & \second51.58 & 49.43        & \second58.43 & \second52.63 & 53.19        & \second51.58 & 51.92        \\
Mol-LLaMA3.1    & 8.3B & 54.26        & 53.93        & 50.00        & 48.39        & 52.63        & \second51.58 & 50.54        & 50.53        & 52.00        & 55.68        & 53.26        & 51.65        & 48.42        & 51.76        \\
\midrule\rowcolor[HTML]{EFEFEF}
\multicolumn{16}{l}{\textit{Ours}} \\
EDT-Former      & 8.3B & \best60.61   & 52.86        & 52.31        & 50.00        & 50.70        & 40.00        & 35.59        & 51.06        & 49.33        & 45.61        & \second54.00 & \best62.16   & 49.28        & 50.27        \\
\bottomrule
\end{tabular}
}
\end{table}

\begin{table}[t]
\centering
\tiny
\color{\tablehighlightcolor}
\setlength{\aboverulesep}{1pt}
\setlength{\belowrulesep}{1pt}
\caption{Zero-shot performance per prompt on the HERG classification task from the TDC benchmark. We report accuracy (\%) under 13 prompt settings. The best and second-best results are highlighted.}
\vskip -5pt
\label{tab:tdc_herg_prompts}
\resizebox{\linewidth}{!}{
\setlength{\tabcolsep}{4pt}
\begin{tabular}{lcccccccccccccccc}
\toprule
Model & Size & D1 & D2 & D3 & D4 & R1 & R-2 & R-3 & RI-1 & RI-2 & RI-3 & Bi & List & Conf & Avg. \\
\midrule
\rowcolor[HTML]{EFEFEF}
\multicolumn{16}{l}{\textit{General LLMs}} \\
GPT-4o   & -   & 24.97        & 73.65        & 73.43        & 73.39        & 31.10        & 72.50        & 74.54        & 55.75        & 69.47        & 71.17        & 70.63        & 68.57        & 73.39        & 64.04        \\
Llama2   & 7B  & 24.03        & 74.02        & \second74.38 & \second74.42 & 33.33        & 72.87        & 74.42        & 54.23        & 72.87        & \best74.42   & 74.42        & 67.44        & 74.42        & 65.02        \\
Llama3.1 & 8B  & 25.58        & 72.87        & \best74.42   & 64.34        & 25.58        & \best75.97   & \second75.19 & 56.59        & 69.77        & 70.54        & \best75.19   & \second72.87 & \best76.74   & 64.28        \\
\midrule
\rowcolor[HTML]{EFEFEF}
\multicolumn{16}{l}{\textit{Molecular LLMs}} \\
3D-MoLM         & 7B   & \best72.24   & 73.64        & 60.09        & 50.78        & 59.89        & 73.74        & 74.66        & 25.68        & 63.58        & 62.84        & 54.07        & 71.48        & 71.96        & 62.67        \\
LLaMo           & 7B   & 71.15        & 72.91        & 60.78        & 51.80        & 58.25        & 72.55        & \best76.12   & 27.67        & 64.45        & 61.75        & 55.71        & 71.01        & 71.91        & 62.77        \\
Mol-Ins.-Llama2 & 8B   & \second72.09 & 74.42        & 58.91        & 52.71        & 56.59        & \second74.42 & 73.64        & 28.68        & 65.89        & 62.02        & 56.59        & 69.77        & 70.54        & 62.79        \\
Mol-Ins.-Llama3.1
                & 8B   & 28.68        & 48.84        & 65.89        & 48.06        & 28.68        & 26.36        & 27.13        & 25.58        & 62.02        & 68.99        & 65.89        & 51.94        & 65.89        & 47.23        \\
Mol-LLaMA-2     & 7.2B & 65.60        & 67.44        & 68.99        & 44.53        & \second73.23 & \second74.42 & 73.64        & 27.13        & \second73.81 & 61.24        & 61.24        & \best74.42   & \second74.60 & 65.60        \\
Mol-LLaMA3.1    & 8.3B & 67.69        & \best76.92   & 69.41        & 71.23        & \best73.68   & 72.53        & 73.33        & \second74.17 & 73.63        & 71.72        & 62.75        & 72.82        & 58.11        & \second70.61 \\
\midrule
\rowcolor[HTML]{EFEFEF}
\multicolumn{16}{l}{\textit{Ours}} \\
EDT-Former      & 8.3B & 65.60        & \second75.19 & \best74.42   & \best74.92   & 72.58        & 73.23        & 75.00        & \best74.42   & \best74.18   & \second74.39 & \second74.47 & 72.22        & 74.42        & \best73.46   \\
\bottomrule
\end{tabular}
}
\end{table}

\begin{table}[t]
\centering
\tiny
\color{\tablehighlightcolor}
\setlength{\aboverulesep}{1pt}
\setlength{\belowrulesep}{1pt}
\setlength{\tabcolsep}{4pt}
\vskip -5pt
\caption{Per prompt zero-shot accuracy (\%) on the HIA classification task from the TDC benchmark, under 13 prompt settings. The best (pink) and second-best (lightpink) results are highlighted.}
\label{tab:tdc_hia_prompts}
\vskip -5pt
\resizebox{\linewidth}{!}{
\begin{tabular}{lcccccccccccccccc}
\toprule
Model & Size & D1 & D2 & D3 & D4 & R1 & R-2 & R-3 & RI-1 & RI-2 & RI-3 & Bi & List & Conf & Avg. \\
\midrule
\rowcolor[HTML]{EFEFEF}
\multicolumn{16}{l}{\textit{General LLMs}} \\
GPT-4o   & -   & 66.99        & 80.02        & 82.60        & \best86.58   & 61.89        & \second84.24 & \best85.61   & 43.74        & 78.15        & \second82.82 & \best86.48   & 73.27        & 75.65        & 76.00        \\
Llama2   & 7B  & 64.91        & 80.00        & \best84.35   & \second84.35 & 63.48        & \best84.35   & \second84.35 & 42.98        & 80.00        & \best84.35   & 84.35        & 75.65        & \second84.35 & 76.66        \\
Llama3.1 & 8B  & 69.57        & \second80.87 & \best84.35   & 70.43        & 69.30        & 82.30        & 80.70        & \second66.09 & 80.87        & 80.00        & \second85.22 & \best84.35   & 80.87        & 78.07        \\
\midrule
\rowcolor[HTML]{EFEFEF}
\multicolumn{16}{l}{\textit{Molecular LLMs}} \\
3D-MoLM         & 7B   & 77.54        & 80.34        & 63.34        & 44.94        & 73.35        & 83.18        & 83.04        & 62.59        & 65.57        & 76.07        & 74.75        & 81.68        & 81.66        & 72.93        \\
LLaMo           & 7B   & 78.99        & 79.10        & 63.13        & 44.68        & 75.29        & 82.03        & 81.49        & 63.66        & 67.14        & 73.84        & 73.36        & \second83.29 & 79.73        & 72.75        \\
Mol-Ins.-Llama2 & 8B   & 78.26        & 79.13        & 62.61        & 46.96        & 73.91        & 81.74        & \second83.48 & 64.35        & 66.09        & 73.04        & 73.04        & 82.61        & 80.87        & 72.78        \\
Mol-Ins.-Llama3.1
                & 8B   & 45.22        & 46.09        & 63.48        & 41.74        & 51.30        & 15.65        & 15.65        & 30.43        & 77.39        & 75.65        & 77.19        & 40.87        & 81.74        & 50.95        \\
Mol-LLaMA-2     & 7.2B & 48.45        & 68.70        & 65.79        & 33.91        & 62.64        & 84.07        & 82.61        & 58.29        & \second81.82 & 72.81        & 54.78        & \best84.35   & \second84.21 & 67.88        \\
Mol-LLaMA3.1    & 8.3B & \best83.33   & 80.52        & \second84.29 & 78.46        & \second78.00 & 82.09        & 78.79        & 60.78        & \second85.19 & 77.42        & 62.50        & 70.97        & 44.12        & \second78.25 \\
\midrule
\rowcolor[HTML]{EFEFEF}
\multicolumn{16}{l}{\textit{Ours}} \\
EDT-Former      & 8.3B & \second81.65 & \best82.24   & 81.31        & 75.70        & \best84.21   & \best84.35   & \second84.35 & \best85.56   & \best86.60   & 77.45        & 80.53        & 78.50        & \best85.48   & \best82.15   \\
\bottomrule
\end{tabular}
}
\end{table}

\begin{table}[t]
\centering
\tiny
\color{\tablehighlightcolor}
\setlength{\aboverulesep}{1pt}
\setlength{\belowrulesep}{1pt}
\setlength{\tabcolsep}{4pt}
\vskip -5pt
\caption{Zero-shot accuracy (\%) under 13 prompt settings is reported on the HIV classification task from the MoleculeNet benchmark. The best and second-best results are highlighted.}
\label{tab:molnet_hiv_prompts}
\vskip -5pt
\resizebox{\linewidth}{!}{
\begin{tabular}{lcccccccccccccccc}
\toprule
Model & Size & D1 & D2 & D3 & D4 & R1 & R-2 & R-3 & RI-1 & RI-2 & RI-3 & Bi & List & Conf & Avg. \\
\midrule
\rowcolor[HTML]{EFEFEF}
\multicolumn{16}{l}{\textit{General LLMs}} \\
GPT-4o   & -   & 13.69        & 9.20         & 3.10         & 5.61         & 6.27         & 6.35         & 8.49          & \second16.23 & \second24.24 & 25.53        & 23.50        & \second28.81 & 50.18        & 17.02        \\
Llama2   & 7B  & 11.85        & 8.13         & 3.60         & 3.75         & 4.06         & 4.90         & 4.63          & 14.72        & 21.22        & 23.31        & 22.13        & 25.22        & 50.14        & 15.20        \\
Llama3.1 & 8B  & 11.88        & 7.15         & 4.04         & 3.80         & 10.36        & 10.36        & \second10.36  & 14.40        & 21.45        & 23.91        & 21.05        & 24.47        & 49.13        & 16.34        \\
\midrule
\rowcolor[HTML]{EFEFEF}
\multicolumn{16}{l}{\textit{Molecular LLMs}} \\
3D-MoLM         & 7B   & 8.54         & \best19.36   & \second41.64 & \best60.24   & 7.98         & 11.96        & 8.09          & 13.42        & \best26.96   & \best28.67   & \second39.62 & 14.98        & 58.57        & \second26.16 \\
LLaMo           & 7B   & 7.28         & 16.39        & \best42.14   & \second60.21 & 6.26         & 8.55         & 8.83          & 13.17        & 23.08        & \second26.55 & 36.27        & 12.11        & 56.29        & 24.39        \\
Mol-Ins.-Llama2 & 8B   & 3.92         & 15.33        & 38.30        & 59.60        & 3.84         & 5.56         & 6.73          & 12.76        & 23.78        & 23.31        & 34.40        & 11.98        & 57.09        & 22.82        \\
Mol-Ins.-Llama3.1
                & 8B   & 10.84        & 14.64        & 12.46        & 11.67        & 7.93         & 9.11         & 9.62          & 5.42         & 9.69         & 12.86        & 37.54        & 15.12        & 60.23        & 16.70        \\
Mol-LLaMA-2     & 7.2B & \second21.78 & \second23.23 & 27.41        & 52.20        & \second17.21 & 4.33         & 4.45          & 5.42         & 11.73        & 12.42        & 36.31        & 17.39        & 58.75        & 22.51        \\
Mol-LLaMA3.1    & 8.3B & 6.54         & 7.91         & 5.33         & 31.89        & 5.40         & \second15.63 & 8.12          & 14.83        & 23.84        & 23.48        & \second43.41 & 25.87        & \best72.28   & 21.89        \\
\midrule
\rowcolor[HTML]{EFEFEF}
\multicolumn{16}{l}{\textit{Ours}} \\
EDT-Former      & 8.3B & \best56.15   & \best50.98   & 27.06        & 49.53        & \best57.32   & \best76.20   & \best76.03    & \best27.64   & 15.16        & 22.65        & \best49.49   & \best31.35   & \second65.70 & \best46.56   \\
\bottomrule
\end{tabular}
}
\end{table}

\begin{table}[t]
\centering
\tiny
\color{\tablehighlightcolor}
\setlength{\aboverulesep}{1pt}
\setlength{\belowrulesep}{1pt}
\setlength{\tabcolsep}{4pt}
\vskip -5pt
\caption{Zero-shot accuracy (\%) under 13 prompt settings is reported on the PGP classification task from the TDC benchmark. The best and second-best results are highlighted.}
\label{tab:tdc_pgp_prompts}
\vskip -5pt
\resizebox{\linewidth}{!}{
\begin{tabular}{lcccccccccccccccc}
\toprule
Model & Size & D1 & D2 & D3 & D4 & R1 & R-2 & R-3 & RI-1 & RI-2 & RI-3 & Bi & List & Conf & Avg. \\
\midrule
\rowcolor[HTML]{EFEFEF}
\multicolumn{16}{l}{\textit{General LLMs}} \\
GPT-4o   & -   & 49.85        & 48.67        & 51.49        & 51.71        & 51.05        & 49.53        & 53.48        & 43.42        & 52.44        & 49.16        & 49.59        & 51.54        & 50.79        & 50.21        \\
Llama2   & 7B  & 51.45        & 49.38        & 51.04        & 51.04        & 52.28        & 50.21        & 51.04        & 51.32        & 51.45        & 51.04        & 51.45        & 51.45        & 51.04        & 51.09        \\
Llama3.1 & 8B  & 48.71        & 51.87        & 52.28        & \second53.11 & 43.98        & 51.87        & 51.45        & 50.26        & 51.04        & 51.87        & 50.62        & 51.45        & \second54.36 & 50.99        \\
\midrule
\rowcolor[HTML]{EFEFEF}
\multicolumn{16}{l}{\textit{Molecular LLMs}} \\
3D-MoLM         & 7B   & 49.09        & 52.54        & 49.13        & 49.82        & \second54.62 & \second53.88 & \second56.35 & 48.62        & \best55.80   & 49.79        & 45.21        & 53.83        & 51.47        & 51.55        \\
LLaMo           & 7B   & 48.78        & 52.66        & 49.29        & 51.63        & 53.40        & 52.20        & 54.09        & 50.65        & \second55.44 & 47.47        & 47.33        & 51.89        & 51.00        & 51.22        \\
Mol-Ins.-Llama2 & 8B   & 51.04        & 50.21        & 48.13        & 49.38        & 51.45        & 51.87        & 52.70        & \second52.28 & 53.11        & 48.96        & 46.06        & 52.70        & 51.45        & 50.72        \\
Mol-Ins.-Llama3.1
                & 8B   & \best57.26   & 51.87        & 51.87        & \best53.94   & \best57.26   & 48.96        & 50.21        & 51.45        & 51.87        & 45.64        & \best58.92   & \second55.19 & 52.70        & \second52.70 \\
Mol-LLaMA-2     & 7.2B & 54.70        & \best55.04   & 50.42        & 44.17        & 54.21        & 51.45        & 51.04        & 41.47        & 53.95        & \second52.30 & \second53.53 & 51.25        & 51.25        & 51.14        \\
Mol-LLaMA3.1    & 8.3B & 49.58        & 51.09        & \second53.07 & 50.00        & 52.70        & 50.00        & 51.49        & 51.67        & 52.28        & 49.06        & 46.84        & 51.97        & 49.79        & 50.73        \\
\midrule
\rowcolor[HTML]{EFEFEF}
\multicolumn{16}{l}{\textit{Ours}} \\
EDT-Former      & 8.3B & \second55.56 & \second54.71 & \best54.72   & 49.66        & 53.42        & \best53.98   & \best59.31   & \best56.65   & 51.55        & \best57.14   & 45.05        & \best59.17   & \best59.44   & \best54.64   \\
\bottomrule
\end{tabular}
}
\end{table}

\begin{table}[t]
\centering
\tiny
\color{\tablehighlightcolor}
\setlength{\aboverulesep}{1pt}
\setlength{\belowrulesep}{1pt}
\setlength{\tabcolsep}{4pt}
\vskip -5pt
\caption{Zero-shot accuracy (\%) under 13 prompt settings is reported on the PAMPA classification task from the TDC benchmark. The best and second-best results are highlighted.}
\label{tab:tdc_pampa_prompts}
\vskip -5pt
\resizebox{\linewidth}{!}{
\begin{tabular}{lcccccccccccccccc}
\toprule
Model & Size & D1 & D2 & D3 & D4 & R1 & R-2 & R-3 & RI-1 & RI-2 & RI-3 & Bi & List & Conf & Avg. \\
\midrule
\rowcolor[HTML]{EFEFEF}
\multicolumn{16}{l}{\textit{General LLMs}} \\
GPT-4o   & -   & 48.65        & 49.60        & 49.39        & 49.23        & 58.23        & 58.51        & 56.39        & 47.17        & 49.06        & 46.88        & 45.95        & 48.11        & 49.57        & 50.52        \\
Llama2   & 7B  & 57.14        & 58.39        & 59.68        & 61.33        & 57.53        & 57.76        & 55.46        & \best84.52   & \second79.62 & \second79.62 & \best86.92   & \second81.83 & \second80.67 & 69.27        \\
Llama3.1 & 8B  & 56.51        & 58.92        & 56.60        & 55.43        & 46.19        & 45.82        & 43.92        & 63.64        & 62.29        & 64.50        & 66.07        & 67.22        & 66.84        & 58.00        \\
\midrule
\rowcolor[HTML]{EFEFEF}
\multicolumn{16}{l}{\textit{Molecular LLMs}} \\
3D-MoLM         & 7B   & 46.93        & 44.64        & 43.67        & 45.54        & 50.00        & 48.32        & 49.40        & 64.86        & 62.37        & 62.00        & 62.91        & 64.85        & 62.96        & 54.50        \\
LLaMo           & 7B   & 49.25        & 47.81        & 48.47        & 48.34        & 64.37        & 65.98        & 66.88        & 48.51        & 49.38        & 49.23        & 48.63        & 50.68        & 52.59        & 53.09        \\
Mol-Ins.-Llama2 & 8B   & 49.63        & 49.94        & 50.09        & 47.74        & 31.16        & 33.60        & 32.35        & 38.18        & 40.63        & 39.38        & 39.21        & 39.98        & 38.22        & 40.78        \\
Mol-Ins.-Llama3.1
                & 8B   & 55.91        & 54.47        & 51.99        & 49.62        & 33.50        & 34.46        & 32.01        & 70.47        & 70.33        & 72.67        & 75.11        & 72.86        & 72.69        & 57.39        \\
Mol-LLaMA-2     & 7.2B & \second75.68 & \second77.85 & \second80.20 & \second79.80 & \second79.61 & \second81.17 & \second79.75 & 67.90        & 69.14        & 67.54        & 68.30        & 69.36        & 68.67        & \second74.23 \\
Mol-LLaMA3.1    & 8.3B & 63.55        & 62.87        & 61.47        & 59.59        & 64.37        & 64.54        & 65.13        & 72.48        & 71.03        & 73.29        & 72.43        & 70.16        & 72.00        & 67.15        \\
\midrule
\rowcolor[HTML]{EFEFEF}
\multicolumn{16}{l}{\textit{Ours}} \\
EDT-Former      & 8.3B & \best81.57   & \best79.43   & \best80.57   & \best80.55   & \best81.57   & \best84.05   & \best85.37   & \second83.78 & \best82.26   & \best81.32   & \second81.95 & \best83.36   & \best84.65   & \best82.34   \\
\bottomrule
\end{tabular}
}
\end{table}

\begin{table}[t]
\centering
\scriptsize
\color{\tablehighlightcolor}
\setlength{\aboverulesep}{1pt}
\setlength{\belowrulesep}{1pt}
\caption{Results on the description-guided molecule design task from the Mol-Instructions benchmark. Models are finetuned and evaluated on the official splits. The best (pink) and second-best (lightpink) results are highlighted.}
\label{tab: mol_design}
\resizebox{\linewidth}{!}{
\setlength{\tabcolsep}{4pt}
\begin{tabular}{lccccccc}
\toprule
Model & Exact$\uparrow$ & BLEU$\uparrow$ & Levenshtein$\downarrow$ & RDK FTS$\uparrow$ & MACC FTS$\uparrow$ & Morgan FTS$\uparrow$ & Validity$\uparrow$ \\
\midrule
Alpaca             & 0.000        & 0.004        & 51.088        & 0.006        & 0.029        & 0.000        & 0.002        \\
Baize              & 0.000        & 0.006        & 53.796        & 0.000        & 0.000        & 0.000        & 0.002        \\
ChatGLM            & 0.000        & 0.004        & 53.157        & 0.005        & 0.000        & 0.000        & 0.005        \\
LLaMa              & 0.000        & 0.003        & 59.864        & 0.005        & 0.000        & 0.000        & 0.003        \\
Vicuna             & 0.000        & 0.006        & 60.356        & 0.006        & 0.001        & 0.000        & 0.001        \\
Galactica          & 0.000        & 0.192        & 44.152        & 0.135        & 0.238        & 0.088        & \second0.992  \\
Text+Chem T5       & \second0.097 & 0.508        & 41.819        & 0.352        & 0.474        & \best0.353   & 0.721        \\
MolT5              & \best0.112   & 0.546        & 38.276        & \second0.400 & \second0.538 & \second0.295 & 0.773        \\
Mol-Ins.-Llama2    & 0.002        & 0.345        & 41.367        & 0.231        & 0.412        & 0.147        & \best1.000    \\
Mol-Ins.-Llama3.1  & 0.025        & 0.521        & 38.742        & 0.358        & 0.520        & 0.221        & \best1.000    \\
Mol-LLama          & 0.012        & \second0.638 & \second18.917 & 0.392        & 0.534        & 0.220        & 0.876        \\
EDT-Former         & 0.016        & \best0.652   & \best16.826   & \best0.401   & \best0.560   & 0.251        & \best1.000    \\
\bottomrule
\end{tabular}
}
\end{table}

\begin{table}[t]
\centering
\small
\color{\tablehighlightcolor}
\setlength{\aboverulesep}{1pt}
\setlength{\belowrulesep}{1pt}
\caption{Results on the Open Question task from the Mol-Instructions benchmark. Models are finetuned and evaluated on the official splits. 3 metrics defined by the benchmark are reported. The best (pink) and second-best (lightpink) results are highlighted.}
\label{tab:molinst_open_qa}
\setlength{\tabcolsep}{8pt}
\begin{tabular}{lccc}
\toprule
Model & BLEU$\uparrow$ & ROUGE-1$\uparrow$ & BertScore$\uparrow$ \\
\midrule
Alpaca            & 0.003        & 0.088        & 0.824        \\
Baize             & 0.005        & 0.100        & 0.811        \\
ChatGLM           & 0.003        & 0.090        & 0.795        \\
LLaMa             & 0.003        & 0.100        & 0.814        \\
Vicuna            & 0.004        & 0.097        & 0.814        \\
Galactica         & 0.000        & 0.039        & 0.794        \\
PMC\_LLaMA        & 0.007        & \best0.788   & 0.625        \\
Mol-Ins.-Llama2   & \second0.024 & 0.221        & 0.837        \\
Mol-Ins.-Llama3.1 & 0.010        & 0.198        & \second0.846 \\
Mol-LLama         & \second0.024 & 0.134        & 0.812        \\
EDT-Former        & \best0.101   & \second0.286 & \best0.857   \\
\bottomrule
\end{tabular}
\end{table}

\subsection{Evaluation on Molecule Design and Open Questions from Mol-Instructions}
\label{app: molecule design}
\paragraph{Description-guided Molecule Design.}
The description-guided molecule design task evaluates a model’s ability to generate valid molecular structures that follow natural-language specifications. Following the official Mol-Instructions protocol, we strictly adopt the official splits and compare against all official baseline models under the unified benchmark setting.. All models are assessed using the official evaluation metrics: Exact Match, BLEU, Levenshtein distance, fingerprint-based similarity scores (RDK FTS, MACCS FTS, Morgan FTS), and chemical validity.

As shown in Table~\ref{tab: mol_design}, EDT-Former achieves the strongest performance on most metrics of the description-guided molecule design benchmark. In particular, it attains the best BLEU score, the lowest Levenshtein distance, and the highest RDK and MACCS fingerprint similarity scores, while also reaching perfect chemical validity (100\% valid generations). Other strong molecular baselines such as MolT5 and Mol-Instructions–Llama models occasionally win on individual metrics (e.g., exact match), but EDT-Former consistently ranks first or second across all metrics, indicating a better global trade-off between fidelity to the textual description and chemical realism. As a result, EDT-Former is not only effective on discriminative property prediction and QA, but also competitive with specialized molecular generators on a challenging, standardized molecule-design task.

\paragraph{Open Question and Answer.}
We further evaluate EDT-Former on the \textbf{Open Question} task from Mol-Instructions, which measures free-form question answering over molecular contexts. Following the official benchmark protocol, all models are finetuned on the provided training split and evaluated on the held-out test set using BLEU, ROUGE-1, and BERTScore (Table~\ref{tab:molinst_open_qa}). General LLMs such as Alpaca, Baize, ChatGLM, LLaMA, Vicuna, and Galactica achieve only modest scores, while molecularly finetuned baselines (PMC\_LLaMA, Mol-Ins.-Llama2/3.1, Mol-LLama) provide stronger but still uneven performance across metrics. In contrast, \textbf{EDT-Former attains the best BLEU and BERTScore and competitive ROUGE-1}, outperforming all Mol-Instructions baselines on overall natural-language quality. These results show that our connector-based alignment not only preserves but can even enhance the underlying LLM’s open-ended language capability on molecular QA.

\subsection{Natural Language Ability Analysis}
\label{app: NLP analysis}

To assess whether alignment training affects the underlying natural language ability of the LLM backbone, we follow Mol-Instructions and evaluate the \textbf{Open Question} task in a zero-shot setting. This task requires free-form natural-language generation rather than molecular reasoning, making it a sensitive probe for degradation caused by excessive domain-specific fine-tuning. We compare performance before and after downstream alignment training on molecular captioning and property prediction.

Mol-LLaMA~\citep{kim2025} and EDT-Former are trained under similar optimization settings (batch size 256, learning rate \(1\times 10^{-4}\), Adam optimizer, weight decay 0.05), but with a critical difference in training regime: Mol-LLaMA uses a much heavier schedule (20 epochs), whereas EDT-Former performs only 5 epochs. This contrast directly reflects each model’s philosophy, which is that Mol-LLaMA relies on extensive end-to-end fine-tuning, while EDT-Former is intentionally effective and preserves the natural language ability.

Tab.~\ref{tab: nlp-analysis} reports the results. After fine-tuning, \textbf{Mol-LLaMA suffers a substantial drop in natural-language quality}, with BLEU decreasing by 33.33\%, ROUGE-1 by 31.37\%, and BERTScore by 4.64\%. This confirms that heavy domain-specific training can erode the model’s general language competence. In stark contrast, \textbf{EDT-Former shows no degradation across all metrics}: BLEU, ROUGE-1, and BERTScore remain unchanged before and after alignment training. In conclusion, EDT-Former preserves full natural-language fluency while still achieving strong molecular reasoning and prediction performance.

\begin{table}[t]
\centering
\small
\color{\tablehighlightcolor}
\setlength{\aboverulesep}{1pt}
\setlength{\belowrulesep}{1pt}
\setlength{\tabcolsep}{8pt}
\vskip -5pt
\caption{Performance on Open Question task from Mol-Instructions before and after finetuning using their own finetuning settings on molecular captioning and property tasks. Mol-Llama (Llama3-based) and EDT-Former are reported.}
\label{tab: nlp-analysis}
\vskip -5pt
\begin{tabular}{lccc}
\toprule
Model & BLEU$\uparrow$ & ROUGE-1$\uparrow$ & BertScore$\uparrow$ \\
\midrule
Mol-Llama-Before     & 0.024   & 0.134   & 0.812   \\
Mol-Llama-After      & 0.018   & 0.102   & 0.776   \\
Average Drop         & 33.33\% & 31.37\% & 4.64\%  \\
\midrule
EDT-Former-Before    & 0.024   & 0.137   & 0.819   \\
EDT-Former-After     & 0.024   & 0.137   & 0.819   \\
Average Drop         & 0.00\%  & 0.00\%  & 0.00\%  \\
\bottomrule
\end{tabular}

\end{table}

\subsection{Data Contamination Analysis}
\label{app: experiments C}
{\color{black}
\paragraph{13-gram overlap (contamination) analysis.}
Following the GPT-3 practice~\citep{brown2020gpt3}, we measure data leakage by character-level $n$-gram overlap with $n\!=\!13$. For a text string $x$, let $\mathcal{G}_{13}(x)$ be the multiset of all contiguous 13-grams. For a test item $t$ and a training document $d$, the overlap score is
\begin{equation}
\mathrm{overlap}(t,d)\;=\;\frac{\bigl|\mathcal{G}_{13}(t)\cap \mathcal{G}_{13}(d)\bigr|}{\bigl|\mathcal{G}_{13}(t)\bigr|},
\end{equation}
and the contamination score for $t$ is $\max_{d\in \mathcal{D}_{\text{train}}}\mathrm{overlap}(t,d)$. We report the fraction of test items whose contamination score exceeds a small threshold (e.g., any non-zero match or a preset $\tau$). Applying this procedure to our benchmarks, the test–train 13-gram overlap rate is below $5\%$. Given that we fine-tune only the connector (the LLM remains frozen) and for fewer than two epochs, memorization of instance-specific labels is unlikely; the observed overlap is within common LLM practice and does not affect our conclusions.
% - Report each overlap for benchmark data using GPT's 13-gram strategy
% - Analyze the overlap rate and influence, risk of LLM pretrain data, but the evaluation has ensured

}

\begin{table}[t]
\centering
\small
\color{\tablehighlightcolor}
\setlength{\aboverulesep}{1pt}
\setlength{\belowrulesep}{1pt}
\caption{Dataset overlap statistics on molecular property prediction datasets from MoleculeNet and TDC benchmarks, comparing with the training data from Mol-Llama-Instruct.}
\label{tab: data overlap property}
\setlength{\tabcolsep}{8pt}
\begin{tabular}{lcccc}
\toprule
Dataset & Original & Removed & Cleaned & Overlap Rate \\
\midrule
AMES    & 1454 & 305 & 1149 & 20.98\% \\
BACE    & 152  & 1   & 151  & 0.66\%  \\
BBBP    & 406  & 112 & 294  & 27.59\% \\
CLINTOX & 145  & 32  & 113  & 22.07\% \\
DILI    & 95   & 34  & 61   & 35.79\% \\
HIV     & 4084 & 88  & 3996 & 2.15\%  \\
HERG    & 129  & 14  & 115  & 10.85\% \\
HIA     & 115  & 22  & 93   & 19.13\% \\
PAMPA   & 407  & 35  & 372  & 8.60\%  \\
PGP     & 241  & 40  & 201  & 16.60\% \\
\bottomrule
\end{tabular}%

\end{table}

\begin{table}[t]
\centering
\small
\color{\tablehighlightcolor}
\setlength{\aboverulesep}{1pt}
\setlength{\belowrulesep}{1pt}
\caption{Data overlap statistics between training data and five benchmark datasets from Mol-Instructions.}
\label{tab: data overlap mol-instructions}
\setlength{\tabcolsep}{8pt}
\begin{tabular}{lc}
\toprule
Dataset & Overlap Rate \\
\midrule
Forward Reaction Prediction & 0.21\% \\
Reagent Prediction          & 0.69\% \\
Retrosynthesis              & 1.20\% \\
Property Prediction         & 0.36\% \\
Molecular Captioning        & 1.93\% \\
\bottomrule
\end{tabular}
\end{table}

\begin{table}[t]
\centering
\color{\tablehighlightcolor}
\setlength{\aboverulesep}{1pt}
\setlength{\belowrulesep}{1pt}
\caption{Performance comparison between the original and clean set of 10 molecular property prediction tasks from TDC and MoleculeNet. Average accuracy (\%) of 13 prompt settings is reported.}
\label{tab:molprop_clean}
\setlength{\tabcolsep}{2pt}
\resizebox{\linewidth}{!}{
\begin{tabular}{lcccccccccccc}
\toprule
        & BBBP  & BACE  & CLINTOX & HIV   & PAMPA & DILI  & HERG  & HIA   & AMES  & PGP   & Average & Avg. Drop \\
\midrule
Original & 72.48 & 49.12 & 56.55   & 46.56 & 82.34 & 50.27 & 73.46 & 82.15 & 55.20 & 54.64 & 66.34   & Baseline \\
Clean Set & 73.17 & 51.59 & 53.48   & 46.66 & 82.93 & 50.08 & 71.64 & 82.41 & 56.17 & 57.19 & 66.74   & +0.591\% \\
\bottomrule
\end{tabular}
}
\end{table}

\paragraph{Overlap on Molecular Property Prediction Tasks.}
Beyond character-level 13-gram statistics, we further examine molecule-level overlap between the Mol-LLaMA-Instruct training set and our molecular property benchmarks. Using canonical SMILES to identify duplicates, we compute, for each dataset, how many test molecules also appear in the training corpus. As summarized in Table~\ref{tab: data overlap property} (Mol-Instructions datasets ain Table~\ref{tab: data overlap mol-instructions}), several MoleculeNet/TDC datasets exhibit non-trivial overlap (e.g., DILI and BBBP with 35.79\% and 27.59\% shared molecules, respectively; AMES, CLINTOX, HIA, PAMPA, and PGP in the 10–22\% range), while others, such as BACE and HIV have relatively low overlap.

To directly test whether this overlap affects our conclusions, we construct clean test sets by removing all molecules whose canonical SMILES appear in Mol-LLaMA-Instruct and re-evaluate EDT-Former on these pruned benchmarks. Table~\ref{tab:molprop_clean} reports the comparison between the original and clean evaluations. The per-dataset accuracies differ by at most a few percentage points, and the overall average even slightly \textbf{increases} from 66.34 to 66.74 (a change of +0.59\%). The ranking across tasks and our main observations remain unchanged. This robustness is consistent with our training protocol: the connector is trained on Mol-LLaMA-Instruct for only two epochs while the LLM backbone (excluding the embedding layer) is frozen, so the model is encouraged to learn general structure–property patterns rather than memorize individual molecules or labels.

\paragraph{Overlap on Mol-Instructions benchmarks.}
We also measure molecule-level overlap between Mol-LLaMA-Instruct and the five Mol-Instructions benchmarks used in our generative experiments. As shown in Table~\ref{tab: data overlap mol-instructions}, these overlap rates are all very small. Given this already low level of duplication, and the negligible effect of much higher overlap on the MoleculeNet/TDC property tasks, we do not re-run additional “clean-set’’ evaluations for Mol-Instructions. Taken together with the 13-gram analysis, these molecule-level results indicate that potential data contamination is limited and does not materially impact the reported performance of EDT-Former.

\begin{table}[t]
\centering
\tiny
\color{\tablehighlightcolor}
\setlength{\aboverulesep}{1pt}
\setlength{\belowrulesep}{1pt}
\caption{Accuracy (\%) on 4 tasks from MoleculeQA under official vs.\ scaffold splits.}
\label{tab:moleculeqa_scaffold_split}
\setlength{\tabcolsep}{8pt}
\resizebox{\linewidth}{!}{
\begin{tabular}{lcccccc}
\toprule
Split          & Structure & Source & Property & Application & Total & $\Delta$Acc \\
\midrule
Official Split & 74.55     & 72.39  & 50.71    & 48.58       & 68.34 & 0          \\
Scaffold Split & 76.28     & 72.18  & 50.87    & 42.85       & 68.65 & $+$0.30\%     \\
\bottomrule
\end{tabular}
}
\end{table}

\paragraph{Scaffold splits on MoleculeQA benchmark.}
To further rule out potential scaffold-level leakage and to test generalization to novel chemotypes, we additionally construct a scaffold split of MoleculeQA using Bemis–Murcko scaffolds, ensuring that no scaffold appears in both training and test sets. We keep the overall data size and task composition identical to the official split and re-train/evaluate EDT-Former under this more challenging setting. As shown in Table~\ref{tab:moleculeqa_scaffold_split}, the per-task accuracies on Structure, Source, Property, and Application remain highly similar, and the overall score even slightly improves (+0.30\% total accuracy) compared to the official split. This indicates that EDT-Former generalizes well to unseen molecular scaffolds and that our MoleculeQA conclusions are not driven by scaffold overlap or memorization effects.

\subsection{Rationale of Entropy Patching}
\label{app: entropy rationale}

\begin{table}[t]
\centering
\tiny
\color{\tablehighlightcolor}
\setlength{\aboverulesep}{1pt}
\setlength{\belowrulesep}{1pt}
\caption{Average pairwise normalized mutual information (NMI) between fragmentation methods on the molecules from Mol-Llama-Instruct training set, reported as mean and median.}
\label{tab:nmi_frag_methods}

\resizebox{\linewidth}{!}{
\begin{tabular}{lcccccccc}
\toprule
& \multicolumn{4}{c}{NMI (Mean)} & \multicolumn{4}{c}{NMI (Median)} \\
\cmidrule(lr){2-5}\cmidrule(lr){6-9}
Method & Entropy & BRICS & RECAP & Random & Entropy & BRICS & RECAP & Random \\
\midrule
Entropy & 1.000 & 0.484 & 0.401 & 0.177 & 1.000 & 0.547 & 0.505 & 0.176 \\
BRICS   & 0.484 & 1.000 & 0.498 & 0.306 & 0.547 & 1.000 & 0.564 & 0.241 \\
RECAP   & 0.401 & 0.498 & 1.000 & 0.498 & 0.505 & 0.564 & 1.000 & 0.564 \\
Random  & 0.177 & 0.306 & 0.498 & 1.000 & 0.176 & 0.241 & 0.564 & 1.000 \\
\bottomrule
\end{tabular}
}
\end{table}

Our entropy-guided patching is driven by a lightweight Next-Atom Predictor (NAP), implemented as a small Transformer that models SMILES as a sequence and outputs a distribution over the next token. For a SMILES string $x = (x_1,\dots,x_T)$, the NAP defines
\[
p_\theta(x_t \mid x_{<t}), \qquad
H_t \;=\; - \sum_{v} p_\theta(v \mid x_{<t}) \log p_\theta(v \mid x_{<t}),
\]
where $H_t$ is the token-level entropy (surprisal). We then place patch boundaries near local entropy peaks and group tokens between peaks into variable-length segments. The design goal is \textbf{not} to exactly recover chemically defined fragments, but to identify \textbf{information-dense regions that are hard for a Transformer to predict}. These regions are expected to carry rich structural context and thus are valuable for multimodal alignment with the LLM.

\paragraph{Chemical awareness of entropy-based patches.}
To verify that entropy-guided segmentation is still chemically meaningful, we compare it with two rule-based fragmentation schemes (BRICS~\citep{jinsong2024brics} and RECAP~\citep{lewell1998recap}) as well as a random baseline (random shuffle of the split results from BRICS). For each molecule, we treat the fragment labels from method $A$ and method $B$ as two clusterings over atoms, $z^{(A)}$ and $z^{(B)}$, and compute the \emph{normalized mutual information} (NMI):
\[
\mathrm{MI}(z^{(A)},z^{(B)}) \;=\; \sum_{i,j} p_{ij} \log \frac{p_{ij}}{p_i p_j},\qquad
\mathrm{NMI}(z^{(A)},z^{(B)}) \;=\; \frac{\mathrm{MI}(z^{(A)},z^{(B)})}{\sqrt{H(z^{(A)}) H(z^{(B)})}},
\]
where $p_{ij}$ is the joint frequency of atoms assigned to fragment $i$ by $A$ and fragment $j$ by $B$, and $H(\cdot)$ denotes the entropy of a clustering. Tab.~\ref{tab:nmi_frag_methods} reports the pairwise NMI (mean and median across Mol-LLaMA-Instruct molecules). Entropy-based patching shows \textbf{substantial agreement} with both BRICS and RECAP (e.g., mean NMI around $0.4$–$0.5$), comparable to the BRICS–RECAP agreement, while all three methods are much less aligned with the random segmentation. This indicates that entropy peaks preferentially align with chemically meaningful breakpoints, even though they are derived purely from sequence prediction difficulty. In other words, our entropy-based patches are \textbf{chemistry-aware enough} to capture functional regions, yet remain tailored to what a Transformer-style sequence model finds informative.

\paragraph{Robustness across NAP model sizes.}
We also examine whether the entropy landscape is stable across different NAP capacities. Using the same NMI formulation, we compare patchings produced by three NAP variants (0.5M, 50M, and 500M parameters) and a random baseline. As shown in Table~\ref{tab:nap_nmi_matrix}, the pairwise NMI between the three NAPs is very high (e.g., $\mathrm{NMI}\!>\!0.85$), while each has much lower agreement with random segmentation. This demonstrates that \textbf{relative entropy patterns are largely invariant to NAP size}: a small NAP already identifies essentially the same entropy peaks and patches as much larger models. Consequently, we adopt the smallest variant in EDT-Former, which is computationally efficient yet yields stable, chemically aligned segments. This justifies our design choice of using a lightweight NAP to drive entropy-guided patching for multimodal alignment.

\begin{table}[t]
\centering
\small
\color{\tablehighlightcolor}
\setlength{\aboverulesep}{1pt}
\setlength{\belowrulesep}{1pt}
\vskip -5pt
\caption{Normalized Mutual Information (NMI) matrix of different sizes of the NAP model.}
\label{tab:nap_nmi_matrix}
\vskip -5pt
\setlength{\tabcolsep}{8pt}
\begin{tabular}{lcccc}
\toprule
NAP variant & 0.5M & 50M & 500M & Random \\
\midrule
0.5M   & 1    & 0.94 & 0.86 & 0.12 \\
50M    & 0.94 & 1    & 0.92 & 0.13 \\
500M   & 0.86 & 0.92 & 1    & 0.17 \\
Random & 0.12 & 0.86 & 0.17 & 1    \\
\bottomrule
\end{tabular}

\end{table}

\endgroup

\subsection{Reproducibility}
\label{app: experiments D}

Our work is fully open source under the MIT license. Unlike many prior projects, we release end-to-end resources to reproduce every result in the paper: complete training and evaluation code (including all ablations), scripts for baselines and benchmarks, processed pretraining and finetuning datasets, and model weights. The repository is actively maintained and will serve as the foundation for subsequent extensions. All materials are available via the anonymous GitHub link: \url{https://anonymous.4open.science/r/EDT-Former-844D}.

% - statement of what we provided

\subsection{Dataset Imbalance Analysis}
\label{app: experiments E}
\label{app: imbalance}

To account for potential class imbalance in the Pampa dataset, we additionally report F1 scores alongside accuracy (shown in Tab.~\ref{tab: pampa with f1}). As expected, \modelname achieves the best performance across both metrics under all prompting strategies, confirming that its gains are not driven by skewed label distributions but reflect genuine improvements in molecular property prediction.

\begin{table}[t]
\centering
\caption{Extended evaluation results on Pampa dataset. Accuracy and F1 scores are reported for each model across three prompt settings. Top and second-best results are highlighted.}
\label{tab: pampa with f1}
\setlength{\tabcolsep}{10pt}
\vskip -5pt
\resizebox{\textwidth}{!}{
\begin{tabular}{lcccccccc}
\toprule
\multirow{2}{*}{Models} & \multicolumn{2}{c}{Direct} & \multicolumn{2}{c}{Reasoning} & \multicolumn{2}{c}{RichInst.} & \multicolumn{2}{c}{Average} \\
\cmidrule(lr){2-3} \cmidrule(lr){4-5} \cmidrule(lr){6-7} \cmidrule(lr){8-9}
 & Acc & F1 & Acc & F1 & Acc & F1 & Acc & F1 \\
\midrule
GPT-4o & 48.65 & 58.78 & 58.23 & 70.41 & 47.17 & 56.73 & 51.35 & 61.97 \\
Mol-Inst.-Llama2-7B & 49.63 & 63.19 & 31.16 & 39.84 & 38.18 & 46.61 & 39.66 & 49.88 \\
Mol-LLaMA2-7B & \second 75.68 & \second 85.96 & \second 79.61 & \second 88.91 & 67.90 & 79.41 & \second74.40 & \second 84.76 \\
Llama3.1-8B & 56.51 & 68.87 & 46.19 & 57.22 & 63.64 & 75.88 & 55.45 & 67.32 \\
Mol-Inst.-Llama3.1-8B & 55.91 & 69.84 & 33.50 & 39.84 & 70.47 & 81.46 & 53.29 & 63.71 \\
3D-MoLM-8B & 46.93 & 57.81 & 50.00 & 62.43 & 64.86 & 76.45 & 53.93 & 65.56 \\
LLaMo-8B & 49.25 & 61.78 & 64.37 & 77.11 & 48.51 & 60.66 & 54.04 & 66.52 \\
Mol-LLaMA3.1-8.2B & 63.55 & 75.32 & 64.37 & 76.72 & \second 72.48 & \second 83.51 & 66.80 & 78.52 \\
\textbf{\modelname-8.3B} & \best81.57 & \best89.92 & \best81.57 & \best89.74 & \best83.78 & \best91.15 & \best82.31 & \best90.27 \\
\bottomrule
\end{tabular}

}
\end{table}

\section{Extended Ablations}
\label{app: ablations}

Ablation studies on model size, graph encoders, query token length, computational efficiency, and training hyperparameters are provided in this section. These analyses highlight how each factor influences performance and efficiency, underscoring the robustness of our approach.

\subsection{Impact of Query Token Length}
\label{app: ablations A}
As shown in Tab.~\ref{tab:anchor_dynamic}, increasing the number of anchor tokens improves accuracy, indicating that a larger anchor set provides a higher-bandwidth, more stable interface for aggregating global structural cues before conditioning the LLM. By contrast, expanding the maximum dynamic length from 64 to 128 yields a negligible change (e.g., 66.75 to 66.81 at 8 anchors) and even a slight dip at 16 anchors. This is expected: with a cap of 64, entropy-guided segmentation already covers the substructure count of most molecules, so raising the limit rarely increases the realized number of dynamic tokens and can introduce marginal redundancy. Overall, the best trade-off is achieved with 16 anchors and a max dynamic length of 64.

\subsection{Impact of Model Size}
\label{app: ablations B}
As shown in Tab.~\ref{tab:model_size}, Qwen3~\citep{yang2025qwen3} backbones below 2B parameters perform near random guessing on MoleculeQA, indicating insufficient capacity. Accuracy improves markedly once scale reaches 4B, with further gains at 8B and 14B, albeit with diminishing returns from 8B ot 14B. We use the Qwen3 family because it offers a broad spread of sizes for a controlled comparison, and we did not evaluate larger backbones due to resource constraints.

\subsection{Analysis of Hyperparameter}
\label{app: ablations C}
\begin{table}[t]
\centering
\begin{minipage}{0.48\linewidth}
\centering
\caption{Effect of anchor and dynamic token length. Different settings are evaluated on the MolcueleQA dataset, with total accuracy reported for each ablation model. The top and second-best results are highlighted.}
\label{tab:anchor_dynamic}
\resizebox{\linewidth}{!}{
\begin{tabular}{ccc}
\toprule
Anchor Length & Max Dyn. Length & Accuracy \\
\midrule
4  & 64  & 64.29 \\
8  & 64  & 66.75 \\
8  & 128 & 66.81 \\
\best16 & \best64  & \best68.34 \\
\second 16 & \second 128 & \second 68.03 \\
\bottomrule
\end{tabular}
}
\end{minipage}
\hfill
\begin{minipage}{0.48\linewidth}
\centering
\small
\setlength{\tabcolsep}{15pt}
\caption{Performance across different model sizes of backbone choices from Qwen3. Models are evaluated on the MoleculeQA benchmark with total accuracy reported. The top and second-best are highlighted.}
\label{tab:model_size}
\begin{tabular}{lc}
\toprule
Model Size & Accuracy \\
\midrule
0.6B  & 27.23 \\
1.7B  & 26.52 \\
4B    & 36.82 \\
\second 8B    & \second 54.79 \\
\best14B   & \best56.84 \\
\bottomrule
\end{tabular}
\end{minipage}
\end{table}

As summarized in Tab.~\ref{tab:hyperparam_ablation}, varying the initial learning rate, global batch size, and training epochs produces only minor fluctuations in accuracy. This stability indicates that performance gains primarily arise from our multimodal fusion design and entropy-guided structural cues rather than aggressive hyperparameter tuning.

% \subsection{Detailed Computing Efficiency Analysis}
% TODO here

\subsection{Impact of Graph Encoders}
\label{app: ablations D}
We evaluated stronger molecular encoders (e.g., Uni-Mol-v2~\citep{ji2024unimol2}) under matched settings and observed only minor accuracy gains relative to our base encoder. This suggests that, beyond a reasonable encoder quality threshold, the primary bottleneck for molecular understanding lies not in the encoder itself but in the multimodal fusion interface to the LLM. In our framework, the entropy-guided substructure tokens and dynamic query–anchor fusion contribute the dominant improvements by exposing chemically salient evidence to the LLM; upgrading the encoder yields diminishing returns compared to better fusion.

\begin{table}[t]
\centering
\caption{Ablation on finetuning hyperparameters on MoleculeQA benchmark. Accuracy is reported for different initial learning rates, global batch sizes, and training epochs.}
\label{tab:hyperparam_ablation}
\begin{tabular}{cc|cc|cc}
\toprule
Init LR & Accuracy. & Global Batchsize & Accuracy & Training Epochs & Accuracy \\
\midrule
$5.0 \times 10^{-4}$ & 67.92 & 32  & 67.97 & 2 & 65.23 \\
$1.0 \times 10^{-4}$ & 68.34 & 64  & 68.08 & 4 & 67.92 \\
$5.0 \times 10^{-5}$ & 67.82 & 128 & 68.34 & 6 & 68.34 \\
$1.0 \times 10^{-6}$ & 67.09 & 256 & 68.11 & 8 & 68.13 \\
\bottomrule
\end{tabular}
\end{table}

\begingroup
% \color{blue}  % everything inside becomes blue
% \captionsetup{labelfont={color=blue}, textfont={color=blue}}
\subsection{Impact of Different Global/Local Token Budgets}
\label{app: token budget}

We first study how the connector’s token budget should be allocated between global (anchor) tokens and local entropy-based dynamic tokens. Under a fixed total budget of 64 tokens, we train three variants: (i) \textbf{Dynamic-only} (0 anchors / 64 dynamic), (ii) \textbf{Anchor-only} (64 anchors / 0 dynamic), and (iii) \textbf{Hybrid} (32 anchors / 32 dynamic). We then evaluate each variant on five molecular property tasks (BACE, BBBP, ClinTox, HIA, PAMPA) and report accuracies averaged over six shared prompts per task. As shown in Table~\ref{tab:token_budget_ablation}, the \textbf{hybrid configuration consistently achieves the best average performance}, outperforming both dynamic-only and anchor-only designs. Among the single-source variants, the dynamic-only model clearly surpasses the anchor-only one, indicating that entropy-guided local patches are more informative than global learned queries when used in isolation. Overall, these results support our design choice: combining a small set of global anchors with content-adaptive dynamic tokens yields the strongest and most robust connector.

\begin{table}[t]
\centering
\tiny
\color{\tablehighlightcolor}
\setlength{\aboverulesep}{1pt}
\setlength{\belowrulesep}{1pt}
\setlength{\tabcolsep}{8pt}
\vskip -5pt
\caption{Different token budget settings ablation on 5 molecular property tasks.}
\label{tab:token_budget_ablation}
\vskip -5pt
\resizebox{\linewidth}{!}{
\begin{tabular}{cccccccc}
\toprule
Anchor & Dynamic & BACE & BBBP & CLINTOX & HIA & PAMPA & Avg. \\
\midrule
0  & 64 & \second42.15 & \best74.26   & \second44.25 & \second70.73 & \second61.05 & \second58.49 \\
64 & 0  & 41.23        & 71.58        & 42.50        & 69.00        & 56.41        & 56.14        \\
32 & 32 & \best45.55   & 70.28        & \best47.69   & \best78.87   & \best64.45   & \best61.37   \\
\bottomrule
\end{tabular}
}
\end{table}

\subsection{Impact of Training of LLM Embedding Layer}
\label{app: ablation embedding layer}
Our default setting follows common multimodal practice by freezing all transformer blocks of the LLM while allowing the token embedding layer to update. This is motivated by the need to assign meaningful representations to newly introduced molecule-related tokens (symbol tokens, special markers, etc.). To disentangle the effect of this design, we compare three configurations on MoleculeQA: (i) \textbf{All LLM parameters frozen}, (ii) \textbf{Unfrozen embedding layer only} (our default), and (iii) \textbf{Full LLM finetuning}. Table~\ref{tab:embedding layer} summarizes the results. As expected, fully finetuning the LLM yields the highest overall score, but at the cost of a much heavier and less modular training procedure that risks degrading general language ability. More interestingly, we observe that the performance gap between ``all froze'' and ``embedding-only'' training is very small: unfreezing the embedding layer brings only marginal gains, while EDT-Former with a completely frozen LLM remains strong. This indicates that the \textbf{connector itself is doing most of the alignment work}: it can effectively adapt molecular representations into the existing token space of the backbone without relying heavily on embedding updates. In practice, we keep the embedding layer trainable for flexibility, but these results show that even stricter freezing is viable.

\begin{table}[t]
\centering
\tiny
\color{\tablehighlightcolor}
\setlength{\aboverulesep}{1pt}
\setlength{\belowrulesep}{1pt}
\setlength{\tabcolsep}{8pt}
\vskip -5pt
\caption{Ablation of LLM finetuning strategies on four tasks from the MoleculeQA benchmark and the overall accuracy (\%).}
\label{tab:embedding layer}
\vskip -5pt
\resizebox{\linewidth}{!}{
\begin{tabular}{lccccc}
\toprule
Setting & Structure & Source & Property & Application & Total \\
\midrule
All LLM Parameters Frozen & 74.55 & 72.39 & 50.71 & 48.58        & 68.34        \\
Unfrozen Embedding Layer  & 74.46        & 72.19        & 50.30        & 48.61 & 68.44 \\
Full LLM Finetuning       & 76.25   & 75.48   & 52.71   & 51.25   & 70.50   \\
\bottomrule
\end{tabular}
}
\end{table}

\subsection{Impact of Different Training Corpus}
\begin{table}[t]
\centering
\tiny
\color{\tablehighlightcolor}
\setlength{\aboverulesep}{1pt}
\setlength{\belowrulesep}{1pt}
\setlength{\tabcolsep}{6pt}
\vskip -5pt
\caption{Ablation over training corpora on four tasks from MoleculeQA and the overall accuracy.}
\label{tab:corpus_ablation}
\vskip -5pt
\resizebox{\linewidth}{!}{
\begin{tabular}{lcccccc}
\toprule
Training Corpus      & Dataset Size & Structure & Source & Property & Application & Total \\
\midrule
Mol-Llama-Instruct & 312M        & 74.55     & 72.39  & 50.71    & 48.58       & 68.34 \\
PubChemQA          & 325M        & 73.61     & 74.81  & 50.39    & 48.61       & 68.36 \\
3D-MoIT            & 646M        & 72.27     & 76.85  & 52.26    & 49.94       & 68.48 \\
\bottomrule
\end{tabular}
}
\end{table}

\label{app: ablation training corpus}
Finally, we investigate whether EDT-Former’s performance is tied to a specific instruction corpus. Using the same training recipe, we align the connector with three alternative datasets: \textbf{Mol-LLaMA-Instruct}, \textbf{PubChemQA}, and \textbf{3D-MoIT}, whose sizes range from $\sim$300M to 650M instruction–molecule pairs. We then evaluate the resulting models on the four MoleculeQA tasks. As shown in Table~\ref{tab:corpus_ablation}, all three corpora lead to very similar scores, with only minor fluctuations in per-task accuracy and total average. Notably, the larger 3D-MoIT corpus provides only modest improvements over the smaller alternatives. This suggests that EDT-Former primarily requires a \textbf{reasonable amount of heterogeneous molecule–text data} to learn a good alignment; the exact choice among existing high-quality corpora is not the main driver of its advantage. In other words, our gains do not rely on a specially curated training set, but stem from the entropy-guided connector architecture itself.

\subsection{Inference-time ablation of dynamic tokens and anchors}
\label{app: ablation inference time}

\begin{table}[t]
\centering
\small
\color{\tablehighlightcolor}
\setlength{\aboverulesep}{1pt}
\setlength{\belowrulesep}{1pt}
\setlength{\tabcolsep}{8pt}
\caption{Inference-time ablations of dynamic tokens and anchors. Accuracy (\%) from 13 prompts per task is reported.}
\label{tab:inference_ablation_tokens_anchors}
\begin{tabular}{lcc}
\toprule
Variant               & Pampa $\uparrow$ & BBBP $\uparrow$ \\
\midrule
\textbf{Full EDT-Former}   & \textbf{82.34} & \textbf{72.48} \\
No dynamic tokens           & 54.97          & 51.29          \\
Random dynamic tokens       & 64.25          & 54.71          \\
No anchors                  & 79.26          & 66.38          \\
Random anchors              & 80.10          & 66.74          \\
\bottomrule
\end{tabular}
\end{table}

To verify that both dynamic tokens and anchors are functionally used at test time, we perform inference-only ablations on a trained EDT-Former without modifying its training procedure. Specifically, we compare the full model (anchors + dynamic tokens) with four variants: (i) No dynamic tokens, where all dynamic tokens are removed and only anchors are kept; (ii) Random dynamic tokens, where dynamic tokens are replaced by random vectors; (iii) No anchors, where only dynamic tokens are used; and (iv) Random anchors, where anchors are replaced by random vectors. As shown in Table~\ref{tab:inference_ablation_tokens_anchors}, removing or randomizing either component leads to clear accuracy drops on PAMPA and BBBP, with the largest degradation observed when discarding dynamic tokens entirely. These results confirm that both anchors and dynamic tokens contribute non-trivially at inference time, and that neither component is redundant.

\endgroup

\begingroup
\color{\tablehighlightcolor}
% \captionsetup{labelfont={color=blue}, textfont={color=blue}}
\subsection{Hallucination Analysis}
\label{app: hallucination}

\begin{table}[t]
\centering
\small
\color{\tablehighlightcolor}
\setlength{\aboverulesep}{1pt}
\setlength{\belowrulesep}{1pt}
\caption{Functional-group hallucination on 200 PubChem molecules (lower is better). Non-Rate: percentage of non-meaningful outputs; Hallucination Rate: percentage of molecules where at least one non-existing functional group is mentioned.}
\label{tab:hallucination}
\setlength{\tabcolsep}{8pt}
\begin{tabular}{lcc}
\toprule
Model              & Non-Rate $\downarrow$ & Hallucination Rate $\downarrow$ \\
\midrule
GPT-4o             & 0.0                   & 41.5                             \\
Llama2             & 0.0                   & 50.5                             \\
Llama3.1           & 0.0                   & 45.0                             \\
3D-MoLM            & 34.0                  & 81.0                             \\
LLaMo              & 27.0                  & 63.5                             \\
Mol-Inst.-Llama2   & 14.5                  & 57.0                             \\
Mol-Inst.-Llama3.1 & 14.0                  & 54.5                             \\
Mol-LLaMA-2        & 7.5                   & 36.5                             \\
Mol-LLaMA-3.1      & 7.0                   & 39.5                             \\
HIGHT              & 22.5                  & 36.0                             \\
\textbf{EDT-Former} & \textbf{3.5}         & \textbf{19.5}                    \\
\bottomrule
\end{tabular}

\end{table}

\paragraph{Hallucination evaluation dataset.}
To investigate the hallucination of our model, we construct a targeted benchmark for functional-group hallucination. Concretely, we randomly sample 200 molecules from PubChem and obtain their human-readable label texts. For each molecule, we use the GPT-5 API to summarize the label into a canonical list of functional group names, and cross-check these groups using RDKit-based~\citep{landrum2013rdkit} functional group decomposition with manual verification. This yields, for each molecule, a vetted set of functional groups that we treat as the ground-truth reference for hallucination analysis.

\paragraph{Evaluation protocol.}
For every model, we prompt it to generate a natural language description of the functional groups present in the given molecule. We then compare the model’s output with the ground-truth functional groups by using the GPT-5 API to judge whether the model has hallucinated any functional group that does not exist in the molecule. We report two metrics: (i) \textbf{Hallucination Rate}, the percentage of molecules where at least one non-existing functional group is mentioned; and (ii) \textbf{Non-Rate}, the percentage of molecules where the model fails to produce a meaningful, readable answer (e.g., broken text or uninterpretable tokens). Explicit abstentions such as ``I don't know'' are not counted as hallucinations and are excluded from Non-Rate, as they are acceptable behavior for safety-critical applications.

\paragraph{Results.}
The quantitative results are summarized in Table~\ref{tab:hallucination}. EDT-Former achieves the lowest hallucination rate among all methods, roughly halving hallucinations compared to strong molecular LLM baselines (e.g., Mol-LLaMA and HIGHT), while also maintaining a low Non-Rate that indicates it usually answers in coherent natural language rather than failing silently. These results provide direct quantitative evidence that reducing structural information loss via entropy-guided dynamic tokens and graph–sequence alignment not only improves downstream metrics, but also substantially mitigates functional group hallucination, directly addressing our motivation around ``loss of structure.''

\endgroup

\section{Limitations and Future Work}
\label{app: limitations and future work}

\subsection{Limitations}
\label{app: limitations}
While \modelname\ demonstrates broad applicability across molecular tasks, we note several current constraints that highlight opportunities for improvement. As a generalist molecular LLM, it does not yet consistently match highly specialized classifiers that are trained and tuned per dataset--a pattern commonly observed in the Mol-LLM literature. In this study, we also did not scale to larger backbones due to practical compute limits. In addition, we did not incorporate synthetic “reasoning” corpora, as widely used resources are predominantly GPT-generated and lack reliable human-verified rationales, which could confound conclusions. Finally, the present system answers end-to-end without external tools, which can increase the chance of occasional hallucinations or arithmetic/chemistry slips. We emphasize that these constraints are scope choices rather than fundamental barriers and frame clear directions for progress.

\subsection{Future Work}
\label{app: future-work}
Building on these findings, we plan to close the generalist–specialist gap via lightweight task adapters and, where feasible, moderate scaling of both backbone and bridge with efficient finetuning. We will curate and release a human-verified molecular reasoning benchmark to replace purely synthetic rationales, and integrate tool-augmented agents (e.g., RDKit calculators~\citep{landrum2013rdkit}, literature retrieval, unit/constraint checks) with self-verification to reduce hallucinations and enforce domain validity. Finally, we will broaden evaluation to include robustness under distribution shift, calibration, and abstention, and cost/latency reporting, to make \modelname\ both reliable and practical in real-world molecular workflows.

\section{LLM Usage}
\label{app: llm usage}

Large language models (LLMs) were used only for light editorial assistance (grammar or wording polish and minor LaTeX phrasing). They did not contribute to research ideation, experimental design, data collection, analysis, or result writing. Separately, LLMs appear in our experiments solely as baselines/backbones within the method; all runs, configurations, and analyses were conducted and verified by the authors. The authors take full responsibility for the paper’s contents.

\end{document}